\newtheorem{prop}{Proposition}
\newtheorem*{theorem*}{Theorem}
\newtheorem{lemma}{\hspace{0.0cm}Lemma}
\DeclareMathOperator*{\argmin}{argmin} 
\title{Unbiased scalable softmax optimization}
\author{Francois Fagan, Garud Iyengar\\
 {\small Department of Industrial Engineering and Operations Research}\\
{\small Columbia University}}
\begin{document}
\date{}
\maketitle


\begin{abstract}
Recent neural network and language models rely on softmax distributions with an extremely large number of categories. Since calculating the softmax normalizing constant in this context is prohibitively expensive, there is a growing literature of efficiently computable but \emph{biased} estimates of the softmax. In this paper we propose the first \emph{unbiased} algorithms for maximizing the softmax likelihood whose work per iteration is independent of the number of classes and datapoints (and no extra work is required at the end of each epoch). We show that our proposed unbiased methods comprehensively outperform the state-of-the-art on seven real world datasets.
\end{abstract}


\section{Introduction}
Under the softmax  model\footnote{Also known as the multinomial logit model.} the probability that a random variable $y$  takes on the label $\ell \in \{1,..., K\}$, is given~by  
\begin{equation}\label{eq:softmax} 
p(y=\ell|x;W) = \frac{e^{x^\top w_\ell}}{\sum_{k=1}^Ke^{x^\top w_k}},
\end{equation}
where $x\in\mathbb{R}^D$ is the covariate, $w_k\in\mathbb{R}^D$ is the
vector of parameters for the $k$-th class, and $W=[w_1, w_2,
...,w_K]\in\mathbb{R}^{D\times K}$ is the parameter matrix. Given a
dataset of $N$ label-covariate pairs $\mathcal{D} = \{(y_i,
x_i)\}_{i=1}^N$, the ridge-regularized maximum log-likelihood problem~is
given by 
\begin{align}\label{eq:original_log_likelihood}
L(W) &= \sum_{i=1}^N x_i^\top w_{y_i} -\log(\sum_{k=1}^Ke^{x_i^\top w_k}) -\frac{\mu}{2}\|W\|_2^2,
\end{align}
where $\|W\|_2$ denotes the Frobenius norm.


The softmax is a fundamental and ubiquitous distribution, with 
applications in fields such as economics and
biomedicine~\citep{rust1993customer,kirkwood2010essential,gopal2013distributed}
and appears as a convex surrogate for the (hard) maximum loss in discrete
optimization~\citep{maddison2016concrete} and network
flows~\citep{shahrokhi1990maximum}. 
This paper focusses on how to maximize~(\ref{eq:original_log_likelihood})
when $N$, $K$, $D$ are all large.   
Large values for $N$,$K$,$D$ are increasingly common in modern applications such as
natural language processing and recommendation systems, where $N$, $K$, and
$D$ can each be on the order of millions or billions
\citep{chelba2013one,partalas2015lshtc}.


A natural approach to maximizing $L(W)$ with large values for $N$, $K$ 
and $D$ is to use Stochastic Gradient Descent (SGD), sampling a mini-batch
of datapoints each iteration. However when $K$ and $D$ are large, the $O(KD)$
cost of calculating the normalizing sum $\sum_{k=1}^Ke^{x_i^\top w_k}$ in
the stochastic gradients can be prohibitively expensive.  
Several approximations that avoid calculating the normalizing sum have been
proposed to address this difficulty. These include tree-structured methods
\citep{bengio2003quick,
daume2016logarithmic,grave2016efficient,jernite2016simultaneous},
sampling methods
\citep{bengio2008adaptive,mnih2012fast,ji2015blackout,
joshi2017aggressive}
and self-normalization \citep{andreas2015and}. Alternative models 
such as the  spherical family of losses
\citep{de2015exploration,vincent2015efficient} that do not require
normalization have been proposed to sidestep the issue entirely
\citep{martins2016softmax}. \citet{krishnapuram2005sparse} avoid
calculating the sum using a maximization-majorization approach based on
lower-bounding the eigenvalues of the Hessian matrix.  
All\footnote{The method of \citet{krishnapuram2005sparse} \emph{does} converge to
  the optimal MLE, but has $O(ND)$ runtime per iteration which is not
  feasible for large~$N$ and $D$.} of these approximations are computationally
tractable for large $N$, $K$ and $D$, but are unsatisfactory in that they are
\emph{biased} and do not converge to the optimal $W^\ast = \text{argmax}
\,L(W)$. 


Recently\footnote{This same idea has appeared multiple times in the literature. For example \citep{ruiz2018augment} use a similar idea for variational inference of the softmax.} \citet{raman2016ds} showed how to recast
(\ref{eq:original_log_likelihood}) as a double-sum over $N$ and $K$. This
formulation is amenable to SGD that samples only one datapoint and
class in each iteration, reducing the per iteration cost to $O(D)$. However, vanilla SGD applied to this formulation is unstable in that the stochastic gradients may have high variance and a high dynamic range leading to computational overflow errors. \citet{raman2016ds} deal with this instability by occasionally
calculating the normalizing sum for all datapoints at a cost of
$O(NKD)$. Although this achieves stability, its high cost nullifies the
benefit of the cheap $O(D)$ per iteration cost. 



In this paper we propose two robust \emph{unbiased} SGD algorithms for optimizing double-sum formulations of the softmax likelihood. 
The first is an implementation of Implicit SGD, a stochastic
gradient method that is known to be more stable than vanilla SGD, and yet
has similar convergence properties~\citep{toulis2016towards}. We show
that the Implicit SGD updates for the double-sum formulation can be
efficiently computed using a bisection method with tight initial
bounds. Furthermore, we guarantee the stability of Implicit SGD by proving
that the step size is asymptotically linearly bounded (unlike vanilla SGD
which is exponentially bounded). 
The second algorithm is a new SGD method called U-max, that is
guaranteed to have bounded gradients and converges to the optimal solution
of~(\ref{eq:original_log_likelihood}) for all sufficiently small learning
rates. This method is particularly suited to situations where calculating simultaneous inner products is cheap (for example when using GPUs).

We compare the performance of U-max and Implicit SGD to the (biased)
state-of-the-art methods for maximizing the softmax likelihood which cost
$O(D)$ per iteration. Both U-max and Implicit SGD outperform all other
methods.  
Implicit SGD has the best performance with an average log-loss 4.44 times
lower than the previous state-of-the-art biased methods.  


In summary, our contributions in this paper are that we:
\begin{enumerate}
\item 
  Develop an alternative
  softmax double-sum formulation with 
  gradients of smaller magnitude as compared to that in \citet{raman2016ds}
  (Section~\ref{sec:double_sum}).  

\item Derive an efficient implementation of Implicit SGD using a bisection
  method, analyze its runtime and bound its step size
  (Section~\ref{sec:I_SGD}).  

\item Propose the U-max algorithm to stabilize the vanilla SGD updates and
  prove its convergence (Section~\ref{sec:u_max}).  

\item Conduct experiments showing that both U-max and Implicit SGD
  outperform the previous state-of-the-art, with Implicit SGD having the
  best performance (Section~\ref{sec:experiments}). 

\end{enumerate}

\section{Convex double-sum formulation}\label{sec:double_sum}
\subsection{Derivation of double-sum}\label{sec:derivation_double_sum}
In order to have an SGD method that samples both datapoints and classes
each iteration, we need to represent~(\ref{eq:original_log_likelihood}) as
a double-sum over datapoints and classes.  
We begin by rewriting~(\ref{eq:original_log_likelihood}) in a more convenient form,
\begin{align}
L(W) &= \sum_{i=1}^N  -\log(1 + \sum_{k\neq y_i}e^{x_i^\top (w_k -
       w_{y_i})}) -\frac{\mu}{2}\|W\|_2^2. \label{eq:OVE_log_likelihood} 
\end{align}
The key to converting~(\ref{eq:OVE_log_likelihood}) into its double-sum
representation is to express the negative logarithm using its
convex conjugate\footnote{This trick is related to the bounds given in \citep{gopal2013distributed}.}: 
\begin{align}
- \log (a) &=  \max_{v< 0}\{av - (-\log(-v)-1)\}\nonumber\\
&=  \max_{u}\{-u-\exp(-u)a+1\} \label{eq:trick}
\end{align}
where $u = -\log(-v)$ and the optimal value of $u$ is $u^\ast(a) =
\log(a)$. Applying~(\ref{eq:trick}) to each of the logarithmic terms in
(\ref{eq:OVE_log_likelihood}) yields $L(W) = -\min_{u\geq 0}\left\{f(u,W) \right\} + N$ where
\begin{equation}\label{eq:f}
f(u,\! W)\!=\!\sum_{i=1}^N\!\sum_{k\neq y_i}\!\frac{u_i +
  e^{-u_i}}{K-1}+e^{x_i^\top (w_k - w_{y_i})-u_i}\!+ \frac{\mu}{2}\|W\|_2^2 
\end{equation}
is our double-sum representation that we seek to minimize. 
Clearly $f$ is a jointly convex function in $u$ and~$W$. The variable $u_i$ can be thought of as an approximation to the log-normalizer, as its optimal solution is
$u_i^\ast(W) = \log(1+\sum_{k\neq y_i}e^{x_i^\top (w_k - w_{y_i})})\geq0$. In
Appendix~\ref{app:f_properties} we prove that the optimal $u$ and
$W$ are contained in a compact convex set and that $f$ is strongly convex
within this set. Thus performing projected-SGD on $f$ is
guaranteed to converge to a unique optimum with a convergence rate of
$O(1/T)$ where $T$ is the number of iterations
\citep{lacoste2012simpler}. 

\subsection{Instability of vanilla SGD}\label{sec:numerical_instability}
The challenge in optimizing $f$ using SGD is that the gradients can have 
very large magnitudes.    
Observe that ${f = \mathbb{E}_{ik}[f_{ik}]}$ where $i\sim \text{unif}(\{1,...,N\})$, $k\sim
\text{unif}(\{1,...,K\} - \{y_i\})$ and  
\begin{equation}
f_{ik}(u,W) = N\left( u_i + e^{-u_i}+(K-1)e^{x_i^\top (w_k -
  w_{y_i})-u_i}\right) + \frac{\mu}{2}(\beta_{y_i} \|w_{y_i}\|_2^2+
  \beta_k \|w_k\|_2^2), \label{eq:stochastic_f}
\end{equation}
where $\beta_j = \frac{N}{n_j+(N-n_j)/(K-1)}$ is the inverse of the
probability of class $j$ being sampled either through $i$ or $k$, and $n_j
= |\{i:y_i=j\}|$. 
The corresponding stochastic gradient is:
\begin{align}
\nabla_{w_k} f_{ik} &= N(K-1)e^{x_i^\top (w_k - w_{y_i})-u_i}x_i +
                           \mu \beta_k w_k \nonumber\\ 
\nabla_{w_{y_i}} f_{ik}&= -N(K-1)e^{x_i^\top (w_k - w_{y_i})-u_i}x_i
                              + \mu \beta_{y_i} w_{y_i} \nonumber\\ 
\nabla_{w_j} f_{ik}&= 0 \qquad \forall j\notin\{k,y_i\}\nonumber\\ 
\nabla_{u_i} f_{ik}&=   - N(K-1)e^{x_i^\top (w_k - w_{y_i})-u_i} +
                          N(1-e^{-u_i}) \label{eq:vanilla_stoch_updates} 
\end{align}
If $u_i$ is at its optimal value $u_i^\ast(W) = \log(1+\sum_{k\neq
  y_i}e^{x_i^\top (w_k - w_{y_i})})$ then $e^{x_i^\top (w_k -
  w_{y_i})-u_i} \leq 1$ and the magnitude of the $N(K-1)e^{x_i^\top (w_k - w_{y_i})-u_i}$ terms in the gradient are bounded by $N(K-1)\|x_i\|_2$. However if $u_i\ll
x_i^\top (w_k - w_{y_i})$, then $e^{x_i^\top (w_k - w_{y_i})-u_i}\ggg 1$
and the magnitude of the gradients can become extremely large.

Extremely large gradients lead to two major problems: (a) 
they could lead to overflow errors and 
cause the
algorithm to crash, (b) they result in the stochastic gradient having high
variance, which leads to slow convergence\footnote{The convergence rate of
  SGD is inversely proportional to the second moment of its gradients
  \citep{lacoste2012simpler}.}. In Section~\ref{sec:experiments} we show
that  these problems occur in practice and make vanilla SGD both an
unreliable and inefficient method\footnote{The same problems arise if we
  approach optimizing 
(\ref{eq:OVE_log_likelihood}) via stochastic composition optimization
\citep{wang2016accelerating}. As is shown in Appendix~\ref{app:Mendi},
stochastic composition optimization yields near-identical expressions for
the stochastic gradients in~(\ref{eq:vanilla_stoch_updates}) and has the
same stability issues.}.

The sampled softmax optimizers in the literature
\citep{bengio2008adaptive, mnih2012fast, ji2015blackout,
  joshi2017aggressive} do not have the issue of large magnitude
gradients. Their gradients are bounded by $N(K-1)\|x_i\|_2$ 
since 
their approximations ensure that 
$u_i^\ast(W)  > x_i^\top (w_k -
w_{y_i})$. For example, in one-vs-each \citep{titsias2016one},
$u_i^\ast(W)$ is approximated by $\log(1 + e^{x_i^\top (w_k - w_{y_i})})
> x_i^\top (w_k - w_{y_i})$. However, since these methods only approximate
$u_i^\ast(W)$, the iterates do converge to the optimal~$W^\ast$. 

The goal of this paper is to design reliable and efficient SGD algorithms
for optimizing the {double-sum} formulation in~(\ref{eq:f}). We
propose two such methods: Implicit SGD (Section~\ref{sec:I_SGD}) and U-max (Section~\ref{sec:u_max}). But before we
introduce these methods we should establish that (\ref{eq:f}) is a good choice for
the double-sum formulation.

\subsection{Choice of double-sum formulation}\label{sec:double_sum_formulation}
The double-sum in~(\ref{eq:f}) is different to that of \citet{raman2016ds}. Their formulation can be derived by applying the convex conjugate substitution to
(\ref{eq:original_log_likelihood}) instead of
(\ref{eq:OVE_log_likelihood}). The resulting equations are  
$$L(W) =
-\min_{\bar{u}}\left\{\frac{1}{N}\sum_{i=1}^N\frac{1}{K-1}\sum_{k\neq y_i}
  \bar{f}_{ik}(\bar{u},W)\right\}+N$$ 
where
\begin{equation}
\bar{f}_{ik}(\bar{u},W) =  N\big( \bar{u}_i -x_i^\top w_{y_i} +
  e^{x_i^\top w_{y_i}-\bar{u}_i} +(K-1)e^{x_i^\top
    w_k-\bar{u}_i}\big)+ \frac{\mu}{2}(\beta_{y_i} \|w_{y_i}\|_2^2+
\beta_k \|w_k\|_2^2) \label{eq:f_original_log_likelihood}
\end{equation}
and the optimal solution for $\bar{u}_i$ is $\bar{u}_i^\ast(W^\ast) =
\log(\sum_{k=1}^K e^{x_i^\top w_k^\ast})$. The only difference between the
formulations is the reparameterization $\bar{u}_i = u_i + x_i^\top w_{y_i}$. 

Although either double-sum formulations can be used as a basis for SGD, our
formulation in~\eqref{eq:f} tends to have smaller magnitude stochastic gradients, and hence
faster convergence.  
To see this on a high level, note that typically ${x_i^\top w_{y_i} = \mbox{argmax}_k\{ x_i^\top
w_k \}}$ and so the $\bar{u}_i$, $x_i^\top w_{y_i}$ and
$e^{x_i^\top w_{y_i}-\bar{u}_i}$ terms are of the greatest
magnitude  in
(\ref{eq:f_original_log_likelihood}). Although at optimality these terms should roughly
cancel, this will not be the case during the early stages of optimization, leading to
stochastic gradients of large magnitude. 
In contrast, the function $f_{ik}$ in~(\ref{eq:stochastic_f}) only has
$x_i^\top w_{y_i}$ appearing as a negative exponent, and so if $x_i^\top
w_{y_i}$ is large then the magnitude of the stochastic gradients will be
small. A more rigorous version of this argument is presented in Appendix~\ref{app:double_sum} and in Section~\ref{sec:experiments} we present numerical results confirming that our double-sum formulation leads to faster convergence.

\section{Stable SGD methods}\label{sec:stable_SGD}

\subsection{Implicit SGD}\label{sec:I_SGD}
One method that solves the large gradient problem is Implicit
SGD\footnote{Also known to as an ``incremental proximal algorithm''
  \citep{bertsekas2011incremental} or ``stochastic proximal iteration'' \citep{ryu2014stochastic}.} \citep{bertsekas2011incremental, ryu2014stochastic, toulis2015implicit, 
  toulis2016towards}. Implicit SGD uses the update
equation 
\begin{align}\label{eq:Implicit_SGD_formula}
\theta^{(t+1)} = \theta^{(t)} - \eta_t\nabla f(\theta^{(t+1)}, \xi_t),
\end{align}
where $\theta^{(t)}$ is the value of the $t^{th}$ iterate, $f$ is the
function we seek to minimize and $\xi_t$ is a random variable controlling
the stochastic gradient such that $\nabla f(\theta) =
\mathbb{E}_{\xi_t}[\nabla f(\theta, \xi_t)]$. The update
(\ref{eq:Implicit_SGD_formula}) differs from 
vanilla SGD in that $\theta^{(t+1)}$ appears on both the left and right
side of the equation, whereas in vanilla SGD it appears only on the left
side.  
In our case $\theta = (u,W)$
and $\xi_t = (i_t,k_t)$ with $\nabla f(\theta^{(t+1)}, \xi_t) = \nabla
f_{i_t,k_t}(u^{(t+1)},W^{(t+1)})$. 

Although Implicit SGD has similar convergence rates to vanilla SGD, it has
other properties that can make it preferable over vanilla SGD. It is more
robust to the learning rate \citep{toulis2016towards}, which important
since a good value for the learning rate is never known a priori, and is
provably more stable \citep[Section
5]{ryu2014stochastic}. Another property, which is of particular interest
to our problem, is that it has smaller step sizes. 
\begin{prop}\label{prop:implicit_general_step_size}
Consider applying Implicit SGD to optimizing $ f(\theta) =
\mathbb{E}_{\xi}[f(\theta, \xi)]$ where $f(\theta, \xi)$ is $m$-strongly convex for all~$\xi$. Then 
\begin{align*}
\|\nabla f(\theta^{(t+1)}, \xi_t)\|_2\!\leq\! \|\nabla f(\theta^{(t)}, \xi_t)\|_2 - m\|\theta^{(t+1)} - \theta^{(t)}\|_2
\end{align*}
and so the Implicit SGD step size is smaller than that of vanilla SGD.
\end{prop}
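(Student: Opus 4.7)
The plan is to derive the inequality from two ingredients: the $m$-strong convexity of $f(\cdot,\xi_t)$ applied at the pair $(\theta^{(t)},\theta^{(t+1)})$, and the defining Implicit SGD identity $\theta^{(t+1)}-\theta^{(t)} = -\eta_t\nabla f(\theta^{(t+1)},\xi_t)$ from~\eqref{eq:Implicit_SGD_formula}. Concretely, I first write the standard monotonicity consequence of strong convexity,
\begin{equation*}
\langle \nabla f(\theta^{(t+1)},\xi_t) - \nabla f(\theta^{(t)},\xi_t),\, \theta^{(t+1)} - \theta^{(t)}\rangle \;\geq\; m\,\|\theta^{(t+1)}-\theta^{(t)}\|_2^2.
\end{equation*}
Then I substitute $\theta^{(t+1)}-\theta^{(t)} = -\eta_t\nabla f(\theta^{(t+1)},\xi_t)$ on the left-hand side. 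After dividing by $\eta_t>0$ this yields
\begin{equation*}
\langle \nabla f(\theta^{(t)},\xi_t),\, \nabla f(\theta^{(t+1)},\xi_t)\rangle \;\geq\; \|\nabla f(\theta^{(t+1)},\xi_t)\|_2^2 + m\,\eta_t\,\|\nabla f(\theta^{(t+1)},\xi_t)\|_2^2.
\end{equation*}

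Next I bound the inner product on the left by Cauchy--Schwarz and, assuming $\nabla f(\theta^{(t+1)},\xi_t)\neq 0$, divide both sides by $\|\nabla f(\theta^{(t+1)},\xi_t)\|_2$. Rewriting $\eta_t\,\|\nabla f(\theta^{(t+1)},\xi_t)\|_2 = \|\theta^{(t+1)}-\theta^{(t)}\|_2$ (again by the update rule) converts the multiplicative factor $m\eta_t$ into the advertised additive term $m\|\theta^{(t+1)}-\theta^{(t)}\|_2$, which gives the stated inequality. The degenerate case $\nabla f(\theta^{(t+1)},\xi_t)=0$ is trivial, since then both sides are zero (also $\theta^{(t+1)}=\theta^{(t)}$) and the bound holds with equality.

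Finally, for the concluding claim that the Implicit SGD step is smaller than the vanilla SGD step, I note that the vanilla update at the same $(\theta^{(t)},\xi_t)$ would have step length $\eta_t\|\nabla f(\theta^{(t)},\xi_t)\|_2$, while the Implicit step length is $\eta_t\|\nabla f(\theta^{(t+1)},\xi_t)\|_2$. Multiplying the derived inequality by $\eta_t$ gives $\eta_t\|\nabla f(\theta^{(t+1)},\xi_t)\|_2 \leq \eta_t\|\nabla f(\theta^{(t)},\xi_t)\|_2 - m\eta_t\|\theta^{(t+1)}-\theta^{(t)}\|_2$, and the subtracted term is non-negative, so the Implicit step is at most the vanilla step. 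I do not foresee a hard obstacle here: the only care needed is the division by $\|\nabla f(\theta^{(t+1)},\xi_t)\|_2$, which I handle by treating the zero case separately.
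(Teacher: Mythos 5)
Your proof is correct and follows essentially the same approach as the paper's: both combine the monotonicity inequality from $m$-strong convexity, the implicit update identity $\theta^{(t+1)}-\theta^{(t)}=-\eta_t\nabla f(\theta^{(t+1)},\xi_t)$, and Cauchy--Schwarz, merely applying them in a different order (the paper starts from Cauchy--Schwarz on $\nabla f(\theta^{(t)},\xi_t)$ against the step direction, then inserts strong convexity; you start from strong convexity, substitute the update rule, then apply Cauchy--Schwarz). You are actually a bit more careful than the paper, which tacitly divides by $\|\theta^{(t)}-\theta^{(t+1)}\|_2$ without handling the degenerate case that you dispatch explicitly.
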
 
\begin{proof}
The proof is provided in Appendix~\ref{app:general_implicit_step_bound}.
\end{proof}

The bound in Proposition~\ref{prop:implicit_general_step_size} can be
tightened for our particular problem. Unlike vanilla SGD whose step size
magnitude is \emph{exponential} in $x_i^\top (w_k - w_{y_i})-u_i$, as
shown in~(\ref{eq:vanilla_stoch_updates}), for Implicit SGD the step size
is asymptotically \emph{linear} in $x_i^\top (w_k - w_{y_i})-u_i$. This
effectively guarantees that Implicit SGD cannot suffer from computational
overflow. 

\begin{prop}\label{prop:step_size}
Consider the Implicit SGD algorithm where in each iteration only one datapoint $i$ and one class $k\neq
y_i$ is sampled. The magnitude of its step size in $W$ is ${O(x_i^\top(\frac{w_k}{1+\eta\mu\beta_k} - \frac{w_{y_i}}{1+\eta\mu\beta_{y_i}})-u_i)}$.
\end{prop}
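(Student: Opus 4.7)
The plan is to solve the Implicit SGD fixed-point equations for this problem in closed form and then reduce the step-size bound to a scalar estimate on the exponential factor appearing in the gradient. Writing the update~(\ref{eq:Implicit_SGD_formula}) with the stochastic gradients~(\ref{eq:vanilla_stoch_updates}), the ridge regularizer makes the $w$-updates linear in $w^{(t+1)}$, so they can be inverted explicitly:
\[
w_k^{(t+1)} = \frac{w_k^{(t)} - \eta E x_i}{1+\eta\mu\beta_k}, \qquad w_{y_i}^{(t+1)} = \frac{w_{y_i}^{(t)} + \eta E x_i}{1+\eta\mu\beta_{y_i}},
\]
where $E := N(K-1)\exp\!\bigl(x_i^\top(w_k^{(t+1)} - w_{y_i}^{(t+1)}) - u_i^{(t+1)}\bigr)$ is the (still implicit) exponential factor. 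Subtracting $w_k^{(t)}$ from the first expression gives a step whose magnitude is $\eta\|E x_i + \mu\beta_k w_k^{(t)}\|_2/(1+\eta\mu\beta_k)$, which is $O(E)$ up to a term independent of $E$; the same holds for $w_{y_i}$. Bounding the step size therefore reduces to bounding $E$.

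Next I would substitute the closed-form $w$-updates into the definition of $E$ to obtain a scalar relation. Projecting onto $x_i$ yields $x_i^\top(w_k^{(t+1)} - w_{y_i}^{(t+1)}) = A - cE$ where
\[
A = x_i^\top\!\left(\frac{w_k^{(t)}}{1+\eta\mu\beta_k} - \frac{w_{y_i}^{(t)}}{1+\eta\mu\beta_{y_i}}\right), \qquad c = \eta\|x_i\|_2^2\!\left(\frac{1}{1+\eta\mu\beta_k} + \frac{1}{1+\eta\mu\beta_{y_i}}\right).
\]
The implicit $u$-equation $u_i^{(t+1)} = u_i^{(t)} + \eta E - \eta N(1-e^{-u_i^{(t+1)}})$ combined with the projection $u_i^{(t+1)}\ge 0$ (which forces $1-e^{-u_i^{(t+1)}}\in[0,1]$) yields the one-sided bound $u_i^{(t+1)} \ge u_i^{(t)} + \eta E - \eta N$. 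Substituting this and the expression for $x_i^\top(w_k^{(t+1)}-w_{y_i}^{(t+1)})$ back into the definition of $E$ produces the scalar implicit inequality
\[
E \le N(K-1)\exp\!\bigl((A - u_i^{(t)}) + \eta N - (c+\eta)E\bigr).
\]

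Finally, taking logarithms gives $(c+\eta)E + \log E \le (A - u_i^{(t)}) + \log(N(K-1)) + \eta N$; for $E\ge 1$ the $\log E$ term is nonnegative and can be dropped, yielding $E \le \bigl((A - u_i^{(t)}) + O(1)\bigr)/(c+\eta)$, while for $E<1$ the step is already bounded by constants. Reinserting into the closed-form step for $w_k$ (and likewise for $w_{y_i}$) gives the claimed $O(A - u_i^{(t)})$ bound. The main obstacle is the implicit $u$-equation, because a naive treatment would let $E$ feed back into $u_i^{(t+1)}$ exponentially through $e^{-u_i^{(t+1)}}$; the crude monotone bound $1-e^{-u_i^{(t+1)}}\le 1$ is precisely what breaks this feedback and converts what is an \emph{exponential} dependence in vanilla SGD (cf.~(\ref{eq:vanilla_stoch_updates})) into the asymptotically \emph{linear} dependence claimed here.
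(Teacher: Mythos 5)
Your proof is correct and reaches the stated bound, but it takes a noticeably different route from the paper. The paper's proof (Appendix E.2) leans on the machinery built up for the bisection method in Appendix E.1: it writes the shift $a' = x_i^\top(\tfrac{\tilde w_k}{1+\eta\mu\beta_k}-\tfrac{\tilde w_{y_i}}{1+\eta\mu\beta_{y_i}}) - b'$ as a Lambert-W value $a' = P(\eta N(K-1)\gamma_i^{-1}e^{A-u_i'})$, then splits into the two cases $u_i' > \tilde u_i$ and $u_i' < \tilde u_i$. In the first case it uses monotonicity of $P$ to replace $u_i'$ by $\tilde u_i$ and then $P(z) = O(\log z)$; in the second it invokes the bound $a' \le \eta N\gamma_i^{-1}$ that was already derived for the bisection interval. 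Your argument sidesteps both the Lambert-W framing and the case split: you solve the linear $w$-fixed-point equations for the scalar $E$, feed in the crude $u$-update bound $u_i^{(t+1)}\ge u_i^{(t)}+\eta E-\eta N$ coming from $1-e^{-u}\le 1$, and arrive at a single self-referential inequality $E\le N(K-1)\exp(A-u_i^{(t)}+\eta N-(c+\eta)E)$ that you close by taking logs. Both proofs hinge on the same negative feedback of $E$ on itself through the implicit $w$-update (the $-cE$ term); yours additionally exploits the $u$-feedback, which makes the whole thing go through in one shot and is arguably more self-contained, at the cost of not reusing the sharper bisection-interval bounds that the paper already has on hand. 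Note in passing that $a' = cE$, so your bound on $E$ and the paper's bound on $a'$ are the same up to the constant $c$.

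One small inaccuracy: you justify $1-e^{-u_i^{(t+1)}}\in[0,1]$ by appealing to a projection $u_i^{(t+1)}\ge 0$, but the Implicit SGD update as defined in the paper (minimizing $2\eta f_{ik}+\|u-\tilde u\|_2^2+\|W-\tilde W\|_2^2$, cf.\ Algorithm~\ref{alg:isgd}) imposes no such projection. This does not affect your proof, because the only inequality you actually use is $1-e^{-u_i^{(t+1)}}\le 1$, which holds unconditionally; you should simply drop the reference to the projection.
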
 
\begin{proof}
The proof is provided in Appendix~\ref{app:Implicit_SGD_step_size_bound}.
\end{proof}

The major difficulty in applying Implicit SGD is that in each iteration
one has to compute a 
solution to~(\ref{eq:Implicit_SGD_formula}) \citep[Section 6]{ryu2014stochastic}. 
The tractability of this procedure is problem dependent. We show that
computing a solution to \eqref{eq:Implicit_SGD_formula} is indeed
tractable for the problem considered in this paper. 
The details are laid out in full in 
Appendix~\ref{app:Implicit_SGD}. 

\begin{prop}\label{prop:multiple}
Consider the Implicit SGD algorithm where in each iteration $n$ datapoints
and $m$ classes are sampled. The Implicit SGD update $\theta^{(t+1)}$
can be computed to within $\epsilon$ accuracy in runtime $O(n^2(n+m)\log(\epsilon^{-1})+nmD)$. 
\end{prop}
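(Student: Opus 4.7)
The plan is to reduce the implicit update \eqref{eq:Implicit_SGD_formula} under a mini-batch of $n$ sampled datapoints $S_I$ and $m$ sampled classes $S_K$ to a low-dimensional algebraic fixed-point system whose solution can be produced to $\epsilon$-accuracy by a geometrically convergent scheme. Two structural facts drive the reduction. (i) Only the coordinates $u_i$ for $i\in S_I$ and $w_j$ for $j\in S_W:=S_K\cup\{y_i:i\in S_I\}$ are touched, with $|S_W|\leq n+m$. (ii) From the gradient formulas in \eqref{eq:vanilla_stoch_updates}, the implicit equation for $w_j$ takes the form
\begin{equation*}
(1+\eta\mu\beta_j)\, w_j^{(t+1)} = w_j^{(t)} + \sum_{i\in S_I}\gamma_{ji}\, x_i,
\end{equation*}
where each coefficient $\gamma_{ji}$ is a signed sum of the nonnegative scalars $\alpha_{ik}:=\exp\bigl(x_i^\top(w_k^{(t+1)}-w_{y_i}^{(t+1)})-u_i^{(t+1)}\bigr)$ over the sampled pairs $(i,k)$ with $k=j$ or $y_i=j$. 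Hence $w_j^{(t+1)}$ lies in the affine space $\frac{1}{1+\eta\mu\beta_j}w_j^{(t)}+\mathrm{span}\{x_i:i\in S_I\}$, and the whole update is encoded by the $nm+n$ scalars $\{\alpha_{ik}\}\cup\{u_i^{(t+1)}\}$.

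Under this parameterization, the ambient dimension $D$ enters only through the precomputable tables $a_{ij}:=x_i^\top w_j^{(t)}$ ($i\in S_I$, $j\in S_W$) and the Gram matrix $G_{ii'}:=x_i^\top x_{i'}$ ($i,i'\in S_I$). Building both costs $O(n(n+m)D)=O(nmD)$ in the regime of interest, after which the identity $x_i^\top w_j^{(t+1)}=\frac{a_{ij}}{1+\eta\mu\beta_j}+\sum_{i'\in S_I}\gamma_{ji'}G_{ii'}$ lets us evaluate each inner product in $O(n)$ time. Substitution turns the implicit system into a coupled fixed-point map on $(\alpha,u)$, in which the $u_i$-equation (with the $\alpha_{ik}$ frozen) is one-dimensional and monotone, of the same form as in the single-sample Implicit SGD solver of Appendix~\ref{app:Implicit_SGD}.

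I would then iterate this fixed-point map from the warm start given by the tight initial bounds of Section~\ref{sec:I_SGD}. One sweep consists of (a) forming the $\gamma_{ji}$ from the current $\alpha_{ik}$ in $O(|S|)=O(nm)$ using the sparse incidence structure of the batch; (b) refreshing all $n(n+m)$ inner products $x_i^\top w_j^{(t+1)}$ at $O(n)$ each via the Gram table, costing $O(n^2(n+m))$; (c) updating each $\alpha_{ik}$ and each $u_i$ in $O(nm)+O(n)$. The dominant cost is (b), so one sweep is $O(n^2(n+m))$. Strong convexity of every $f_{ik}$ (Appendix~\ref{app:f_properties}) yields a contraction factor strictly below $1$, so $O(\log(\epsilon^{-1}))$ sweeps achieve $\epsilon$-accuracy, giving total cost $O(n^2(n+m)\log(\epsilon^{-1})+nmD)$. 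The main obstacle will be certifying this contraction rate uniformly over the initial bracket: in the scalar single-sample case of Appendix~\ref{app:Implicit_SGD} the residual is monotone and handled cleanly by bisection, whereas the mini-batch version requires bounding the Jacobian of the coupled $(\alpha,u)$-map using the strong-convexity modulus and the learning rate so that the fixed-point iteration (or a damped Newton variant with the same per-step cost) is guaranteed to contract.
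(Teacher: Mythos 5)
Your reparameterization of the update by the scalars $(\alpha_{ik}, u_i)$ and your bookkeeping of the per-sweep cost via a precomputed Gram table are both correct, and your overall observation that $w_j^{(t+1)}$ lies in a low-dimensional affine slice is the same structural fact the paper exploits. However, there is a genuine gap, which you yourself flag as ``the main obstacle'': the claim that strong convexity yields a contraction factor strictly below~$1$ for your Gauss--Seidel sweep on $(\alpha,u)$ is asserted, not proved. The Implicit SGD subproblem $\min_\theta\{2\eta f_{I,C}(\theta) + \|\theta - \tilde\theta\|_2^2\}$ is indeed strongly convex (thanks to the proximal term), which gives uniqueness of the solution, but it does not give geometric convergence of a block-alternating fixed-point iteration on a \emph{nonlinear} reparameterization in which the $\alpha_{ik}$ are exponentials of affine functions of the unknowns. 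Whether and how fast that map contracts depends on $\eta$, the smoothness constants, and the off-block coupling, and would need an explicit Jacobian bound. Without one, the $O(\log(\epsilon^{-1}))$ factor --- the heart of the stated complexity --- is unsupported. A smaller issue: the tight initial bounds you invoke for the warm start are derived in Appendix~\ref{app:Implicit_SGD_simpled} only for $n=m=1$ and do not transfer automatically to mini-batches.

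The paper's proof (Appendix~\ref{app:Implicit_SGD_multiple}) avoids this entirely by a different reduction. It substitutes $u_i = v_i - x_i^\top w_{y_i}$ so that $w_k$ and $w_{y_i}$ decouple in the exponents, then introduces constrained scalars $b_{ki} = x_i^\top w_k$ and eliminates every $w_k$ via Lagrangian duality, solving for the $b_k$ in closed form with the Lambert-W function. What remains is a single $n$-dimensional \emph{strongly convex minimization} over $v_I$, and the $O(\log(\epsilon^{-1}))$ iteration count then follows from the standard linear rate of first-order methods on strongly convex problems --- no contraction lemma needed. That reduction to a bona fide optimization problem, rather than a fixed-point system, is the missing step that would make your argument rigorous. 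One thing your scheme buys, if the contraction could be certified, is that it sidesteps the $O((n+m)n^3)$ inversions of the $Q_k$ matrices that the paper amortizes by reusing minibatches.
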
 
\begin{proof}
The proof is provided in Appendix~\ref{app:Implicit_SGD_multiple}.
\end{proof}

In Proposition~\ref{prop:multiple} the $\log(\epsilon^{-1})$ factor comes
from applying a first order method to solve the strongly convex Implicit
SGD update equation. It may be the case that performing this optimization
is more expensive than the $O(nmD)$ cost of computing the $x_i^\top w_k$ inner products, and so
each iteration of Implicit SGD may be significantly slower than that of
vanilla SGD.  

Fortunately, in certain cases we can improve the runtime of solving the implicit update.
If $n=1$ and we just sample one datapoint per iteration then it is possible to reduce the update to solving just a univariate strongly convex optimization problem (see Appendix \ref{app:Implicit_SGD_single_multiple} for details). Furthermore, when $m=1$ and only one class is sampled per iteration then we can derive upper and lower bounds on the one-dimensional variate to be optimized over. The optimization problem can then be solved using a bisection method, with an explicit upper bound on its cost. 

\begin{prop}\label{prop:binary}
Consider the Implicit SGD algorithm with learning rate $\eta$ where in each iteration only one datapoint $i$ and one class $k\neq
y_i$ is sampled. The Implicit SGD iterate $\theta^{(t+1)}$ can be
computed to within $\epsilon$ 
accuracy with only two $D$-dimensional vector inner products and at most $\log_2(\epsilon^{-1}) + \log_2(|x_i^\top(\frac{w_k}{1+\eta\mu\beta_k} - \frac{w_{y_i}}{1+\eta\mu\beta_{y_i}})-u_i|+2\eta N \|x_i\|_2^2+\log(2K))$ bisection method function evaluations. 
\end{prop}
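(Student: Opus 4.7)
The plan is to reduce the implicit update (\ref{eq:Implicit_SGD_formula}) to a monotone one-dimensional root-finding problem and then apply standard bisection. Only $w_k$, $w_{y_i}$, and $u_i$ are affected, and the $w$-gradients in (\ref{eq:vanilla_stoch_updates}) are proportional to $x_i$ modulo the $\ell_2$ shrinkage, so letting $\alpha := \eta N(K-1)\exp\!\bigl(x_i^\top(w_k^{(t+1)} - w_{y_i}^{(t+1)}) - u_i^{(t+1)}\bigr)$ denote the single scalar factor appearing in both $w$-gradients, the implicit $w$-updates admit the closed form
\[
w_k^{(t+1)} = \frac{w_k^{(t)} - \alpha x_i}{1+\eta\mu\beta_k},\qquad w_{y_i}^{(t+1)} = \frac{w_{y_i}^{(t)} + \alpha x_i}{1+\eta\mu\beta_{y_i}}.
\]
Taking an inner product with $x_i$ collapses all $D$-dimensional information into $a := x_i^\top w_k^{(t)}$ and $b := x_i^\top w_{y_i}^{(t)}$, which are the only two $D$-dimensional inner products required ($\|x_i\|_2^2$ is treated as precomputed, being independent of $W$). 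Simultaneously, the $u$-update reduces to the scalar implicit equation $u - \eta N e^{-u} = u_i^{(t)} - \eta N + \alpha$ whose left-hand side is strictly increasing in $u$, so it uniquely defines $u_i^{(t+1)} = U(\alpha)$ as a strictly increasing continuous function of $\alpha \geq 0$. Choosing $z := x_i^\top(w_k^{(t+1)} - w_{y_i}^{(t+1)}) - u_i^{(t+1)}$ as the bisection variable, the three coupled equations collapse into a single scalar equation $G(z) = T$ with $T := a/(1+\eta\mu\beta_k) - b/(1+\eta\mu\beta_{y_i}) - u_i^{(t)}$, and implicit differentiation of the $u$-equation plus direct calculation gives $G'(z) > 0$, so the root $z^\ast$ is unique.

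Next I will exhibit explicit bracket endpoints $z_L < z^\ast < z_U$ whose width matches the bound in the proposition. Substituting $\alpha = \eta N(K-1) e^z$ into the identity $z + U(\alpha) = T + u_i^{(t)} - \alpha\|x_i\|_2^2\bigl((1+\eta\mu\beta_k)^{-1} + (1+\eta\mu\beta_{y_i})^{-1}\bigr)$ and using $\alpha, U(\alpha) \geq 0$, the root satisfies $z^\ast \leq T + u_i^{(t)}$, which a more careful estimate on $U$ and on the exponential term in $G$ sharpens into an upper bound of the form $z_U = T + \eta N \|x_i\|_2^2 + \log(2K)$; the $\log(2K)$ arises from the change of variables $z = \log\alpha - \log(\eta N(K-1))$, while $\eta N\|x_i\|_2^2$ comes from bounding the exponential term in $G$ at the extremal value of $\alpha$. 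Symmetrically, $z_L = T - \eta N\|x_i\|_2^2 - \log(2K)$ is verified to satisfy $G(z_L) \leq T$ using $e^{z_L} \ll 1$ to control the positive contributions in $G$. Absorbing the sign of $T$ into an absolute value, the resulting bracket width is bounded by $|T| + 2\eta N\|x_i\|_2^2 + \log(2K)$.

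Finally, standard bisection halves the bracket at each step, so $\lceil \log_2((z_U - z_L)/\epsilon)\rceil \leq \log_2(\epsilon^{-1}) + \log_2\bigl(|T| + 2\eta N\|x_i\|_2^2 + \log(2K)\bigr)$ function evaluations suffice to locate $z^\ast$ within $\epsilon$, matching the stated bound. Every $G$-evaluation is $D$-independent: it involves one exponential, one evaluation of $U$ (expressible via Lambert's $W$ function, or computed by an inner scalar bisection to arbitrary accuracy), and scalar arithmetic. The main obstacle is the second paragraph: constructing matching explicit $z_L$ and $z_U$ requires a delicate balance among the linear $z$ term, the exponentially growing $e^z$ term, and the embedded $U(\alpha)$ inside $G$, and it is this analysis that pins down the specific constants $2\eta N$ and $\log(2K)$ in the final expression.
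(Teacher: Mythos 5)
Your reduction to a scalar root-finding problem is sound and tracks the paper's in its essentials: both exploit that the implicit $w$-update is forced to lie on the line spanned by $x_i$, so only the two inner products $x_i^\top w_k$ and $x_i^\top w_{y_i}$ are needed, and both collapse the coupled $(u_i,w_k,w_{y_i})$ update to one unknown scalar via an auxiliary variable and (implicitly or explicitly) the Lambert-$W$ function. You do deviate in choosing the bisection variable: you bisect over $z = x_i^\top(w_k^{(t+1)}-w_{y_i}^{(t+1)})-u_i^{(t+1)}$, while the paper first eliminates $b=x_i^\top(w_k-w_{y_i})$ in closed form and then bisects over $u_i$ directly, with the auxiliary variable $a(u_i)$ given by the Lambert-$W$ function. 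This is a legitimate reparameterization and could in principle yield the same interval-width bound, since the two variables are related by a monotone change of coordinates.

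The gap is in your second paragraph, and you name it yourself: the bracket $[z_L,z_U]$ is asserted rather than derived. The trivial bound $z^\ast \le T + u_i^{(t)}$ that you do establish is off from the claimed $z_U$ by $u_i^{(t)}$, which does not appear in the proposition and can be as large as $B_u$; closing that gap requires exactly the kind of case analysis and Lambert-$W$ estimates the paper carries out (splitting on whether $u_i^{(t+1)}\gtrless u_i^{(t)}$, bounding $a'$ by $\eta N\gamma_i^{-1}$ in one case and by a log-term in the other, where $\gamma_i^{-1}=\|x_i\|_2^2((1+\eta\mu\beta_k)^{-1}+(1+\eta\mu\beta_{y_i})^{-1})\le 2\|x_i\|_2^2$, which is where the constant $2\eta N\|x_i\|_2^2$ comes from). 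Appealing to ``$e^{z_L}\ll 1$'' is not a proof, since a priori nothing constrains $z_L$ to be very negative. As it stands, the proposal establishes the reduction and the two-inner-product claim, and correctly identifies the shape of the answer, but the bracket construction — which is the substance of the proposition's quantitative bound — is missing.
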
 
\begin{proof}
The proof is provided in Appendix~\ref{app:Implicit_SGD_simpled} and the pseudocode is presented in Algorithm~\ref{alg:isgd} in the appendix.
\end{proof}
For any reasonably large dimension $D$, the cost of the two
$D$-dimensional vector inner-products will outweigh the cost of the
bisection, and Implicit SGD with $n=m=1$ will have roughly the same speed per iteration as vanilla SGD with $n=m=1$. This is empirically confirmed for seven real-world datasets in Section \ref{sec:experiment_setup}. 

However, if calculating inner products is relatively cheap (for example if $D$ is small or GPUs are used), then Implicit SGD will be slower than vanilla SGD. The U-max algorithm, presented next, is stable in the same way Implicit SGD is but has the same runtime as vanilla SGD. This makes U-max an ideal choice when inner products are cheap.

\subsection{U-max method}\label{sec:u_max}

As explained in Section~\ref{sec:numerical_instability}, vanilla SGD
has large gradients when $u_i\ll x_i^\top (w_k - w_{y_i})$. 
This can only occur when $u_i$
is less than its optimum value for the current $W$,
since $u_i^\ast(W) = \log(1 + \sum_{j\neq y_i} e^{x_i^\top (w_k -
  w_{y_i})}) \geq x_i^\top (w_k - w_{y_i})$. A simple remedy is to set
$u_i= \log(1+e^{x_i^\top (w_k - w_{y_i})})$ whenever $u_i\ll x_i^\top (w_k
- w_{y_i})$. Since $\log(1+e^{x_i^\top (w_k - w_{y_i})}) > x_i^\top (w_k -
w_{y_i})$ this guarantees that $u_i> x_i^\top (w_k - w_{y_i})$ and so the
gradients will be bounded. It also brings $u_i$ closer\footnote{Since $u_i <
  x_i^\top (w_k - w_{y_i}) < \log(1+e^{x_i^\top (w_k - w_{y_i})})<\log(1 +
  \sum_{j\neq y_i} e^{x_i^\top (w_k - w_{y_i})})=u_i^\ast(W)$.} to its
optimal value for the current $W$ and thereby decreases the the objective
$f(u, W)$. 

This is exactly the mechanism behind the U-max algorithm --- see
Algorithm~\ref{alg:umax} in  Appendix~\ref{app:umax_alg} for its
pseudocode. U-max is the same as  vanilla SGD except for two
modifications: (a) $u_i$ is set equal to $\log(1+e^{x_i^\top (w_k -
  w_{y_i})})$  whenever $u_i \leq \log(1+e^{x_i^\top (w_k - w_{y_i})}) -
\delta $ for some threshold $\delta>0$, (b)   $u_i$ is projected onto
$[0,B_u]$, and $W$ onto $\{W:\|W\|_2\leq B_W\}$, where $B_u$ and $B_W$ are
set so that the optimal $u^\ast_i \in [0,B_u]$ and the optimal $W^\ast$
satisfies $\|W^\ast\|_2 \leq B_W$. See Appendix~\ref{app:f_properties} for
more details on how to set $B_u$ and $B_W$.  

\begin{prop}\label{thm:Umax} 
Suppose $B_f \geq  \max_{ik} \|\nabla
f_{ik} (u,W)\|_2$ for all  $\|W\|_2^2\leq B_W^2$ and  $0\leq u \leq
B_u$. Suppose the learning rate $\eta_t \leq \delta^2/(4 B_f^2)$, then
U-max with threshold $\delta$ converges to the optimum of
(\ref{eq:original_log_likelihood}), and the rate of convergence is at least as fast as
SGD with the same learning rate. 
\end{prop}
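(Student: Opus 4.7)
The plan is to show that every U-max iteration achieves at least as much expected decrease in $f$ as the corresponding vanilla projected SGD iteration started from the same iterate, and then to invoke the standard $O(1/T)$ projected SGD convergence guarantee for the strongly convex $f$ (from Appendix~\ref{app:f_properties}) using the uniform gradient bound $B_f$ that holds throughout every post-reset SGD step.

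First, I would establish that the U-max reset is a descent step on $f$. Holding $W$ and $u_j$ for $j\neq i$ fixed, the scalar section $u_i \mapsto f(u_i,W)$ is strongly convex with unique minimizer $u_i^\ast(W)=\log(1+\sum_{j\neq y_i}e^{x_i^\top(w_j-w_{y_i})})$. Whenever the trigger condition $u_i^{(t)}\leq \log(1+e^{x_i^\top(w_k-w_{y_i})})-\delta$ fires, both $u_i^{(t)}$ and $\tilde{u}_i:=\log(1+e^{x_i^\top(w_k-w_{y_i})})$ lie on the decreasing side of this convex parabola, with $\tilde{u}_i$ strictly closer to $u_i^\ast(W)$. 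Combining the strong-convexity constant of $f$ on the projection set with the gap $|\tilde{u}_i-u_i^{(t)}|\geq\delta$ then yields a lower bound of order $\delta^2$ on the decrease in $f$ from the reset alone.

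Second, I would establish that the stochastic gradient in the SGD half of every U-max iteration is uniformly bounded by $B_f$. After a reset, $u_i\geq\log(1+e^{x_i^\top(w_k-w_{y_i})})>x_i^\top(w_k-w_{y_i})$, so the explosive factor $e^{x_i^\top(w_k-w_{y_i})-u_i}$ in~(\ref{eq:vanilla_stoch_updates}) is at most one; the remaining terms are uniformly bounded on $\{\|W\|_2\leq B_W\}\times[0,B_u]^N$, so the post-reset stochastic gradient norm is at most $B_f$. Third, I would carry out the per-iteration comparison with vanilla SGD: if no reset fires, U-max coincides with vanilla projected SGD; if a reset fires, I combine the $\Omega(\delta^2)$ descent from the first step with the standard descent-lemma estimate $f(\theta^{(t+1)})-f(\theta_{\text{post-reset}})\leq -\eta_t\langle\nabla f,\nabla f_{ik}\rangle+O(\eta_t^2 B_f^2)$ for the subsequent SGD step. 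The threshold $\eta_t\leq\delta^2/(4B_f^2)$ is precisely what is needed to guarantee that, in expectation, the net per-iteration decrease in $f$ is at least that of vanilla SGD started from the same iterate. Summing these per-iteration comparisons and invoking the projected-SGD template of \citet{lacoste2012simpler} then yields convergence to the optimum of~(\ref{eq:original_log_likelihood}) at the claimed rate.

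The main obstacle is the quantitative bookkeeping in the third step: one needs a lower bound on the reset's descent that is \emph{independent} of the (potentially enormous) subgradient magnitude at $u_i^{(t)}$ and is $\Omega(\delta^2)$ purely in terms of $\delta$ and the strong-convexity constant of the scalar section $u_i\mapsto f(u_i,W)$. Using strong convexity sidesteps the explosive subgradient at $u_i^{(t)}$ and instead uses only the much smaller subgradient at $\tilde{u}_i$; this is what makes the threshold $\eta_t\leq\delta^2/(4B_f^2)$ appear naturally and balance exactly against the $O(\eta_t B_f^2)$ worst-case ascent of the post-reset SGD step. The remaining ingredients (projection onto the compact set, strong convexity of $f$, and the uniform post-reset gradient bound) are routine.
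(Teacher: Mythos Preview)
Your plan is structurally the same as the paper's proof: a reset-descent lemma, a crude bound on how far one SGD step can move $f$, a per-iteration comparison with vanilla SGD, and an appeal to the projected-SGD template. Two quantitative points need tightening for the stated threshold $\eta_t\leq\delta^2/(4B_f^2)$ to fall out.

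First, the ``strong-convexity constant of $f$ on the projection set'' from Appendix~\ref{app:f_properties} is $\min\{e^{-B_u},\mu\}$, which is far too small; using it would only give $\eta_t\lesssim e^{-B_u}\delta^2/B_f^2$. What the paper actually does (Lemma~\ref{lemma:f_decrease}) is compute the second derivative of the \emph{scalar section} $u_i\mapsto f(u_i,W)$ on the segment between $u_i^{(t)}$ and $\tilde u_i=\log(1+e^{x_i^\top(w_k-w_{y_i})})$ and show it is $\geq 1$ there (because $e^{-u_i}(1+\sum_{j\neq y_i}e^{x_i^\top(w_j-w_{y_i})})\geq e^{-\tilde u_i}(1+e^{x_i^\top(w_k-w_{y_i})})=1$). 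That is what produces the clean $\delta^2/2$ drop and hence the factor $4$ in the threshold. You gesture at the scalar section later, but you should make this computation explicit.

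Second, your step-3 estimate $f(\theta^{(t+1)})-f(\theta_{\text{post-reset}})\leq -\eta_t\langle\nabla f,\nabla f_{ik}\rangle+O(\eta_t^2 B_f^2)$ invokes a descent lemma, which needs a Lipschitz constant for $\nabla f$; no such constant is available here (the Hessian blows up exponentially). The paper instead uses only the first-order bound $|f(\theta)-f(\theta')|\leq B_f\|\theta-\theta'\|_2\leq \eta_t B_f^2$, applied once to the U-max SGD step and once to the vanilla SGD step from $\theta^{(t)}$; combined with the $\delta^2/2$ reset drop this gives $f(\theta^{(t+1)})\leq f(\theta^{(t)}-\eta_t\nabla f_{ik}(\theta^{(t)}))$ pointwise whenever $\eta_t\leq\delta^2/(4B_f^2)$. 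Finally, your step~2 is unnecessary: by hypothesis $B_f$ already bounds $\|\nabla f_{ik}\|_2$ everywhere on the projection set, pre- or post-reset.
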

\begin{proof}
The proof is provided in Appendix~\ref{app:proof_Umax}.
\end{proof}
U-max directly resolves the problem of extremely large
gradients. Modification (a) ensures that $\delta\geq x_i^\top (w_k -
w_{y_i}) -u_i$ (otherwise $u_i$ would be increased to $\log(1+e^{x_i^\top
  (w_k - w_{y_i})})$) and so the magnitude of the U-max gradients are
bounded above by  $N(K-1)e^\delta\|x_i\|_2$.  

In U-max there is a trade-off between the gradient magnitude and learning
rate that is controlled by $\delta$. For Proposition~\ref{thm:Umax} to apply
we require that the learning rate $\eta_t \leq \delta^2/(4 B_f^2)$. A
small $\delta$ yields small magnitude gradients, which makes convergence
fast, but necessitates a small $\eta_t$, which makes convergence slow.

As presented above U-max only samples one datapoint and class per iteration, but it trivially generalizes to multiple datapoints and classes. If $n$ datapoints and $m$ classes are sampled, then the runtime is $O(nmD)$, due to the vector inner-product calculations. This is the same runtime as vanilla SGD as well as the state-of-the-art biased methods such as Noise Contrastive Estimation 
\citep{mnih2012fast}, Importance Sampling \citep{bengio2008adaptive}
and One-Vs-Each \citep{titsias2016one}.

\section{Experiments}\label{sec:experiments}
Two sets of experiments were conducted to assess the performance of the proposed methods. The first compares U-max and Implicit SGD to the state-of-the-art over seven real world datasets. The second investigates the difference in performance between the two double-sum formulations discussed in Section~\ref{sec:double_sum_formulation}. We begin by specifying the experimental setup and then move onto the results. 

\subsection{Experimental setup}\label{sec:experiment_setup}
 
\textbf{Data.} We used the MNIST, 
Bibtex, 
Delicious, 
Eurlex, 
AmazonCat-13K, 
Wiki10, 
and WikiSmall 
datasets\footnote{All of the datasets were downloaded from
   \url{http://manikvarma.org/downloads/XC/XMLRepository.html}, except
   WikiSmall which was obtained from
   \url{http://lshtc.iit.demokritos.gr/}.}, the properties of which are
 summarized in Table~\ref{tbl:datasets}. Most of the datasets are
 multi-label and, as is standard practice \citep{titsias2016one}, we took
 the first label as being the true label and discarded the remaining
 labels. To make the computation more manageable, we truncated the number
 of features to be at most 10,000 and the training and test size to be at
 most 100,000. If, as a result of the dimension truncation, a datapoint
 had no non-zero features then it was discarded. The features of each
 dataset were normalized to have unit $L_2$ norm. All of the datasets were
 pre-separated into training and test sets. We only focus on the
 performance on the algorithms on the training set, as the goal in this
 paper is to investigate how best to optimize the softmax likelihood,
 which is given over the training set. \\
  
 \begin{table}[t]
\caption{Datasets with a summary of their properties. Where the number of classes, dimension or number of examples has been altered, the original value is displayed in brackets.}
\label{tbl:datasets}
\vskip 0.15in
\begin{center}
\begin{small}
\begin{sc}
\begin{tabular}{llllr}
\toprule
Data set & Classes & Dimension & Examples \\
\midrule
MNIST & 10 &780 & 60,000 \\
Bibtex & 147 (159)&1,836&  4,880\\
Delicious & 350 (983) &500 &  12,920 \\
Eurlex & 838 (3,993) &5,000 & 15,539 \\
AmazonCat-13K & 2,709 (2,919) & 10,000 (203,882) & 100,000 (1,186,239)\\
Wiki10 &4,021 (30,938) & 10,000 (101,938) &14,146 \\
WikiSmall & 18,207 (28,955)& 10,000 (2,085,164) & 90,737 (342,664)\\
\bottomrule
\end{tabular}
\end{sc}
\end{small}
\end{center}
\vskip -0.1in
\end{table}

\noindent\textbf{Algorithms.} We compared our algorithms to the state-of-the-art
methods for optimizing the softmax which have runtime $O(D)$ per
iteration\footnote{\citet{raman2016ds} have runtime $O(NKD)$ per epoch, which is equivalent to $O(KD)$ per iteration. This is a factor of $K$ slower than the methods we compare against. In most of our experiments, the second epoch of Raman would not have even started by the time our algorithms have already nearly converged.}. The
competitors include Noise Contrastive Estimation (NCE)
\citep{mnih2012fast}, Importance Sampling (IS) \citep{bengio2008adaptive}
and One-Vs-Each (OVE) \citep{titsias2016one}. Note that these methods are
all biased and will not converge to the optimal softmax MLE, but, perhaps,
something close to it. For these algorithms we set $n=100, m=5$, which are
 standard settings\footnote{We also experimented setting $n=1, m=5$
  in these methods and there was virtually no difference in performance except the
  runtime was slower. 
  }. For Implicit SGD we chose to implement the version in
Proposition~\ref{prop:binary} which has $n=1, m=1$ and used Brent's method as out bisection method solver.

For U-max and vanilla SGD we set $n=1, m=5$ and for U-max the threshold parameter $\delta =1$. For both methods we also experimented with $m=1$ but obtained significantly better performance with $m=5$. The probable reason is that having a larger $m$ value decreases the variance of the gradients, making the algorithms more stable with higher learning rates and thereby improving convergence.

The ridge regularization parameter $\mu$ was set to zero and the classes were sampled uniformly for all algorithms. \\

\noindent\textbf{Epochs, losses and runtimes.} Each algorithm was run for $50$ epochs on each
dataset. The learning rate was decreased by a factor of 0.9 each
epoch. Both the prediction error and log-loss
(\ref{eq:original_log_likelihood}) were recorded at the end of 10 evenly
spaced epochs over the 50 epochs.  

The OVE, NCE, IS, Vanilla and U-max algorithms have virtually the same
runtime per iteration and so their relative performance can be gauged by
plotting their log-loss over the epochs. Since Implicit SGD has to
solve an inner optimization problem each iteration, its runtime will be
slower than that of other algorithms with $n=1,m=1$, but may be faster
than algorithms with $n=1, m>1$. Thus plotting its performance over the epochs
may yield an inaccurate comparison to the other algorithms with respect to
runtime. 

To investigate this we measured the runtime of Implicit
SGD with $n=m=1$ vs vanilla SGD\footnote{Any of OVE, NCE, IS, Vanilla
  and U-max could have been used since their runtimes are virtually
  identical.} with $n=1, m=5$ for 50 epochs on each
dataset. To make the runtime comparison as fair as possible, both
algorithms were coded in a standard NumPy framework. 
The runtime of Implicit SGD is
$0.65\pm 0.15$ times that of vanilla SGD (see Table~\ref{tbl:runtimes} in Appendix~\ref{app:runtime_results} for runtimes.)\footnote{As noted above, vanilla SGD with $m=5$ performed significantly better than with $m=1$, thus we compare to the $m=5$ runtime. The runtime of Implicit SGD was on average $1.04\pm 0.07$ times that of vanilla SGD with $n=m=1$ for both methods.}.
 Although these results are data, 
implementation and hardware dependent, 
they strongly indicate that Implicit SGD with $n=m=1$ is faster than
vanilla SGD (or any similar method) with $n=1, m=5$. Thus plotting the log-loss
over the epochs gives a \emph{conservative estimate} of Implicit SGD's
relative performance with respect to runtime.\\

\noindent\textbf{Learning rate.} The magnitude of the gradient differs in each
algorithm, due to either under- or over-estimating the normalizing constant from~(\ref{eq:original_log_likelihood}). To set a reasonable
learning rate for each algorithm on each dataset, we ran them on 10\% of the training data with initial
learning rates\footnote{The learning rates are divided by $N$ to counter the stochastic gradient being proportional to $N$ and thereby make the step size independent of $N$.} $\eta = 10^{0,\pm1,\pm2,\pm3} / N$ . The learning rate with the best performance after 50 epochs is then used when the algorithm is applied to the full dataset.
The tuned learning rates are presented in Table~\ref{tbl:tuned_learning_rates}.
Note that vanilla SGD requires a very small learning rate, otherwise it suffered from overflow. On average the tuned vanilla SGD learning rate is 3,019 times smaller than Implicit SGD's and 319 times smaller than U-max's.\\

\begin{table}[t]
\caption{Tuned initial learning rates for each algorithm on each
  dataset. The learning rate in $10^{0,\pm1,\pm2,\pm3}/N$ with the lowest
  log-loss after 50 epochs using only 10\% of the data is
  displayed. Vanilla SGD applied to AmazonCat, Wiki10 and WikiSmall
  suffered from overflow with a learning rate of $10^{-3}/N$, but was
  stable with smaller learning rates (the largest learning rate for which
  it was stable is displayed).} 
\label{tbl:tuned_learning_rates}
\vskip 0.15in
\begin{center}
\begin{small}
\begin{sc}
\begin{tabular}{lllllll}
\toprule
Data set & OVE & NCE & IS & Vanilla & Umax & Implicit \\
\midrule
MNIST & $10^{1}$  & $10^{1}$  & $10^{1}$  & $10^{-2}$  & $10^{1}$  & $10^{-1}$\\
Bibtex & $10^{2}$  & $10^{2}$  & $10^{2}$  & $10^{-2}$  & $10^{-1}$  & $10^{1}$\\
Delicious & $10^{1}$  & $10^{3}$  & $10^{3}$  & $10^{-3}$  & $10^{-2}$  & $10^{-2}$\\
Eurlex & $10^{-1}$  & $10^{2}$  & $10^{2}$  & $10^{-3}$  & $10^{-1}$  & $10^{1}$\\
AmazonCat & $10^{1}$  & $10^{3}$  & $10^{3}$  & $10^{-5}$  & $10^{-2}$  & $10^{-3}$\\
Wiki10 & $10^{-2}$  & $10^{3}$  & $10^{2}$  & $10^{-4}$  & $10^{-2}$  & $10^{0}$\\
{WikiSmall} & $10^{3}$  & $10^{3}$  & $10^{3}$  & $10^{-4}$  & $10^{-3}$  & $10^{-3}$\\
\bottomrule
\end{tabular}
\end{sc}
\end{small}
\end{center}
\vskip -0.1in
\end{table}

\begin{figure*}[h!]
\centering
\begin{minipage}{.24\textwidth}
  \centering
  \includegraphics[width=.59\linewidth]{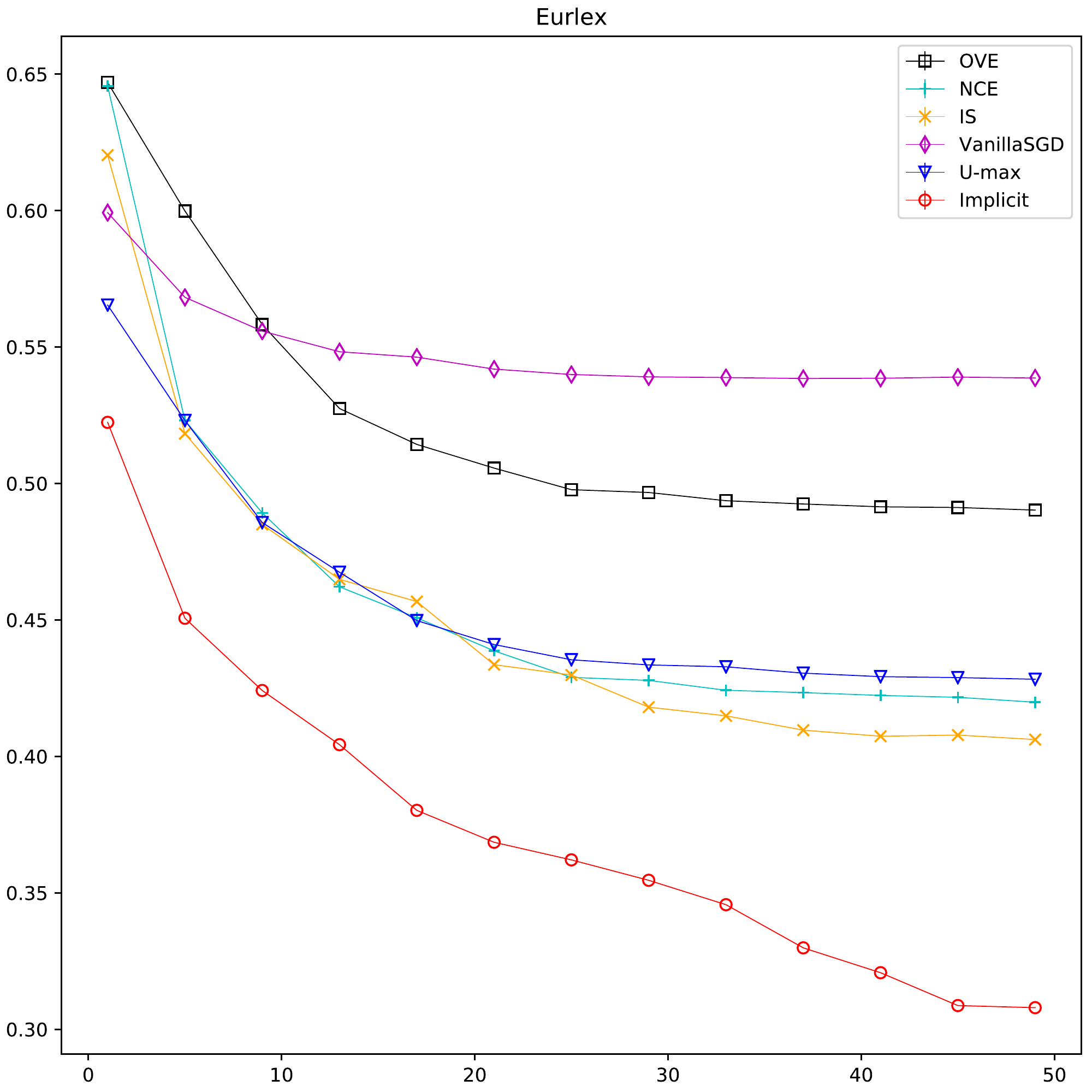}
\end{minipage}%
\hfill
\begin{minipage}{.24\textwidth}
  \centering
  \includegraphics[width=.99\linewidth]{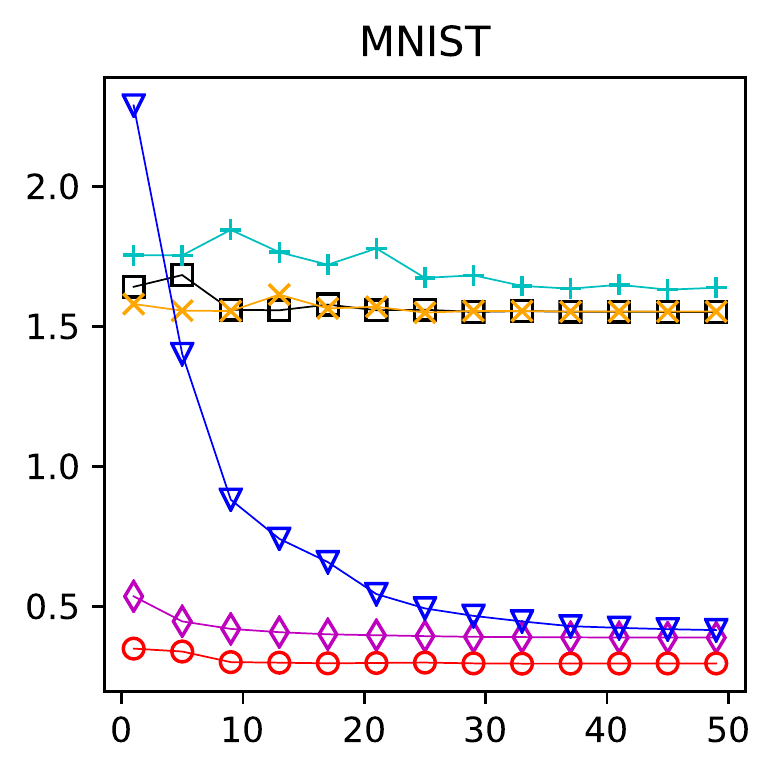}
\end{minipage}
\begin{minipage}{.24\textwidth}
  \centering
  \includegraphics[width=.99\linewidth]{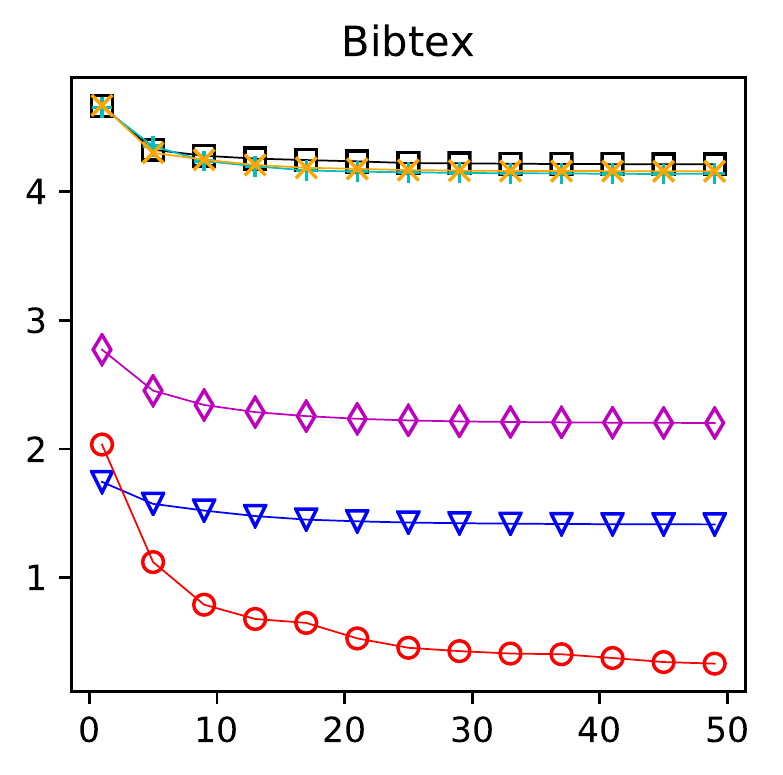}
\end{minipage}%
\hfill
\begin{minipage}{.24\textwidth}
  \centering
  \includegraphics[width=.99\linewidth]{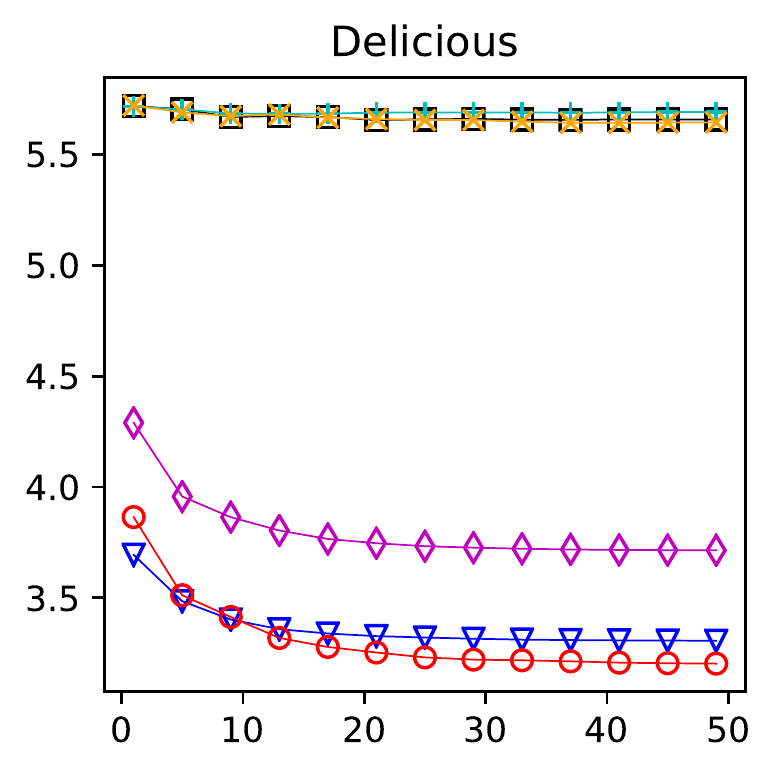}
\end{minipage}
\begin{minipage}{.24\textwidth}
  \centering
  \includegraphics[width=.99\linewidth]{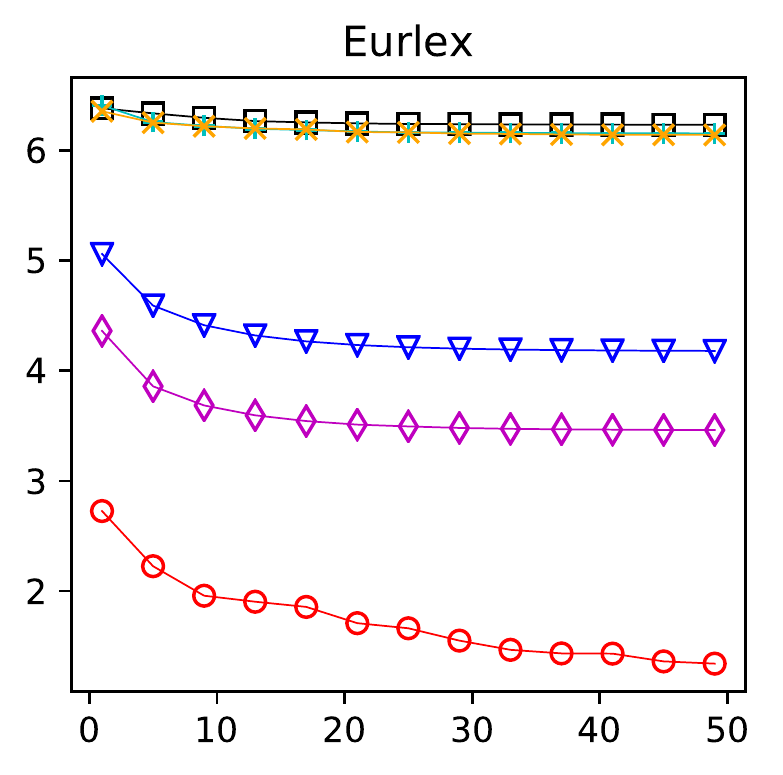}
\end{minipage}%
\hfill
\begin{minipage}{.24\textwidth}
  \centering
  \includegraphics[width=.99\linewidth]{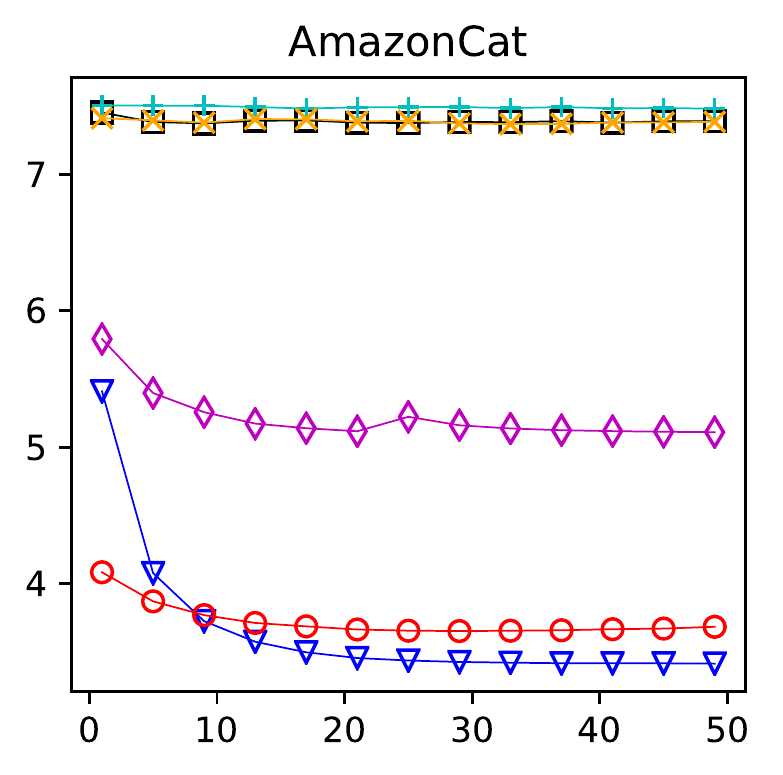}
\end{minipage}
\begin{minipage}{.24\textwidth}
  \centering
  \includegraphics[width=.99\linewidth]{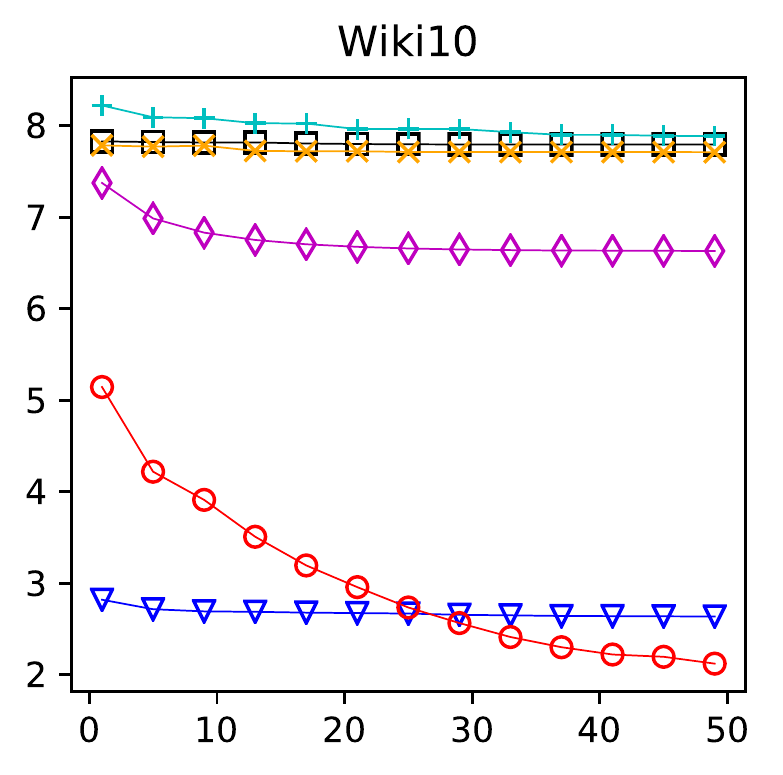}
\end{minipage}%
\hfill
\begin{minipage}{.24\textwidth}
  \centering
  \includegraphics[width=.99\linewidth]{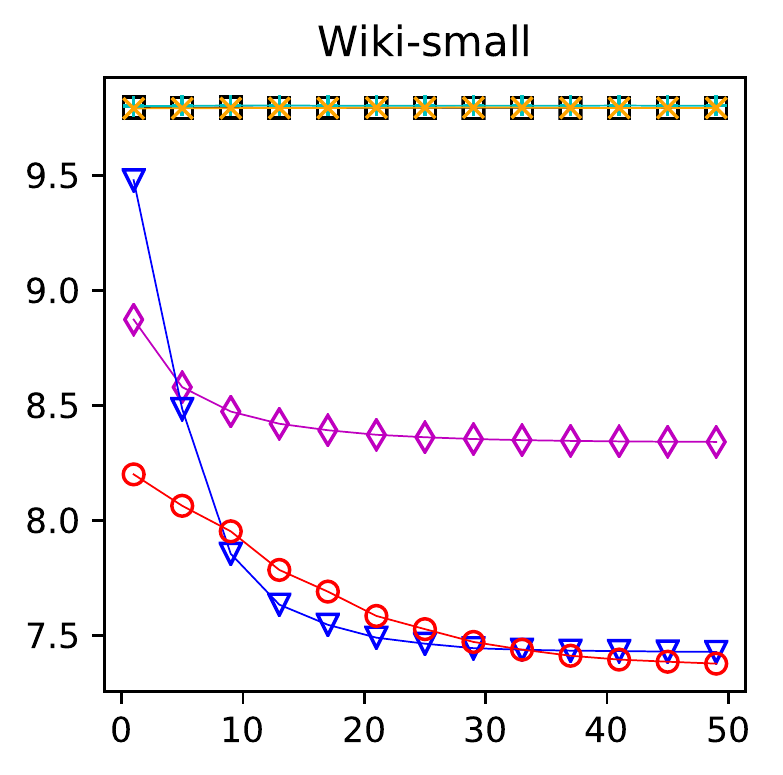}
\end{minipage}
\caption{The x-axis is the number of epochs and the y-axis is the log-loss from~(\ref{eq:original_log_likelihood}).}
\label{fig:comparison_prediction_log_loss}
\end{figure*}

\begin{table*}[h!]
\caption{Relative log-loss after 50 epochs. The values for each dataset are normalized by dividing by the corresponding Implicit log-loss. The algorithm with the lowest log-loss for each dataset is in bold.}
\label{tbl:results}
\vskip 0.15in
\begin{center}
\begin{small}
\begin{sc}
\begin{tabular}{ccccccc}
\toprule
Data set & OVE & NCE & IS & Vanilla & U-max & Implicit \\
\midrule
MNIST  & 5.25 & 5.55 & 5.26 & 1.31 & 1.40 &  \textbf{1.00}\\
Bibtex & 12.65 & 12.65 & 12.48 & 6.61 & 4.25 &    \textbf{1.00}\\
Delicious  & 1.77 & 1.78 & 1.76 & 1.16 & 1.03 &    \textbf{1.00}\\
Eurlex & 4.65 & 4.59 & 4.58 & 2.58 & 1.50 &   \textbf{1.00} \\
AmazonCat  & 2.01 & 2.03 & 2.00 & 1.39 & \textbf{0.93} &   1.00 \\
Wiki10 & 3.68 & 3.72 & 3.64 & 3.13 & 1.24 &   \textbf{1.00} \\
WikiSmall & 1.33 & 1.33 & 1.33 & 1.13 & 1.01 &   \textbf{1.00} \\
\midrule
Average & 4.48 & 4.52 & 4.44 & 2.47 & 1.62 &   \textbf{1.00} \\
\bottomrule
\\
\end{tabular}
\end{sc}
\end{small}
\end{center}
\vskip -0.1in
\end{table*}

\begin{figure*}[h!]
\centering
\begin{minipage}{.24\textwidth}
  \centering
   {\scriptsize \textsf{Learning rates}}\\
  \includegraphics[width=.35\linewidth]{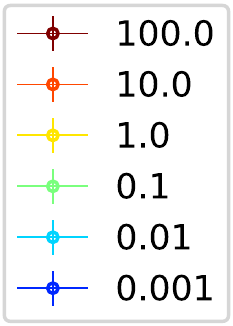}
\end{minipage}%
\hfill
\begin{minipage}{.24\textwidth}
  \centering
  \includegraphics[width=.99\linewidth]{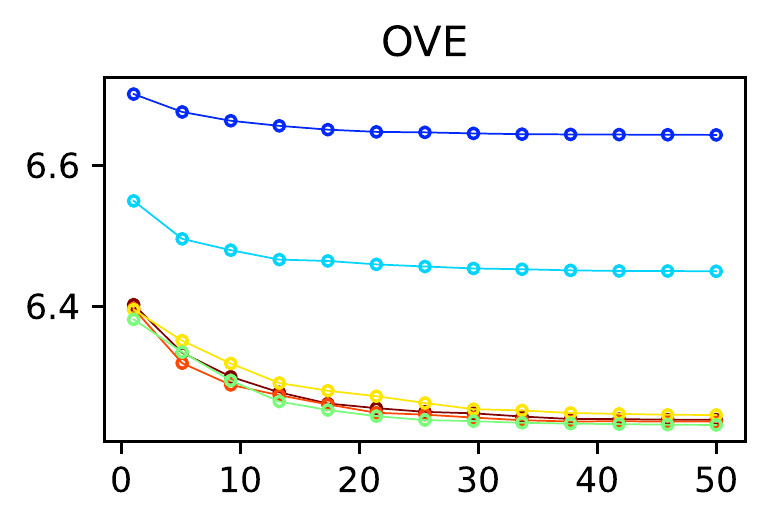}
\end{minipage}
\begin{minipage}{.24\textwidth}
  \centering
  \includegraphics[width=.99\linewidth]{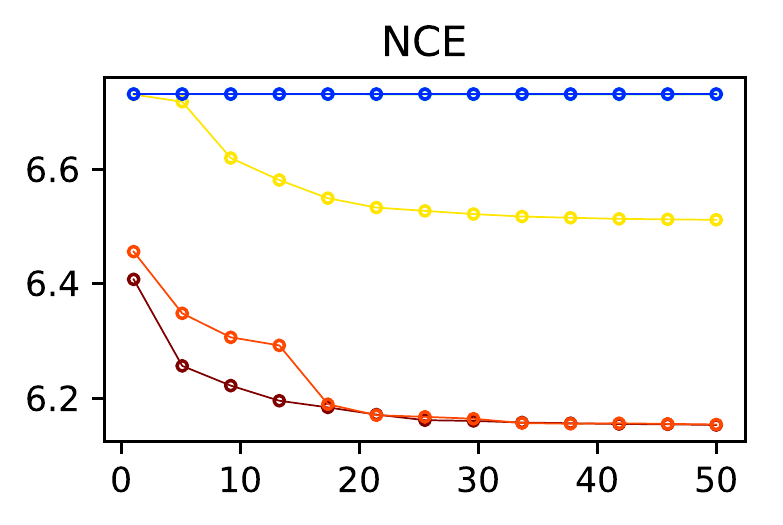}
\end{minipage}
\begin{minipage}{.24\textwidth}
  \centering
  \includegraphics[width=.99\linewidth]{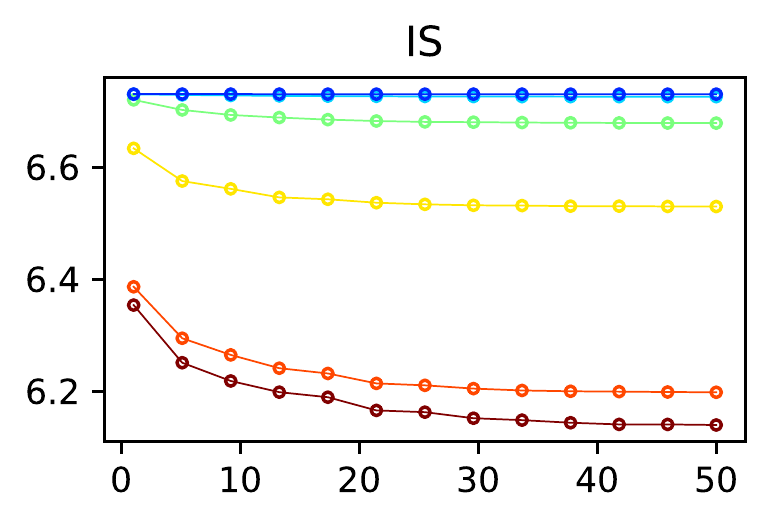}
\end{minipage}
\hfill
\begin{minipage}{.33\textwidth}
  \centering
  \includegraphics[width=.75\linewidth]{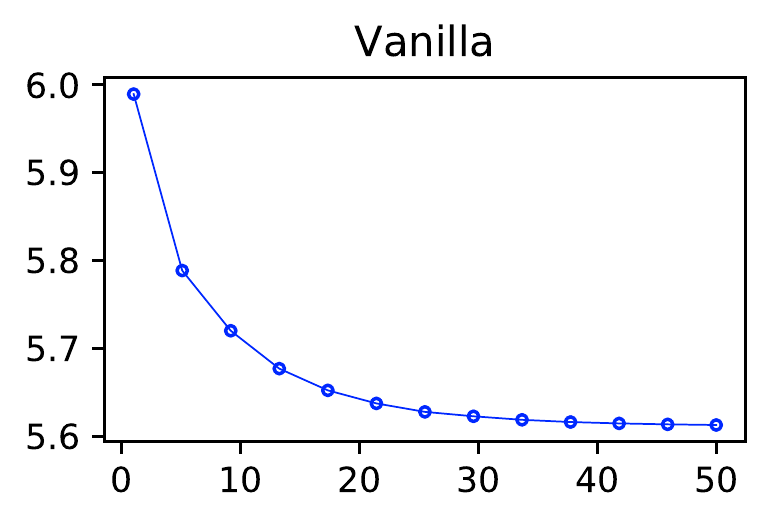}
\end{minipage}
\begin{minipage}{.33\textwidth}
  \centering
  \includegraphics[width=.75\linewidth]{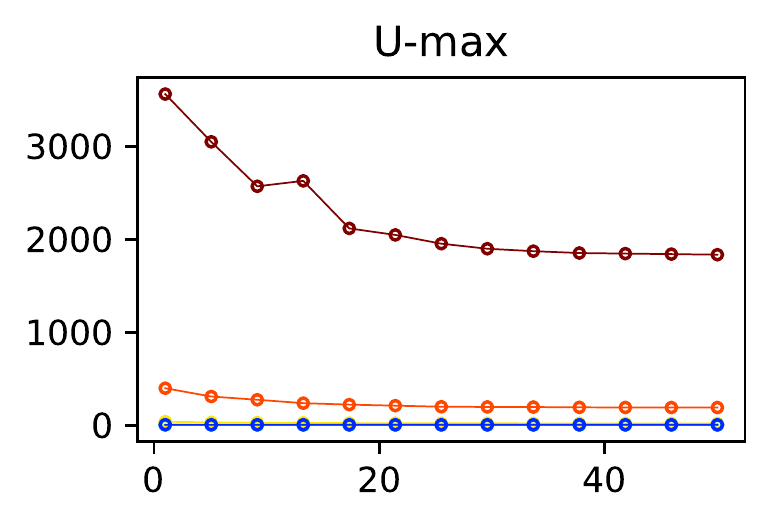}
\end{minipage}%
\hfill
\begin{minipage}{.33\textwidth}
  \centering
  \includegraphics[width=.75\linewidth]{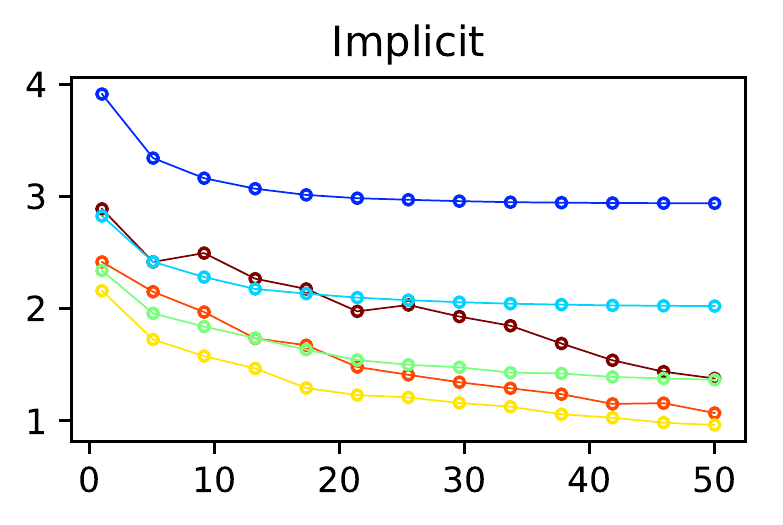}
\end{minipage}
\caption{Log-loss on Eurlex for different learning rates.}
\label{fig:learning_rate_loss}
\end{figure*}

\subsection{Results}\label{sec:experiment_results}
\noindent\textbf{Comparison to state-of-the-art.} Plots of the performance of the algorithms on each dataset are displayed in Figure~\ref{fig:comparison_prediction_log_loss} with the relative performance compared to Implicit SGD given in Table~\ref{tbl:results}.  

The Implicit SGD method has the best performance on all datasets but one. After just one epoch its performance is better than all of the state-of-the-art biased methods are after 50.
Not only does it converge faster in the first few epochs, it also converges to the optimal MLE (unlike the biased methods that prematurely plateau). 
On average after 50 epochs Implicit SGD's log-loss is a factor of 4.44 times lower than that of the biased methods. 

Out of the algorithms that sample more than one class per iteration, U-max's performance is the best. It is the only algorithm to outperform Implicit SGD on a dataset (AmazonCat). 
Vanilla SGD's performance is better than the previous state-of-the-art but is generally worse than U-max. The difference in performance between vanilla SGD and U-max can largely be explained by vanilla SGD requiring a smaller learning rate to avoid computational overflow.

The sensitivity of each method to the initial learning rate can be seen in Figure~\ref{fig:learning_rate_loss}, where the results of running each method on the Eurlex dataset with learning rates $\eta = 10^{0,\pm1,\pm2,-3}/N$ is presented. The results agree with those in Figure~\ref{fig:comparison_prediction_log_loss}, with Implicit SGD having the best performance for most learning rate settings. This is consistent with the theoretical results proving that Implicit SGD is robust to the learning rate \citep{ryu2014stochastic, toulis2015implicit}. In fact, Implicit SGD's worst performance is still better than the best performance all of the other algorithms.

For learning rates $\eta = 10^{1,2}/N$ the U-max log-loss is extremely large. This can be explained by Proposition~\ref{thm:Umax}, which does not guarantee convergence for U-max if the learning rate is too high. Vanilla SGD only has one line plotted, corresponding to the learning rate of $10^{-3}/N$, as for any high learning rate the algorithm suffered from computational overflow. The OVE, NCE and IS methods are very robust to the learning rate, which is perhaps why they have been so popular in the past.\\


\noindent\textbf{Comparison of double-sum formulations.} Figure~\ref{fig:double_sum_comparison} illustrates the performance on the Eurlex dataset of U-max using the proposed double-sum in~(\ref{eq:stochastic_f}) compared to U-max using the double-sum of \citet{raman2016ds} in~(\ref{eq:f_original_log_likelihood}). The proposed double-sum outperforms for all\footnote{The learning rates $\eta = 10^{1,2,3,4}/N$ are not displayed in the Figure~\ref{fig:double_sum_comparison} for visualization purposes. They have similar behavior as $\eta = 1.0/N$.} learning rates $\eta = 10^{0,\pm1,\pm2,\pm3,\pm4}/N$, with its $50^{th}$-epoch log-loss being $3.08$ times lower on average. This supports the argument from Section~\ref{sec:double_sum_formulation} that SGD methods applied to the proposed double-sum have smaller magnitude gradients and converge faster. Indeed, if the log-loss of vanilla SGD, U-max and Implicit SGD in Figure~\ref{fig:comparison_prediction_log_loss} and Table~\ref{tbl:results} were multiplied by 3.08 they would be roughly the same as OVE, NCE and IS. Thus our proposed double-sum formulation is crucial to the success of the U-max and Implicit SGD algorithms.

\begin{figure}[t]
\centering
\begin{minipage}{.39\columnwidth}
  \centering
  \includegraphics[width=.99\linewidth]{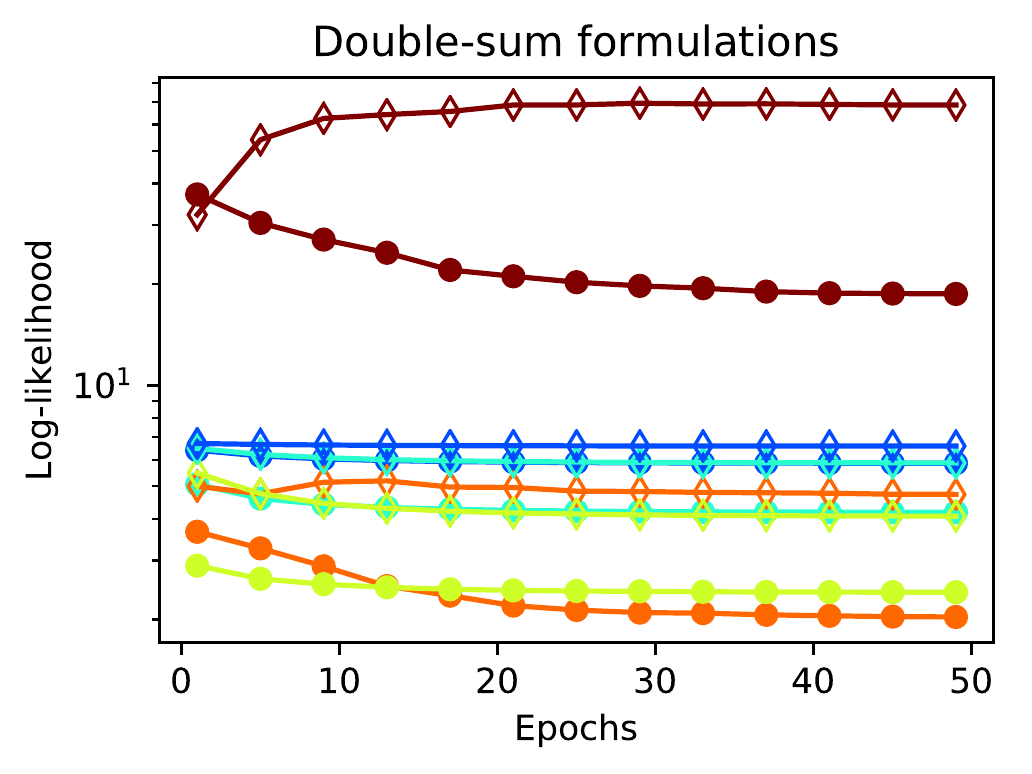}
\end{minipage}
\begin{minipage}{.23\columnwidth}
  \centering
  \includegraphics[width=.6\linewidth]{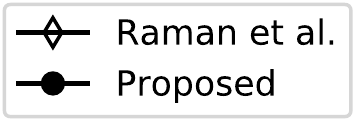}\\
  
  {\scriptsize \textsf{Learning rates}}\\
  \includegraphics[width=.45\linewidth]{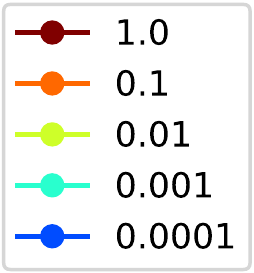}
\end{minipage}
\caption{Log-loss of U-max on Eurlex for different learning rates with our proposed double-sum formulation and that of \citet{raman2016ds}.}
\label{fig:double_sum_comparison}
\end{figure}

\section{Conclusion}\label{sec:conclusion}
In this paper we propose two unbiased robust algoritms for for
optimizing the softmax likelihood: Implicit SGD and U-max. These are the
\emph{first} unbiased algorithms that 
require only $O(D)$ computation per iteration, and no additional work at the
end of each epoch. 
Implicit SGD
can be efficiently implemented and clearly out-performs the previous
state-of-the-art on seven real world datasets.  
The result is a new method that enables optimizing the softmax for
extremely large number of samples 
and classes.

One limitation of the Implicit SGD method is that it is relatively slow if multiple datapoints are sampled each iteration or multiple inner-products can be efficiently computed (e.g. using GPUs). U-max should be the method of choice in such a setting. 

We only tested U-max and Implicit SGD 
 on 
the simple
softmax, but these methods can also be applied to any neural network where the final
layer is the softmax. Furthermore, applying these methods to word2vec type models, which
can be viewed as a softmax where both $x$ and $w$ are parameters to be
fit, might be provide a significant speed-up. 

\bibliographystyle{icml2018}
\bibliography{unbiased_softmax}

\clearpage
\appendix

\section{Comparison of double-sum formulations}\label{app:double_sum}
In Section~\ref{sec:double_sum_formulation} our double-sum formulation was compared to that of \citet{raman2016ds}. It was noted that the formulations only differ by a reparameterization $\bar{u}_i = u_i + x_i^\top w_{y_i}$, and an intuitive argument was given as to why our formulation leads to smaller magnitude gradients. Here we flesh out that argument and also explore different reparameterizations.

Let us introduce the set of parameterizations $v_i = \log(1+\sum_{k\neq y_i}e^{x_i^\top(w_k-w_{y_i})}) + \alpha x_i^\top w_{y_i}$ where $\alpha\in\mathbb{R}$. Our double-sum corresponds to $\alpha = 0$ while that of \citet{raman2016ds} to $\alpha = 1$. The question is, what is the optimal $\alpha$? The stochastic functions with $v_i$ are of the form
\begin{align*}
f_{ik}(v,W) =& N\left( v_i - \alpha x_i^\top w_{y_i} + e^{\alpha x_i^\top w_{y_i}-v_i}+(K-1)e^{x_i^\top (w_k - (1-\alpha)w_{y_i})-v_i})\right)
\end{align*}
where for notational simplicity we have set the ridge-regularization parameter $\mu=0$. The stochastic gradients are
\begin{align*}
\nabla_{w_k} f_{ik} &= N(K-1)e^{x_i^\top (w_k - (1-\alpha)w_{y_i})-v_i} x_i \\ 
\nabla_{w_{y_i}} f_{ik}&= N\left(-\alpha + \alpha e^{\alpha x_i^\top w_{y_i}-v_i}  - (1-\alpha)(K-1)e^{x_i^\top (w_k - (1-\alpha)w_{y_i})-v_i}\right) x_i \\ 
\nabla_{w_j} f_{ik}&= 0 \qquad \forall j\notin\{k,y_i\}\\ 
\nabla_{u_i} f_{ik}&=   N\left(1 - e^{\alpha x_i^\top w_{y_i}-v_i} - (K-1)e^{x_i^\top (w_k - (1-\alpha)w_{y_i})-v_i}\right).
\end{align*}

Let $x_i^\top w_{y_i} = x_i^\top \tilde{w}_{y_i} + \delta_i$ where $\tilde{w}_{y_i}$ is the old value of $w_{y_i}$ from the previous time datapoint $i$ was sampled. Let us write $v_i = \tilde{u}_i- \gamma_i+ \alpha(x_i^\top \tilde{w}_{y_i} - \epsilon_i)$, where $\gamma_i$ is the error between $\log(1+\sum_{k\neq y_i}e^{x_i^\top(\tilde{w}_k-\tilde{w}_{y_i})})$ and its estimate $\tilde{u}_i$, while $\epsilon_i$ is the error from estimating the $x_i^\top \tilde{w}_{y_i}$ term. The gradients become
\begin{align*}
\nabla_{w_k} f_{ik} &= N(K-1)\cdot e^{(1-\alpha)\delta_i + \alpha \epsilon_i - \gamma_i} \cdot e^{x_i^\top (w_k - \tilde{w}_{y_i})-\tilde{u}_i} x_i \\ 
\nabla_{w_{y_i}} f_{ik}&= N\left(-\alpha + \alpha e^{\alpha\delta_i + \alpha \epsilon_i - \gamma_i} \cdot e^{-\tilde{u}_i}  
- (1-\alpha)e^{(1-\alpha)\delta_i + \alpha \epsilon_i - \gamma_i} \cdot (K-1)e^{x_i^\top (w_k - \tilde{w}_{y_i})-\tilde{u}_i} \right) x_i \\ 
\nabla_{u_i} f_{ik}&=   N\left(1 - e^{\alpha\delta_i + \alpha \epsilon_i - \gamma_i} \cdot e^{-\tilde{u}_i} - e^{(1-\alpha)\delta_i + \alpha \epsilon_i - \gamma_i} \cdot(K-1) e^{x_i^\top (w_k - \tilde{w}_{y_i})-\tilde{u}_i} \right).
\end{align*}

The goal is for the variance of these stochastic gradients to be as small as possible. This may be achieved by setting $\alpha$ to decrease the effect of the noise factors $\delta_i$ and $\epsilon_i$. The noise $\epsilon_i$ always appears as $e^{\alpha \epsilon_i}$ and so it is best to have $\alpha= 0$ from this perspective. The noise $\delta_i$ appears as $e^{\alpha\delta_i}$,~$e^{(1-\alpha)\delta_i } \cdot(K-1) e^{x_i^\top (w_k - \tilde{w}_{y_i})}$ or $(1-\alpha)e^{(1-\alpha)\delta_i } \cdot(K-1) e^{x_i^\top (w_k - \tilde{w}_{y_i})}$, and so there is tension between setting $\alpha=0$ or $\alpha = 1$.

The optimal value of $\alpha$ clearly depends on the data and algorithm. If the noise $\epsilon$ is large and it is often the case that $x_i^\top \tilde{w}_{y_i} > x_i^\top \tilde{w}_k + \log(K-1)$ then $\alpha\approx 0$ is ideal, but if not then it is best for $\alpha\approx 1$. In Section~\ref{sec:experiments} we showed that for our datasets $\alpha=0$ yields better results than $\alpha=1$, although the optimal value of $\alpha$ is probably between $0$ and $1$. A future line of work is to develop methods to learn the optimal $\alpha$, perhaps dynamically per datapoint.

\section{Proof of variable bounds and strong
  convexity}\label{app:f_properties} 
We first establish that the optimal values of $u$ and $W$ are
bounded. 
Next, we show that within these bounds the objective is strongly convex
and its gradients are bounded.  
 

\begin{lemma}[\citet{raman2016ds}]\label{lemma:W_bound} The optimal value of $W$ is bounded as $\|W^\ast\|_2^2\leq B_W^2$ where $B_W^2 = \frac{2}{\mu}N \log(K)$.
\end{lemma}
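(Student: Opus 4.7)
The plan is to bound $\|W^\ast\|_2^2$ by comparing the objective $L(W)$ at the optimum $W^\ast$ with its value at the simple reference point $W=0$, and exploiting the fact that the unregularized log-likelihood is non-positive.

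First I would evaluate $L(0)$. At $W=0$ every inner product $x_i^\top w_k$ vanishes, so the normalizing sum is $\sum_{k=1}^K e^0 = K$ and the regularization term drops out. Hence $L(0) = -N\log(K)$. Since $W^\ast$ is the maximizer of $L$, this gives the lower bound $L(W^\ast) \geq -N\log(K)$.

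Next I would upper bound $L(W^\ast)$ using the structure of the softmax. For every pair $(x_i,y_i)$ the per-datapoint term
\begin{equation*}
x_i^\top w_{y_i}^\ast - \log\!\Bigl(\sum_{k=1}^K e^{x_i^\top w_k^\ast}\Bigr) = \log p(y_i \mid x_i; W^\ast)
\end{equation*}
is at most $0$, because it is the logarithm of a probability. Summing over $i$ and keeping the regularizer yields $L(W^\ast) \leq -\tfrac{\mu}{2}\|W^\ast\|_2^2$. Combining the two inequalities produces $-\tfrac{\mu}{2}\|W^\ast\|_2^2 \geq -N\log(K)$, and rearranging gives exactly $\|W^\ast\|_2^2 \leq \tfrac{2}{\mu} N\log(K) = B_W^2$.

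There is no real obstacle here: the argument is a standard ``compare to the zero iterate'' trick for ridge-regularized likelihoods, and both inequalities are elementary. The only thing to be careful about is citing the non-positivity of the log-probability correctly and being explicit that the inequality $L(W^\ast) \geq L(0)$ follows from optimality (not just from local stationarity), which is immediate since $W^\ast$ is a global maximizer of $L$.
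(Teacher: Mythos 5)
Your proof is correct and is essentially identical to the paper's one-line argument: compare $L(0) = -N\log K$ with $L(W^\ast)$, upper-bound $L(W^\ast)$ by $-\tfrac{\mu}{2}\|W^\ast\|_2^2$ using non-positivity of the log-probabilities, and rearrange. You have simply spelled out the two inequalities that the paper writes in a single chain.
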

\begin{proof}
\begin{align*}
-N\log(K) &= L(0) \leq L(W^\ast)\leq -\frac{\mu}{2}\|W^\ast\|_2^2
\end{align*}
Rearranging gives the desired result.
\end{proof}


\begin{lemma} \label{lemma:u_bound}The optimal value of $u_i$ is bounded as $u_i^\ast \leq B_u$ where $B_u = \log(1+(K-1)e^{2B_xB_w})$ and $B_x = \max_{i}\{\|x_i\|_2\}$
\end{lemma}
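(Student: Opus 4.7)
The plan is to start from the closed-form expression for the optimal $u_i$ derived in Section~\ref{sec:derivation_double_sum}, namely
\[
u_i^\ast(W) \;=\; \log\!\Bigl(1 + \sum_{k\neq y_i} e^{x_i^\top(w_k - w_{y_i})}\Bigr),
\]
evaluate it at $W = W^\ast$, and uniformly control each exponent $x_i^\top(w_k^\ast - w_{y_i}^\ast)$ by a quantity depending only on $B_x$ and $B_W$.

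First I would invoke Cauchy--Schwarz: $x_i^\top(w_k^\ast - w_{y_i}^\ast) \leq \|x_i\|_2 \, \|w_k^\ast - w_{y_i}^\ast\|_2 \leq \|x_i\|_2\bigl(\|w_k^\ast\|_2 + \|w_{y_i}^\ast\|_2\bigr)$. Next I would observe that, because $\|W\|_2$ is the Frobenius norm and the squared Frobenius norm is the sum of the squared column norms, every individual column satisfies $\|w_k^\ast\|_2 \leq \|W^\ast\|_2$. Combining this with Lemma~\ref{lemma:W_bound} gives $\|w_k^\ast\|_2 \leq B_W$ for every $k$, so each pairwise sum is at most $2B_W$. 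Using the definition $B_x = \max_i\|x_i\|_2$, we conclude that $x_i^\top(w_k^\ast - w_{y_i}^\ast) \leq 2 B_x B_W$ for every $i$ and every $k \neq y_i$.

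Finally, since $\log(1+\cdot)$ is monotone, plugging the uniform bound into the expression for $u_i^\ast(W^\ast)$ yields
\[
u_i^\ast \;\leq\; \log\!\Bigl(1 + \sum_{k\neq y_i} e^{2 B_x B_W}\Bigr) \;=\; \log\!\bigl(1 + (K-1) e^{2 B_x B_W}\bigr) \;=\; B_u,
\]
which is the claimed bound. There is no real obstacle here: the argument is essentially Cauchy--Schwarz plus the observation that a column of $W^\ast$ cannot exceed the Frobenius norm in length; the only mild subtlety is remembering that the bound $B_W$ from Lemma~\ref{lemma:W_bound} is on the full matrix norm, so it automatically dominates each $\|w_k^\ast\|_2$ rather than requiring a separate argument per column.
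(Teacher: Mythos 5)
Your proof is correct and takes essentially the same route as the paper: starting from the closed-form $u_i^\ast(W^\ast)$, applying Cauchy--Schwarz to each exponent, bounding each column norm by the Frobenius norm $B_W$ from Lemma~\ref{lemma:W_bound}, and using monotonicity of $\log(1+\cdot)$. You make explicit the step that a column norm is dominated by the Frobenius norm, which the paper leaves implicit, but the argument is otherwise identical.
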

\begin{proof}
\begin{align*}
u_i^\ast &= \log(1+\sum_{k\neq y_i}e^{x_i^\top (w_k - w_{y_i})})\\
&\leq \log(1+\sum_{k\neq y_i}e^{\|x_i\|_2(\|w_k\|_2 + \|w_{y_i}\|_2)})\\
&\leq \log(1+\sum_{k\neq y_i}e^{2B_xB_w})\\
&= \log(1+(K-1)e^{2B_xB_w})
\end{align*}
\end{proof}


\begin{lemma}\label{lemma:strong_cvx} If $\|W\|_2^2\leq B_W^2$ and $u_i \leq B_u$ then $f(u,W)$ is strongly convex with convexity constant greater than or equal to $\min\{\exp(-B_u),\mu\}$.
\end{lemma}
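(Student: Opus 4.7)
The plan is to decompose $f$ into three convex pieces whose Hessian contributions are easy to bound, and then argue that the whole Hessian is bounded below by $\min\{e^{-B_u},\mu\}I$ on the assumed box.

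First I would observe that $\sum_{k\neq y_i}\frac{1}{K-1}=1$, so the first term in $f$ telescopes and we can write $f=f_1+f_2+f_3$ where
\begin{equation*}
f_1(u)=\sum_{i=1}^N\bigl(u_i+e^{-u_i}\bigr),\qquad
f_2(u,W)=\sum_{i=1}^N\sum_{k\neq y_i}e^{x_i^\top(w_k-w_{y_i})-u_i},\qquad
f_3(W)=\tfrac{\mu}{2}\|W\|_2^2.
\end{equation*}
This splits the contributions into a term that is strongly convex only in $u$, a term that is strongly convex only in $W$, and a term that is convex jointly in $(u,W)$. The proof then reduces to computing PSD lower bounds on the three Hessians and adding them.

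Next I would compute each piece. For $f_1$, the Hessian is diagonal in the $u$-block with entries $e^{-u_i}\ge e^{-B_u}$ on the feasible set, and zero on the $W$-block, so $\nabla^2 f_1\succeq e^{-B_u} P_u$ where $P_u$ is the orthogonal projector onto the $u$-coordinates. For $f_3$, we immediately have $\nabla^2 f_3=\mu P_W$ where $P_W$ projects onto the $W$-coordinates. For $f_2$, each summand is an exponential of an affine function of $(u,W)$, hence convex, so $\nabla^2 f_2\succeq 0$. Adding the three Hessians yields
\begin{equation*}
\nabla^2 f\;\succeq\;e^{-B_u}P_u+\mu P_W\;\succeq\;\min\{e^{-B_u},\mu\}\,(P_u+P_W)\;=\;\min\{e^{-B_u},\mu\}\,I,
\end{equation*}
which is exactly the claimed strong convexity constant.

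The main subtlety, and the only place where care is needed, is to argue that a PSD bound which is only tight along $u$-directions (from $f_1$) plus one that is only tight along $W$-directions (from $f_3$) combines into a uniform bound on the full space. This works because the $u$- and $W$-coordinates are orthogonal and $P_u+P_W=I$, so the two lower bounds add without interaction; the indefinite cross-block entries of $\nabla^2 f_2$ are absorbed into the PSD slack. No analysis of $f_2$ beyond convexity is needed, and in particular we never use the upper bound $B_W$ on $\|W\|_2$ — it enters only later to keep other quantities (e.g.\ gradient norms) bounded.
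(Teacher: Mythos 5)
Your proof is correct and follows essentially the same route as the paper: both compute the Hessian of $f$ term by term, observe that the $e^{-u_i}$ terms contribute at least $e^{-B_u}$ on the $u$-block, the ridge term contributes $\mu$ on the $W$-block, and the $e^{x_i^\top(w_k-w_{y_i})-u_i}$ terms (exponentials of affine functions of $\theta$) are PSD, then take the min over the two blocks. Your explicit use of the projectors $P_u+P_W=I$ just makes transparent the step the paper states directly as $\nabla^2 f\succeq \min\{e^{-B_u},\mu\}I$, and your remark that $B_W$ is not actually needed here is accurate of the paper's argument as well.
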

\begin{proof}
Let us rewrite $f$ as
\begin{align*}
f(u, W) &= \sum_{i=1}^N u_i + e^{-u_i}+\sum_{k\neq y_i}e^{x_i^\top (w_k - w_{y_i})-u_i} + \frac{\mu}{2}\|W\|_2^2\\
&= \sum_{i=1}^N a_i^\top \theta + e^{-u_i}+\sum_{k\neq y_i}e^{b_{ik}^\top \theta} + \frac{\mu}{2}\|W\|_2^2.
\end{align*}
where $\theta = (u^\top,w_1^\top,...,w_k^\top)\in\mathbb{R}^{N+KD}$ with $a_i$ and $b_{ik}$ being appropriately defined. The Hessian of $f$ is
\begin{align*}
\nabla^2 f(\theta) &= \sum_{i=1}^N e^{-u_i}e_ie_i^\top+\sum_{k\neq y_i}e^{b_{ik}^\top \theta}b_{ik}b_{ik}^\top + \mu\cdot diag\{0_{N},1_{KD}\}
\end{align*}
where $e_i$ is the $i^{th}$ canonical basis vector, $0_{N}$ is an $N$-dimensional vector of zeros and $1_{KD}$ is a $KD$-dimensional vector of ones. It follows that
\begin{align*}
\nabla^2 f(\theta) &\succeq I \cdot \min\{\min_{0\leq u\leq B_u}\{e^{-u_i}\},\mu\}\\
&= I \cdot \min\{\exp(-B_u),\mu\}\\
&\succeq 0.
\end{align*}

\end{proof}


\begin{lemma}\label{lemma:bounded_gradients}
If $\|W\|_2^2\leq B_W^2$ and $u_i \leq B_u$ then the 2-norm of both the gradient of $f$ and each stochastic gradient $f_{ik}$ are bounded by
\begin{align*}
B_f&= N \max\{1,  e^{B_u} - 1\} + 2(N e^{B_u}B_x+ \mu \max_k\{\beta_k\}B_W).
\end{align*}

\end{lemma}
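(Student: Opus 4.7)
The plan is to bound the norm of each stochastic gradient $\nabla f_{ik}$ component-wise, using the expressions in \eqref{eq:vanilla_stoch_updates} together with the two assumptions: from $\|W\|_2^2 \leq B_W^2$ we get $\|w_k\|_2 \leq B_W$ for every $k$, and from $0 \leq u_i \leq B_u$ we control the exponential terms. The gradient of $f$ itself will then follow by Jensen's inequality since $\nabla f = \mathbb{E}_{ik}[\nabla f_{ik}]$.

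First I would observe two key numerical facts. Since $u_i \geq 0$, we have $e^{-u_i} \leq 1$, so $1 - e^{-u_i} \in [0,1]$. Second, by Cauchy--Schwarz and the $W$ bound, $x_i^\top(w_k - w_{y_i}) \leq 2 B_x B_W$, and combining with $u_i \geq 0$ yields
\begin{equation*}
(K-1)\, e^{x_i^\top(w_k - w_{y_i}) - u_i} \;\leq\; (K-1)\, e^{2 B_x B_W} \;=\; e^{B_u} - 1,
\end{equation*}
by the definition $B_u = \log(1 + (K-1)e^{2 B_x B_W})$ from Lemma~\ref{lemma:u_bound}.

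Next I would bound each block of $\nabla f_{ik}$ using \eqref{eq:vanilla_stoch_updates}. For the $u_i$ coordinate, $\nabla_{u_i} f_{ik}$ is the difference of two non-negative quantities, so $|\nabla_{u_i} f_{ik}| \leq N\max\{1 - e^{-u_i},\, (K-1)e^{x_i^\top(w_k - w_{y_i})-u_i}\} \leq N \max\{1,\, e^{B_u}-1\}$. For the $w_k$ block, the triangle inequality gives
\begin{equation*}
\|\nabla_{w_k} f_{ik}\|_2 \;\leq\; N(K-1) e^{x_i^\top(w_k - w_{y_i})-u_i}\|x_i\|_2 + \mu \beta_k \|w_k\|_2 \;\leq\; N(e^{B_u}-1) B_x + \mu \max_j\{\beta_j\} B_W,
\end{equation*}
and the identical bound holds for $\|\nabla_{w_{y_i}} f_{ik}\|_2$ by symmetry of the exponential term. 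All other $\nabla_{w_j} f_{ik}$ are zero, and $f_{ik}$ depends on $u$ only through $u_i$, so these are the only nonzero blocks.

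Finally I would assemble the pieces. Writing $\nabla f_{ik}$ in block form, the fact that $\|(a_1,\dots,a_r)\|_2 \leq \sum_\ell \|a_\ell\|_2$ for any partition into sub-vectors yields
\begin{equation*}
\|\nabla f_{ik}\|_2 \;\leq\; |\nabla_{u_i} f_{ik}| + \|\nabla_{w_k} f_{ik}\|_2 + \|\nabla_{w_{y_i}} f_{ik}\|_2 \;\leq\; N\max\{1, e^{B_u}-1\} + 2\bigl(N e^{B_u} B_x + \mu \max_j\{\beta_j\} B_W\bigr) = B_f,
\end{equation*}
where in the last step I use the loose upper bound $e^{B_u}-1 \leq e^{B_u}$ in the $B_x$ term to match the stated expression. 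Taking expectations and applying Jensen's inequality, $\|\nabla f(u,W)\|_2 = \|\mathbb{E}_{ik}[\nabla f_{ik}(u,W)]\|_2 \leq \mathbb{E}_{ik}[\|\nabla f_{ik}(u,W)\|_2] \leq B_f$, completing the proof. I do not anticipate any real obstacle: the only point requiring care is bounding $|\nabla_{u_i} f_{ik}|$ by the max (rather than the sum) of the two non-negative summands with opposite signs, which is needed to obtain the $\max\{1,\,e^{B_u}-1\}$ factor appearing in $B_f$.
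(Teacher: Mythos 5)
Your proof is correct and follows essentially the same route as the paper: bound each nonzero block ($u_i$, $w_k$, $w_{y_i}$) of $\nabla f_{ik}$ separately using $\|w_j\|_2\leq B_W$, $0\leq u_i\leq B_u$, and the identity $(K-1)e^{2B_xB_W}=e^{B_u}-1$, then sum the block norms and invoke Jensen for $\nabla f$. The only cosmetic difference is that you phrase the $u_i$-block bound as $|a-b|\leq\max\{a,b\}$ for $a,b\geq 0$ whereas the paper writes the same estimate as $|1-z|\leq\max\{1,z-1\}$ for $z\geq 0$; these are equivalent.
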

\begin{proof}
By Jensen's inequality
\begin{align*}
\max_{\|W\|_2^2\leq B_W^2, 0\leq u \leq B_u} \|\nabla f (u,W)\|_2 &= \max_{\|W\|_2^2\leq B_W^2, 0\leq u \leq B_u} \|\nabla \mathbb{E}_{ik} f_{ik}(u,W)\|_2\\
&\leq  \max_{\|W\|_2^2\leq B_W^2, 0\leq u \leq B_u} \mathbb{E}_{ik} \|\nabla f_{ik}(u,W)\|_2\\
&\leq \max_{\|W\|_2^2\leq B_W^2, 0\leq u \leq B_u}\max_{ik} \|\nabla f_{ik} (u,W)\|_2.
\end{align*}
Using the results from Lemmas~\ref{lemma:W_bound} and~\ref{lemma:u_bound} and the definition of $f_{ik}$ from~(\ref{eq:stochastic_f}),
\begin{align*}
\|\nabla_{u_i} f_{ik}(u,W)\|_2 &= \|N\left( 1 - e^{-u_i}-(K-1)e^{x_i^\top (w_k - w_{y_i})-u_i})\right)\|_2\\
&= N | 1 - e^{-u_i}(1+(K-1)e^{x_i^\top (w_k - w_{y_i})})|\\
&\leq N \max\{1,  (1+(K-1)e^{\|x_i\|_2 (\|w_k\|_2 + \|w_{y_i}\|_2)}) - 1\}\\
&\leq N \max\{1,  e^{B_u} - 1\}
\end{align*}
and for $j$ indexing either the sampled class $k\neq y_i$ or the true label $y_i$,
\begin{align*}
\|\nabla_{w_j} f_{ik}(u,W)\|_2 &= \|\pm N (K-1)e^{x_i^\top (w_k - w_{y_i})-u_i}x_i + \mu \beta_jw_j \|_2\\
&\leq N (K-1)e^{\|x_i\|_2 (\|w_k\|_2 + \|w_{y_i}\|_2)}\|x_i\|_2+ \mu \beta_j\|w_j \|_2\\
&\leq N e^{B_u}B_x+ \mu \max_k\{\beta_k\}B_W.
\end{align*}
Letting 
\begin{align*}
B_f &= N \max\{1,  e^{B_u} - 1\} + 2(N e^{B_u}B_x+ \mu \max_k\{\beta_k\}B_W)
\end{align*}
we have
\begin{align*}
\|\nabla f_{ik}(u,W)\|_2 &\leq  \|\nabla_{u_i} f_{ik}(u,W)\|_2 + \|\nabla_{w_k} f_{ik}(u,W)\|_2 + \|\nabla_{w_{y_i}} f_{ik}(u,W)\|_2= B_f.
\end{align*}
In conclusion:
\begin{align*}
\max_{\|W\|_2^2\leq B_W^2, 0\leq u \leq B_u} \|\nabla f (u,W)\|_2 \leq  \max_{\|W\|_2^2\leq B_W^2, u_i \leq B_u,}\max_{ik} \|\nabla f_{ik} (u,W)\|_2 \leq B_f.
\end{align*}
\end{proof}

\section{Stochastic Composition Optimization}\label{app:Mendi}
We can write the equation for $L(W)$ from~(\ref{eq:OVE_log_likelihood}) as (where we have set $\mu=0$ for notational simplicity),
\begin{align*}
L(W) &= -\sum_{i=1}^N \log(1 + \sum_{k\neq y_i}e^{x_i^\top (w_k-w_{y_i}}))\\
&= \mathbb{E}_i [h_i(\mathbb{E}_k[g_k(W)])]
\end{align*}
where $i\sim unif(\{1,...,N\})$, $k\sim unif(\{1,...,K\})$, $h_i(v)\in\mathbb{R}$, $g_k(W)\in\mathbb{R}^N$ and
\begin{align*}
h_i(v) &= -N \log(1 +e_i^\top v)\\
[g_k(W)]_i &= \begin{cases}
Ke^{x_i^\top (w_k-w_{y_i})} &\mbox{ if }k\neq y_i\\
0 &\mbox{ otherwise}
\end{cases}.
\end{align*}
Here $e_i^\top v = v_i\in\mathbb{R}$ is a variable that is explicitly kept track of with $v_i\approx \mathbb{E}_k[g_k(W)]_i = \sum_{k\neq y_i}e^{x_i^\top (w_k-w_{y_i}})$ (with exact equality in the limit as $t\to\infty$). Clearly $v_i$ in stochastic composition optimization has a similar role as $u_i$ has in our formulation for $f$ in~(\ref{eq:f}).

If $i,k$ are sampled  with $k\neq y_i$ in stochastic composition optimization then the updates are of the form \citep{wang2016accelerating}
\begin{align*}
w_{y_i} &= w_{y_i} + \eta_tNK\frac{ e^{x_i^\top (z_k - z_{y_i})}}{1+v_i}x_i\\
w_k &= w_k - \eta_tNK\frac{ e^{x_i^\top (z_k - z_{y_i})}}{1+v_i}x_i,
\end{align*}
where $z_k$ is a smoothed value of $w_k$. These updates have the same numerical instability issues as vanilla SGD on $f$ in~(\ref{eq:f}): it is possible that $\frac{e^{x_i^\top z_k}}{1+v_i} \gg 1$ where ideally we should have $0 \leq \frac{e^{x_i^\top z_k}}{1+v_i} \leq 1$.

\section{Proof of general Implicit SGD gradient bound}\label{app:general_implicit_step_bound}
\begin{proof}[Proof of Proposition~\ref{prop:step_size}]
Let $f(\theta, \xi)$ be $m$-strongly convex for all~$\xi$. The vanilla SGD step size is $\eta_t\|\nabla f(\theta^{(t)}, \xi_t)\|_2$ where $\eta_t$ is the learning rate for the $t^{th}$ iteration. The Implicit SGD step size is $\eta_t\|\nabla f(\theta^{(t+1)}, \xi_t)\|_2$ where $\theta^{(t+1)}$ satisfies $\theta^{(t+1)} = \theta^{(t)} - \eta_t\nabla f(\theta^{(t+1)}, \xi_t)$. Rearranging, $\nabla f(\theta^{(t+1)}, \xi_t) = (\theta^{(t)} - \theta^{(t+1)})/\eta_t$ and so it must be the case that $\nabla f(\theta^{(t+1)}, \xi_t)^\top (\theta^{(t)} - \theta^{(t+1)}) = \|\nabla f(\theta^{(t+1)}, \xi_t)\|_2 \|\theta^{(t)} - \theta^{(t+1)}\|_2$.


Our desired result follows:
\begin{align*}
\|\nabla f(\theta^{(t)}, \xi_t)\|_2 &\geq \frac{\nabla f(\theta^{(t)})^\top (\theta^{(t)} - \theta^{(t+1)})}{\|\theta^{(t)} - \theta^{(t+1)}\|_2}\\
&\geq \frac{\nabla f(\theta^{(t+1)})^\top (\theta^{(t)} - \theta^{(t+1)})+m\|\theta^{(t)} - \theta^{(t+1)}\|_2^2}{\|\theta^{(t)} - \theta^{(t+1)}\|_2}\\
&= \frac{\|\nabla f(\theta^{(t+1)})\|_2 \|\theta^{(t)} - \theta^{(t+1)}\|_2+m\|\theta^{(t)} - \theta^{(t+1)}\|_2^2}{\|\theta^{(t)} - \theta^{(t+1)}\|_2}\\
&= \|\nabla f(\theta^{(t+1)})\|_2 +m\|\theta^{(t)} - \theta^{(t+1)}\|_2
\end{align*}
where the first inequality is by Cauchy-Schwarz and the second inequality by strong convexity.

\end{proof}

\section{Update equations for Implicit SGD}\label{app:Implicit_SGD}
In this section we will derive the updates for Implicit SGD. We will first consider the simplest case where only one datapoint $(x_i,y_i)$ and a single class is sampled in each iteration. Later we will derive the updates for when multiple classes are sampled, and finally when both multiple classes and multiple datapoints are sampled.

\subsection{Single datapoint, single class}\label{app:Implicit_SGD_simpled}
Equation~(\ref{eq:stochastic_f}) for the stochastic gradient with a single datapoint, single sampled class is
\begin{align*}
f_{ik}(u,W) & = N(u_i + e^{-u_i}+(K-1)e^{x_i^\top (w_k - w_{y_i})-u_i}) + \frac{\mu}{2}(\beta_{y_i} \|w_{y_i}\|_2^2+  \beta_k \|w_k\|_2^2).
\end{align*}
The Implicit SGD update corresponds to finding the variables optimizing
\begin{align*}
 \min_{u,W}\left\{ 2\eta f_{ik}(u,W) + \|u - \tilde{u}\|_2^2+\|W - \tilde{W}\|_2^2 \right\},
\end{align*}
where $\eta$ is the learning rate and the tilde refers to the value of the old iterate \citep[Eq. 6]{toulis2016towards}. Since $f_{ik}$ is only a function of $u_i, w_k, w_{y_i}$, we have the optimal $w_j=\tilde{w}_j$ for $j\notin\{k,y_i\}$ and $u_j=\tilde{u}_j$ for $j\neq i$. The optimization reduces to 
\begin{align*}
&\min_{u_i, w_k, w_{y_i}}\left\{2\eta f_{ik}(u_i,w_k, w_{y_i}) + (u_i - \tilde{u}_i)^2+\|w_{y_i} - \tilde{w}_{y_i}\|_2^2+\|w_k - \tilde{w}_k\|_2^2\right\}  \\
&= \min_{u_i, w_k, w_{y_i}}\bigg\{2\eta N (u_i + e^{-u_i}+(K-1)e^{x_i^\top (w_k - w_{y_i})-u_i}) + \eta\mu(\beta_{y_i} \|w_{y_i}\|_2^2+  \beta_k \|w_k\|_2^2)\\
&\qquad\qquad\qquad + (u_i - \tilde{u}_i)^2+\|w_{y_i} - \tilde{w}_{y_i}\|_2^2+\|w_k - \tilde{w}_k\|_2^2\bigg\}.
\end{align*}

\noindent\textbf{Solving for $w_k, w_{y_i}$ with auxiliary variable $b$}\\
Much of the difficulty in optimizing this equation comes from the interaction between the $e^{x_i^\top (w_k - w_{y_i})-u_i}$ term and the $\|\cdot\|_2^2$ terms. To isolate this interaction we introduce an auxiliary variable $b = x_i^\top (w_k - w_{y_i})$ and rewrite the optimization problem as
\begin{align*}
&\min_{u_i, b}\bigg\{2\eta N (u_i + e^{-u_i}+(K-1)e^{b-u_i}) + (u_i - \tilde{u}_i)^2 \\
&\qquad\quad + \min_{w_k, w_{y_i}}\left\{\eta\mu(\beta_{y_i} \|w_{y_i}\|_2^2+  \beta_k \|w_k\|_2^2)+\|w_{y_i} - \tilde{w}_{y_i}\|_2^2+\|w_k - \tilde{w}_k\|_2^2:~b = x_i^\top (w_k - w_{y_i})\right\}\bigg\}.
\end{align*}
The inner optimization problem over $w_k, w_{y_i}$ is a quadratic program with linear constraints. Taking the dual and solving yields
\begin{align}
w_k&= \frac{\tilde{w}_k}{1+\eta\mu\beta_k} - \gamma_i\frac{x_i^\top(\frac{\tilde{w}_k}{1+\eta\mu\beta_k} - \frac{\tilde{w}_{y_i}}{1+\eta\mu\beta_{y_i}}) - b}{1+\eta\mu\beta_k} x_i \nonumber\\ 
w_{y_i} &= \frac{\tilde{w}_{y_i}}{1+\eta\mu\beta_{y_i}} + \gamma_i\frac{x_i^\top(\frac{\tilde{w}_k}{1+\eta\mu\beta_k} - \frac{\tilde{w}_{y_i}}{1+\eta\mu\beta_{y_i}}) - b}{1+\eta\mu\beta_{y_i}} x_i \label{eq:implicit_single_class_w_update}
\end{align}
where 
\begin{equation}\label{eq:gamma}
\gamma_i = \frac{1}{\|x_i\|_2^2((1+\eta\mu\beta_k)^{-1} + (1+\eta\mu\beta_{y_i})^{-1})}
\end{equation}
Substituting in the solution for $w_k, w_{y_i}$ and dropping constant terms, the optimization problem reduces to 
\begin{equation}
\min_{u_i, b}\bigg\{2\eta N (u_i + e^{-u_i}+(K-1)e^{b-u_i}) + (u_i - \tilde{u}_i)^2 + \left(b - x_i^\top(\frac{\tilde{w}_k}{1+\eta\mu\beta_k} - \frac{\tilde{w}_{y_i}}{1+\eta\mu\beta_{y_i}})\right)^2 \gamma_i\bigg\}. \label{eq:implicit_sgd_single_class}
\end{equation}

We'll approach this optimization problem by first solving for $b$ as a function of $u_i$ and then optimize over $u_i$. Once the optimal value of $u_i$ has been found, we can calculate the corresponding optimal value of $b$. Finally, substituting $b$ into~(\ref{eq:implicit_single_class_w_update}) will give us our updated value of $W$.\\

\noindent\textbf{Solving for $b$}\\
We solve for $b$ by setting its derivative equal to zero in~(\ref{eq:implicit_sgd_single_class})
\begin{equation*}
0 = 2\eta N (K-1)e^{b-u_i} +  2\left(b - x_i^\top(\frac{\tilde{w}_k}{1+\eta\mu\beta_k} - \frac{\tilde{w}_{y_i}}{1+\eta\mu\beta_{y_i}})\right) \gamma_i.
\end{equation*}
Letting $a = x_i^\top(\frac{\tilde{w}_k}{1+\eta\mu\beta_k} - \frac{\tilde{w}_{y_i}}{1+\eta\mu\beta_{y_i}}) - b$ and using simple algebra yields
\begin{equation}
ae^{a}=   \eta N(K-1)\gamma_i^{-1}e^{x_i^\top(\frac{\tilde{w}_k}{1+\eta\mu\beta_k} - \frac{\tilde{w}_{y_i}}{1+\eta\mu\beta_{y_i}})-u_i}.\label{eq:a}
\end{equation}
The solution for $a$ can be written in terms of the principle branch of the Lambert-W function, $P$,
\begin{equation}
a(u_i) = P\left(\eta N(K-1)\gamma_i^{-1}e^{x_i^\top(\frac{\tilde{w}_k}{1+\eta\mu\beta_k} - \frac{\tilde{w}_{y_i}}{1+\eta\mu\beta_{y_i}})-u_i}\right).\label{eq:a_u_function}
\end{equation}
The optimal value of $b$ given $u_i$ is therefore $b(u_i) = x_i^\top(\frac{\tilde{w}_k}{1+\eta\mu\beta_k} - \frac{\tilde{w}_{y_i}}{1+\eta\mu\beta_{y_i}}) - a(u_i)$.\\

\noindent\textbf{Bisection method for $u_i$}\\
Substituting $b(u_i) = x_i^\top(\frac{\tilde{w}_k}{1+\eta\mu\beta_k} - \frac{\tilde{w}_{y_i}}{1+\eta\mu\beta_{y_i}}) - a(u_i)$ into~(\ref{eq:implicit_sgd_single_class}), we now only need minimize over $u_i$:
\begin{align}
&\min_{u_i}\bigg\{2\eta N (u_i + e^{-u_i}+(K-1)e^{x_i^\top(\frac{\tilde{w}_k}{1+\eta\mu\beta_k} - \frac{\tilde{w}_{y_i}}{1+\eta\mu\beta_{y_i}}) - a(u_i)-u_i}) + (u_i - \tilde{u}_i)^2 + a(u_i)^2\gamma_i\bigg\}\label{eq:min_with_a}\\
&=\min_{u_i}\bigg\{2\eta N u_i + 2\eta Ne^{-u_i}+2\gamma_i a(u_i) + (u_i - \tilde{u}_i)^2 + a(u_i)^2\gamma_i \bigg\}\nonumber\\
&=\min_{u_i}\bigg\{2\eta N u_i + 2\eta Ne^{-u_i} + (u_i - \tilde{u}_i)^2+\gamma_i a(u_i)(2+ a(u_i))\bigg\}\label{eq:implicit_sgd_single_class_u}
\end{align}
where and we used the fact that $e^{-P(z)}=P(z)/z$ to simplify the $(K-1)e^{x_i^\top(\frac{\tilde{w}_k}{1+\eta\mu\beta_k} - \frac{\tilde{w}_{y_i}}{1+\eta\mu\beta_{y_i}}) - a(u_i)-u_i}$ term. The derivative in~(\ref{eq:implicit_sgd_single_class_u}) with respect to $u_i$ is 
\begin{align}
& \partial_{u_i}\bigg\{ 2\eta N u_i + 2\eta Ne^{-u_i} + (u_i - \tilde{u}_i)^2+\gamma_i a(u_i)(2+ a(u_i)) \bigg\}\nonumber\\
&= 2\eta N - 2\eta Ne^{-u_i}+ 2(u_i - \tilde{u}_i)+2 \gamma_i (1+ a(u_i))\partial_{u_i} a(u_i)\nonumber\\
&= 2\eta N - 2\eta Ne^{-u_i}+ 2(u_i - \tilde{u}_i)-2 \gamma_i a(u_i) \label{eq:u_derivative}
\end{align}
where we used the fact that $\partial_z P(z) = \frac{P(z)}{z(1+P(z))}$ to work out that $\partial_{u_i} a(u_i) = -\frac{a(u_i)}{1+a(u_i)}$.

We can solve for $u_i$ using a bisection method. Below we show how to calculate the initial lower and upper bounds of the bisection interval and prove that the size of the interval is bounded (which ensures fast convergence). The initial lower and upper bounds we use depends on the derivative in~(\ref{eq:u_derivative}) at $u_i = \tilde{u}_i$. In deriving the bounds we will use $u_i'$ to denote the optimal value of $u_i$ and $a'$ to denote the optimal value of $a$.\\

\noindent\textbf{Case: $u_i' > \tilde{u}_i$}\\
If the derivative is negative then $u_i'$ is lower bounded by $\tilde{u}_i$. An upper bound on $u_i'$ can be derived from~(\ref{eq:min_with_a}):
\begin{align*}
u_i' &= \argmin_{u_i}\bigg\{2\eta N (u_i + e^{-u_i}+(K-1)e^{x_i^\top(\frac{\tilde{w}_k}{1+\eta\mu\beta_k} - \frac{\tilde{w}_{y_i}}{1+\eta\mu\beta_{y_i}}) - a'-u_i}) + (u_i - \tilde{u}_i)^2 \bigg\}\\
&\leq \argmin_{u_i}\bigg\{2\eta N (u_i + e^{-u_i}+(K-1)e^{x_i^\top(\frac{\tilde{w}_k}{1+\eta\mu\beta_k} - \frac{\tilde{w}_{y_i}}{1+\eta\mu\beta_{y_i}}) -u_i}) + (u_i - \tilde{u}_i)^2\bigg\}\\
&= \tilde{u}_i + P (\eta Ne^{\eta N-\tilde{u}_i}(1+(K-1)e^{x_i^\top(\frac{\tilde{w}_k}{1+\eta\mu\beta_k} - \frac{\tilde{w}_{y_i}}{1+\eta\mu\beta_{y_i}})})) - \eta N
\end{align*}
where in the inequality we set $a'=0$, since the minimal value of $u_i$ is monotonically decreasing in $a'$. This bound should be used in the bisection method, but for ease of analysis we can weaken the bound:
\begin{align*}
u_i' &\leq \argmin_{u_i}\bigg\{2\eta N (u_i + e^{-u_i}+(K-1)e^{x_i^\top(\frac{\tilde{w}_k}{1+\eta\mu\beta_k} - \frac{\tilde{w}_{y_i}}{1+\eta\mu\beta_{y_i}})}e^{-u_i})\bigg\}\\
&= \log(1+(K-1)e^{x_i^\top(\frac{\tilde{w}_k}{1+\eta\mu\beta_k} - \frac{\tilde{w}_{y_i}}{1+\eta\mu\beta_{y_i}})}).
\end{align*}
where we used the assumption that $u_i'$ is lower bounded by~$\tilde{u}_i$ to remove the $(u_i - \tilde{u}_i)^2$ term. 
Thus $\tilde{u}_i\leq u_i'\leq \log(1+(K-1)e^{x_i^\top(\frac{\tilde{w}_k}{1+\eta\mu\beta_k} - \frac{\tilde{w}_{y_i}}{1+\eta\mu\beta_{y_i}})})$. If $(K-1)e^{x_i^\top(\frac{\tilde{w}_k}{1+\eta\mu\beta_k} - \frac{\tilde{w}_{y_i}}{1+\eta\mu\beta_{y_i}})}\leq 1$ then the size of the bounding interval must be less than $\log(2)$, since $\tilde{u}_i\geq 0$. Otherwise the gap must be at most $\log(2(K-1)e^{x_i^\top(\frac{\tilde{w}_k}{1+\eta\mu\beta_k} - \frac{\tilde{w}_{y_i}}{1+\eta\mu\beta_{y_i}})}) - \tilde{u}_i = \log(2(K-1)) + x_i^\top(\frac{\tilde{w}_k}{1+\eta\mu\beta_k} - \frac{\tilde{w}_{y_i}}{1+\eta\mu\beta_{y_i}}) - \tilde{u}_i$. Either way, the size of the interval is upper bounded by $\log(2K) + |x_i^\top(\frac{\tilde{w}_k}{1+\eta\mu\beta_k} - \frac{\tilde{w}_{y_i}}{1+\eta\mu\beta_{y_i}}) - \tilde{u}_i|$.\\

\noindent\textbf{Case: $u_i' < \tilde{u}_i$}\\
Now let us consider if the derivative in~(\ref{eq:u_derivative}) is positive at $u_i = \tilde{u}_i$. Then $u_i'$ is upper bounded by $\tilde{u}_i$. We can lower bound $u_i'$ by:
\begin{align}
u_i' &= \argmin_{u_i}\bigg\{2\eta N (u_i + e^{-u_i}+(K-1)e^{x_i^\top(\frac{\tilde{w}_k}{1+\eta\mu\beta_k} - \frac{\tilde{w}_{y_i}}{1+\eta\mu\beta_{y_i}})}e^{-a'-u_i}) + (u_i - \tilde{u}_i)^2\bigg\}\label{eq:u_lower_bound_strong}\\
&\geq \argmin_{u_i}\bigg\{u_i + e^{-u_i}+(K-1)e^{x_i^\top(\frac{\tilde{w}_k}{1+\eta\mu\beta_k} - \frac{\tilde{w}_{y_i}}{1+\eta\mu\beta_{y_i}})}e^{-a'-u_i}\bigg\}\nonumber\\
&= \log(1+(K-1)\exp(x_i^\top(\frac{\tilde{w}_k}{1+\eta\mu\beta_k} - \frac{\tilde{w}_{y_i}}{1+\eta\mu\beta_{y_i}})-a'))\nonumber\\
&\geq \log(K-1) + x_i^\top(\frac{\tilde{w}_k}{1+\eta\mu\beta_k} - \frac{\tilde{w}_{y_i}}{1+\eta\mu\beta_{y_i}})-a'\label{eq:u_lower_bound}
\end{align}
where the first inequality comes dropping the $(u_i - \tilde{u}_i)^2$ term due to the assumption that $u_i' < \tilde{u}_i$ and the second inequality is from the monotonicity of the log function. Recall~(\ref{eq:a}),
\begin{equation*}
a'e^{a'}=   \eta N(K-1)\gamma_i^{-1}e^{x_i^\top(\frac{\tilde{w}_k}{1+\eta\mu\beta_k} - \frac{\tilde{w}_{y_i}}{1+\eta\mu\beta_{y_i}})-u_i'}.
\end{equation*}
We can upper bound $a'$ using the lower bound on $u'$:
\begin{align}
a' &= e^{-a'}a'e^{a'}\nonumber\\
&= e^{-a'}\eta N(K-1)\gamma_i^{-1}e^{x_i^\top(\frac{\tilde{w}_k}{1+\eta\mu\beta_k} - \frac{\tilde{w}_{y_i}}{1+\eta\mu\beta_{y_i}})-u_i'}\nonumber\\
&\leq e^{-a'}\eta N(K-1)\gamma_i^{-1}e^{x_i^\top(\frac{\tilde{w}_k}{1+\eta\mu\beta_k} - \frac{\tilde{w}_{y_i}}{1+\eta\mu\beta_{y_i}})-(\log(K-1) + x_i^\top(\frac{\tilde{w}_k}{1+\eta\mu\beta_k} - \frac{\tilde{w}_{y_i}}{1+\eta\mu\beta_{y_i}})-a')}\nonumber\\
&= \eta N\gamma_i^{-1}\label{eq:a_dash_bound}
\end{align}
Substituting this upper bound for $a'$ into~(\ref{eq:u_lower_bound_strong}) and solving yields a lower bound on $u_i'$,
\begin{align*}
u_i' &\geq  \tilde{u}_i + P (\eta Ne^{\eta N-\tilde{u}_i}(1+(K-1)e^{x_i^\top(\frac{\tilde{w}_k}{1+\eta\mu\beta_k} - \frac{\tilde{w}_{y_i}}{1+\eta\mu\beta_{y_i}})-\eta N\gamma_i^{-1}})) - \eta N .
\end{align*}

Again this bound should be used in the bisection method, but for ease of analysis we can weaken the bound by instead substituting the bound for $a'$ into~(\ref{eq:u_lower_bound}) which yields:
\begin{align*}
u_i' &\geq  \log(K-1) + x_i^\top(\frac{\tilde{w}_k}{1+\eta\mu\beta_k} - \frac{\tilde{w}_{y_i}}{1+\eta\mu\beta_{y_i}})-\eta N\gamma_i^{-1}.
\end{align*}
Thus $\log(K-1) + x_i^\top(\frac{\tilde{w}_k}{1+\eta\mu\beta_k} - \frac{\tilde{w}_{y_i}}{1+\eta\mu\beta_{y_i}})- \eta N\gamma_i^{-1} \leq u_i'\leq \tilde{u}_i$. The size of the bisection method interval is upper bounded by $\tilde{u}_i - x_i^\top(\frac{\tilde{w}_k}{1+\eta\mu\beta_k} - \frac{\tilde{w}_{y_i}}{1+\eta\mu\beta_{y_i}})+\eta N\gamma_i^{-1}-\log(K-1) $.

In summary, for both signs of the derivative in~(\ref{eq:u_derivative}) at $u_i = \tilde{u}_i$ we are able to lower and upper bound the optimal value of $u_i$ such that interval between the bounds is at most $|\tilde{u}_i - x_i^\top(\frac{\tilde{w}_k}{1+\eta\mu\beta_k} - \frac{\tilde{w}_{y_i}}{1+\eta\mu\beta_{y_i}})|+\eta N\gamma_i^{-1}+\log(2K)$. This allows us to perform the bisection method where for $\epsilon>0$ level accuracy we require only $\log_2(\epsilon^{-1}) + \log_2(|\tilde{u}_i - x_i^\top(\frac{\tilde{w}_k}{1+\eta\mu\beta_k} - \frac{\tilde{w}_{y_i}}{1+\eta\mu\beta_{y_i}})| + \eta N\gamma_i^{-1}+\log(2K))$ function evaluations. In practice we use Brent's method as the optimization routine, which is faster than the simple bisection method. The pseudocode of the entire method is displayed in Algorithm~\ref{alg:isgd}.

\begin{algorithm}[t]
   \caption{Implicit SGD with one datapoint and class sampled each iteration}
   \label{alg:isgd}
\begin{algorithmic}
      \STATE {\bfseries Input:} Data $\mathcal{D} = \{(y_i, x_i)\}_{i=1}^N$, number of iterations~$T$, learning rate $\eta_t$, threshold $\delta>0$, regularization constants $\beta$ and $\gamma$ from (\ref{eq:gamma}), principle Lambert-W function $P$, initial $u,W$.
   \STATE {\bfseries Ouput:} $W$
   \vspace{0.5cm}

   \FOR{$t=1$ {\bfseries to} $T$}
   \STATE  \texttt{Sample datapoint and classes}
   \STATE $i \sim unif(\{1,...,N\})$
   \STATE $k \sim unif(\{1,...,K\}-\{y_i\})$
   
   \vspace{0.5cm}
   \STATE   \texttt{Calculate gradient at $u_i=\tilde{u}_i$}
   \STATE $g \gets 2\eta_t N - 2\eta_t Ne^{-u_i} -2 \gamma_i P(\eta_t N(K-1)\gamma_i^{-1}e^{x_i^\top(\frac{w_k}{1+\eta_t\mu\beta_k} - \frac{w_{y_i}}{1+\eta_t\mu\beta_{y_i}})-u_i}) $
   
   \vspace{0.5cm}
   \STATE   \texttt{Calculate lower and upper bounds on $u_i$}
   \IF{$g <0$}
   \STATE $(b_l, b_u) \gets (u_i, ~u_i+ P (\eta_t Ne^{\eta_t N-u_i}(1+(K-1)e^{x_i^\top(\frac{w_k}{1+\eta_t\mu\beta_k} - \frac{w_{y_i}}{1+\eta_t\mu\beta_{y_i}})})) - \eta_t N)$
   \ELSIF{$g > 0$}
   \STATE $(b_l, b_u) \gets (u_i + P (\eta_t Ne^{\eta_t N-u_i}(1+(K-1)e^{x_i^\top(\frac{\tilde{w}_k}{1+\eta_t\mu\beta_k} - \frac{\tilde{w}_{y_i}}{1+\eta_t\mu\beta_{y_i}})-\eta_t N\gamma_i^{-1}})) - \eta_t N, ~u_i) $
   \ELSIF{$g = 0$}
   \STATE $(b_l, b_u) \gets (u_i, ~u_i) $
   \ENDIF
   
   \vspace{0.5cm}
   \STATE   \texttt{Optimize $u_i$ using Brent's method with bounds $b_l,b_u$ and gradient $g(u)$}
   \STATE $u_i \gets \texttt{Brents}(b_l, b_u, g(u) = 2\eta_t N - 2\eta_t Ne^{-u}+ 2(u - u_i)-2 \gamma_i P(\eta_t N(K-1)\gamma_i^{-1}e^{x_i^\top(\frac{w_k}{1+\eta_t\mu\beta_k} - \frac{w_{y_i}}{1+\eta_t\mu\beta_{y_i}})-u})) $

   \vspace{0.5cm}
   \STATE  \texttt{Update $w$}
    \STATE $w_{k_j} \gets \frac{w_k}{1+\eta_t\mu\beta_k} - \gamma_i\frac{P(\eta_t N(K-1)\gamma_i^{-1}e^{x_i^\top(\frac{\tilde{w}_k}{1+\eta_t\mu\beta_k} - \frac{\tilde{w}_{y_i}}{1+\eta_t\mu\beta_{y_i}})-u_i})}{1+\eta_t\mu\beta_k} x_i$
    \STATE $w_{y_i} \gets \frac{w_{y_i}}{1+\eta_t\mu\beta_{y_i}} + \gamma_i\frac{P(\eta_t N(K-1)\gamma_i^{-1}e^{x_i^\top(\frac{\tilde{w}_k}{1+\eta_t\mu\beta_k} - \frac{\tilde{w}_{y_i}}{1+\eta_t\mu\beta_{y_i}})-u_i})}{1+\eta_t\mu\beta_{y_i}} x_i$
   \ENDFOR
\end{algorithmic}
\end{algorithm}

\subsection{Bound on step size}\label{app:Implicit_SGD_step_size_bound}
Here we will prove that the step size magnitude of Implicit SGD with a single datapoint and sampled class with respect to $w$ is bounded as ${O(x_i^\top(\frac{\tilde{w}_k}{1+\eta\mu\beta_k} - \frac{\tilde{w}_{y_i}}{1+\eta\mu\beta_{y_i}})-\tilde{u}_i)}$. We will do so by considering the two cases $u_i' > \tilde{u}_i$ and $u_i' < \tilde{u}_i$ separately, where $u_i'$ denotes the optimal value of $u_i$ in the Implicit SGD update and $\tilde{u}_i$ is its value at the previous iterate.\\

\noindent\textbf{Case: $u_i' > \tilde{u}_i$}\\
Let $a'$ denote the optimal value of $a$ in the Implicit SGD update. From~(\ref{eq:a_u_function})
\begin{align*}
a' &= P\left(\eta N(K-1)\gamma_i^{-1}e^{x_i^\top(\frac{\tilde{w}_k}{1+\eta\mu\beta_k} - \frac{\tilde{w}_{y_i}}{1+\eta\mu\beta_{y_i}})-u_i'}\right)\\
&\leq  P\left(\eta N(K-1)\gamma_i^{-1}e^{x_i^\top(\frac{\tilde{w}_k}{1+\eta\mu\beta_k} - \frac{\tilde{w}_{y_i}}{1+\eta\mu\beta_{y_i}})-\tilde{u}_i}\right)
\end{align*}
where $u_i'$ is replace by $\tilde{u}_i$ and we have used the monotonicity of the Lambert-W function $P$.
Now using the fact that $P(z) = O(\log (z))$,
\begin{align*}
a' &= O(x_i^\top(\frac{\tilde{w}_k}{1+\eta\mu\beta_k} - \frac{\tilde{w}_{y_i}}{1+\eta\mu\beta_{y_i}})-\tilde{u}_i + \log(\eta N (K-1)\gamma_i^{-1}))\\
&= O(x_i^\top(\frac{\tilde{w}_k}{1+\eta\mu\beta_k} - \frac{\tilde{w}_{y_i}}{1+\eta\mu\beta_{y_i}})-\tilde{u}_i )
\end{align*}

\noindent\textbf{Case: $u_i' < \tilde{u}_i$}\\
If $u_i' < \tilde{u}_i$ then we can lower bound $a'$ from~(\ref{eq:a_dash_bound}) as $a' \leq  \eta N\gamma_i^{-1}$.\\

\noindent\textbf{Combining cases}\\
Putting together the two cases, 
\begin{align*}
a' &= O(\max\{x_i^\top(\frac{\tilde{w}_k}{1+\eta\mu\beta_k} - \frac{\tilde{w}_{y_i}}{1+\eta\mu\beta_{y_i}})-\tilde{u}_i, \,\eta N\gamma_i^{-1}\}) \\
&= O(x_i^\top(\frac{\tilde{w}_k}{1+\eta\mu\beta_k} - \frac{\tilde{w}_{y_i}}{1+\eta\mu\beta_{y_i}})-\tilde{u}_i).
\end{align*}
From (\ref{eq:implicit_single_class_w_update}) we know that the step size magnitude is proportional to $a'$. Thus the step size magnitude is also $ O(x_i^\top(\frac{\tilde{w}_k}{1+\eta\mu\beta_k} - \frac{\tilde{w}_{y_i}}{1+\eta\mu\beta_{y_i}})-\tilde{u}_i)$.

\subsection{Single datapoint, multiple classes.}\label{app:Implicit_SGD_single_multiple}
Consider the case where only one datapoint $i$, but multiple classes $\{k_j:k_y\neq y_i\}_{j=1}^m$ are sampled each iteration. Like in Appendix~\ref{app:Implicit_SGD_simpled}, we will be able to reduce the implicit update to a one-dimensional strongly-convex optimization problem. The resulting problem may be solved using any standard convex optimization method, such as Newton's method. We do not derive upper and lower bounds for a bisection method as we did in Appendix~\ref{app:Implicit_SGD_simpled}.

Let us rewrite the double-sum formulation from (\ref{eq:f}) as $f(u,  W) = \mathbb{E}_{i,C_i}[f_{i,C_i}(u,  W)]$ where $i$ is a uniformly sampled datapoint, $C_i$ is a set of $m$ uniformly sampled classes from $\{1,...,K\}-\{y_i\}$ (without replacement) and 
\begin{equation*}
f_{i,C_i}(u,  W) = N( u_i +
  e^{-u_i}+\alpha\sum_{k\in C_i}e^{x_i^\top (w_k - w_{y_i})-u_i}) 
   +\frac{\mu}{2}\sum_{k\in C_i\cup\{y_i\}}\beta_k \|w_k\|_2^2,
\end{equation*}
where $\alpha^{-1} = P(k\in C_i| k\neq y_i) = 1-\prod_{j=1}^m(1-\frac{1}{K-j})$,
\begin{align*}
\beta_k^{-1} &= P(k\in C_i\cup\{y_i\})\\
&= P(k= y_i) + P(k\in C_i| k\neq y_i)P(k\neq y_i)\\
&= \frac{n_k + \alpha^{-1}(N-n_k)}{N}
\end{align*}
and $n_k= |\{i:y_i=k, i=1,...,N\}|$. Using the same derivation as in Appendix~\ref{app:Implicit_SGD_simpled}, the implicit SGD update is
\begin{align}
\min_{u_i,~\{w_k\}_{k\in C_i\cup\{y_i\}}} &2\eta\left(N( u_i +
  e^{-u_i}+\alpha\sum_{k\in C_i}e^{x_i^\top (w_k - w_{y_i})-u_i}) 
   +\frac{\mu}{2}\sum_{k\in C_i\cup\{y_i\}}\beta_k \|w_k\|_2^2\right) \nonumber\\
   &+ (u_i - \tilde{u}_i)^2 +  \sum_{k\in C_i\cup\{y_i\}}\|w_k-\tilde{w}_k\|_2^2.
\end{align}
The goal is to simplify this multivariate minimization problem into a one-dimensional strongly convex minimization problem.
The first trick we will use is to reparameterize $u_i = v_i -x_i^\top w_{y_i}$ for some $v_i\in\mathbb{R}$. This changes the $e^{x_i^\top (w_k - w_{y_i})-u_i}$ factors to $e^{x_i^\top w_k -v_i}$, decoupling $w_k$ and $w_{y_i}$, which will make the optimization easier. The problem becomes:
\begin{align}
\min_{v_i,~\{w_k\}_{k\in C_i\cup\{y_i\}}} &2\eta\left(N( v_i -x_i^\top w_{y_i} +
  e^{x_i^\top w_{y_i}-v_i}+\alpha\sum_{k\in C_i}e^{x_i^\top w_k -v_i}) 
   +\frac{\mu}{2}\sum_{k\in C_i\cup\{y_i\}}\beta_k \|w_k\|_2^2\right) \nonumber\\
   &+ (v_i -x_i^\top w_{y_i} - \tilde{u}_i)^2 +  \sum_{k\in C_i\cup\{y_i\}}\|w_k-\tilde{w}_k\|_2^2.\label{eq:v_first_min}
\end{align}
Since $v_i = u_i+x_i^\top w_{y_i}$ is a linear transformation, (\ref{eq:v_first_min}) is jointly strongly convex in $v_i$ and $\{w_k\}_{k\in C_i\cup\{y_i\}}$. 
Bringing the $w_k$ minimizations inside yields
\begin{align}
\min_{v_i}~ &2\eta Nv_i \nonumber\\
& +\min_{w_{y_i}}\left\{-2\eta Nx_i^\top w_{y_i} +
  2\eta Ne^{x_i^\top w_{y_i}-v_i} + \eta\mu\beta_{y_i} \|w_{y_i}\|_2^2 + (v_i -x_i^\top w_{y_i} - \tilde{u}_i)^2 +  \|w_{y_i}-\tilde{w}_{y_i}\|_2^2 \right\}\nonumber\\
  &+\sum_{k\in C_i} \min_{w_k}\left\{2\eta N\alpha e^{x_i^\top w_k -v_i}
   +\eta\mu\beta_k \|w_k\|_2^2  +  \|w_k-\tilde{w}_k\|_2^2\right\}\label{eq:still_w_k}.
\end{align}
In Appendix~\ref{app:Implicit_SGD_simpled} we were able to reduce the dimensionality of the problem by introducing an auxiliary variable $b$ to separate the exponential terms from the norm terms. We will do a similar thing here. Let us first focus on the inner minimization for $k\in C_i$.
\begin{align*}
&\min_{w_k}\left\{2\eta N\alpha e^{x_i^\top w_k -v_i} +\eta\mu\beta_k \|w_k\|_2^2  +  \|w_k-\tilde{w}_k\|_2^2\right\}\\
&=\min_{b}\left\{2\eta N\alpha e^{b -v_i}  + \min_{w_k}\left\{\eta\mu\beta_k \|w_k\|_2^2  +  \|w_k-\tilde{w}_k\|_2^2:~ b = x_i^\top w_k \right\} \right\}\\
&=\min_{b}\left\{2\eta N\alpha e^{b -v_i}  + \max_{\lambda\in\mathbb{R}}\min_{w_k}\left\{\eta\mu\beta_k \|w_k\|_2^2  +  \|w_k-\tilde{w}_k\|_2^2 + 2\lambda(b- x_i^\top w_k) \right\} \right\}
\end{align*}
where we have taken the Lagrangian in the final line. The solution for $w_k$ in terms of $\lambda$ is 
\begin{align*}
w_k = \frac{\tilde{w}_k}{1+\eta\mu\beta_k} + \frac{\lambda}{1+\eta\mu\beta_k} x_i.
\end{align*}
Thus we know that our optimal $w_k$ must satisfy $w_k = \frac{\tilde{w}_k}{1+\eta\mu\beta_k} - a_k \frac{x_i}{\|x_i\|_2^2}$ for some $a_k\in\mathbb{R}$. It can similarly be shown that $w_{y_i} = \frac{\tilde{w}_{y_i}}{1+\eta\mu\beta_{y_i}} + a_{y_i} \frac{x_i}{\|x_i\|_2^2}$ for some $a_{y_i}\in\mathbb{R}$. Substituting this into (\ref{eq:still_w_k}) and dropping constant terms yields
\begin{align}
\min_{v_i}~ &2v_i\left(\eta N-\frac{x_i^\top\tilde{w}_{y_i}}{1+\eta\mu\beta_{y_i}} -\tilde{u}_i\right) + v_i^2\nonumber\\
& +\min_{a_{y_i}}\left\{
  2e^{a_{y_i}}\left(\eta Ne^{\frac{x_i^\top\tilde{w}_{y_i}}{1+\eta\mu\beta_{y_i}} -v_i}\right) +2a_{y_i} \left(-v_i -\eta N + \frac{x_i^\top\tilde{w}_{y_i}}{1+\eta\mu\beta_{y_i}} + \tilde{u}_i\right)  +  a_{y_i}^2(1+\|x_i\|_2^{-2}(1+\eta\mu\beta_{y_i}))  \right\}\nonumber\\
  &+\sum_{k\in C_i} \min_{a_k}\left\{2e^{-a_k}\left(\eta N\alpha e^{ \frac{x_i^\top\tilde{w}_k}{1+\eta\mu\beta_k} -v_i}\right) + a_k^2(\|x_i\|^{-2}(1+\eta\mu\beta_k))\right\}.\label{eq:max_a_ks}
\end{align}
Using the same techniques as in Appendix \ref{app:Implicit_SGD_simpled} we can analytically solve for the $a$ values:
\begin{align*}
a_{y_i}(v_i) &= \frac{\eta N \|x_i\|_2^2-x_i^\top\tilde{w}_{y_i}\|x_i\|_2^2/(1+\eta\mu\beta_{y_i}) + (v_i-\tilde{u}_i)\|x_i\|_2^2}{1+\eta\mu\beta_{y_i} + \|x_i\|_2^{2}} - P(\sigma(v_i))\\
a_k(v_i) &= P\left(\frac{\eta}{1+\eta\mu\beta_k} \|x_i\|_2^{2} N\alpha e^{ \frac{x_i^\top\tilde{w}_k}{1+\eta\mu\beta_k} -v_i}\right).
\end{align*}
where $\sigma(v_i) = \frac{\eta N \|x_i\|_2^2}{1+\eta\mu\beta_{y_i} + \|x_i\|_2^{2}}\exp\left(\frac{x_i^\top\tilde{w}_{y_i} - v_i(1+\eta\mu\beta_{y_i})+(\eta N -\tilde{u}_i)\|x_i\|_2^2}{1+\eta\mu\beta_{y_i} + \|x_i\|_2^{2}}\right)$. 
Substituting these values into (\ref{eq:max_a_ks}) yields
\begin{align}
\min_{v_i}~ &2v_i\left(1+\eta N-\frac{x_i^\top\tilde{w}_{y_i}}{1+\eta\mu\beta_{y_i}} - \tilde{u}_i)\right) + v_i^2\nonumber\\
& -2a_{y_i}(v_i)\left(v_i +\eta N - \frac{x_i^\top\tilde{w}_{y_i}}{1+\eta\mu\beta_{y_i}} - \tilde{u}_i +1 + \|x_i\|_2^{-2}(1+\eta\mu \beta_{y_i}) \right) + a_{y_i}(v_i)^2(1 + \|x_i\|_2^{-2}(1+\eta\mu \beta_{y_i}))\nonumber\\
  &+\sum_{k\in C_i} a_k(v_i)(1+a_k(v_i))\|x_i\|^{-2}(1+\eta\mu \beta_k).\label{eq:final_v_i_problem}
\end{align}
This is a one-dimensional strongly convex minimization problem in $v_i$. The optimal $v_i$ can be solved for using any standard convex optimization method, such as Newton's method. Each iteration in such a method will take $O(m)$ since it is necessary to calculate $a_k(v_i)$, $\partial_{v_i} a_k(v_i)$ and $\partial_{v_i}^2 a_k(v_i)$ for all $k\in C_i\cup\{y_i\}$. The first derivatives are easily calculated,
\begin{align*}
\partial_{v_i} a_{y_i}(v_i) &= \frac{\|x_i\|_2^2}{1+\eta\mu\beta_{y_i} + \|x_i\|_2^{2}}  +\frac{1+\eta\mu\beta_{y_i}}{1+\eta\mu\beta_{y_i} + \|x_i\|_2^{2}} \frac{P(\sigma(v_i))}{1+P(\sigma(v_i))} \\
\partial_{v_i} a_k(v_i) &= -\frac{a_k(v_i)}{1+a_k(v_i)},
\end{align*}
as are the second derivatives,
\begin{align*}
\partial_{v_i}^2 a_{y_i}(v_i) &= -\left(\frac{1+\eta\mu\beta_{y_i}}{1+\eta\mu\beta_{y_i} + \|x_i\|_2^{2}}\right)^2 \frac{P(\sigma(v_i))}{(1+P(\sigma(v_i)))^3} \\
\partial_{v_i}^2 a_k(v_i) &= \frac{a_k(v_i)^2}{(1+a_k(v_i))^3}.
\end{align*}

\subsection{Multiple datapoints, multiple classes}\label{app:Implicit_SGD_multiple}
Consider the case where $n$ datapoints and $m$ classes are sampled each iteration. Using similar methods to Appendix \ref{app:Implicit_SGD_single_multiple}, we will reduce the implicit update to an $n$ dimensional strongly convex optimization problem.

Let us rewrite the double-sum formulation from (\ref{eq:f}) as $f(u,  W) = \mathbb{E}_{I,C}[f_{I,C}(u,  W)]$ where $I$ is a set of $n$ datapoints uniformly sampled from $1,...,N$ (without replacement), $C$ is a set of $m$ uniformly sampled classes from $1,...,K$ (without replacement). The sampled function is of the form
\begin{align*}
f_{I,C}(u,  W) =& \sum_{i\in I}\left( \alpha_n(u_i + e^{-u_i})+\alpha_m \sum_{k\in C}I[k\neq y_i]e^{x_i^\top (w_k - w_{y_i})-u_i}\right) +\frac{\mu}{2}\sum_{k\in C\cup_{i\in I}\{y_i\}}\beta_k \|w_k\|_2^2,
\end{align*}
where
\begin{align*}
\alpha_n &= P(i\in I)^{-1} = \left(1-\prod_{j=0}^{n-1}(1-\frac{1}{N-j})\right)^{-1}\\
\alpha_m/\alpha_n &= P(k\in C)^{-1} =\left(1-\prod_{j=0}^{m-1}(1-\frac{1}{K-j})\right)^{-1}\\
\beta_k &= P(k\in C \cup_{i\in I}\{y_i\})^{-1} = \left(P(k\in C) + P(k\in \cup_{i\in I}\{y_i\}) - P(k\in C)P(k\in \cup_{i\in I}\{y_i\}) \right)^{-1}\\
&~~~~~P(k\in \cup_{i\in I}\{y_i\}) = 1 - \prod_{j=0}^{n-1}\left(1-\frac{|\{i:y_i=k\}|}{N-j}\right).
\end{align*}
It will be useful to group the classes that appear in $\cup_{i\in I}\{y_i\}$ and those that only appear in $C$:
\begin{align*}
f_{I,C}(u,  W) =& \sum_{k\in \cup_{i\in I}\{y_i\}}\sum_{i\in I}\left(I[k=y_i]\alpha_n(u_i + e^{-u_i}) + I[k\neq y_i, k\in C]\alpha_m e^{x_i^\top (w_k - w_{y_i})-u_i} \right) + \frac{\mu}{2}\beta_k \|w_k\|_2^2\\
&+ \sum_{k\in C-\cup_{i\in I}\{y_i\}}\sum_{i\in I}\alpha_m e^{x_i^\top (w_k - w_{y_i})-u_i} + \frac{\mu}{2}\beta_k \|w_k\|_2^2.
\end{align*}
The implicit SGD update is
\begin{align*}
\min_{\substack{\{u_i\}_{i\in I} \\ \{w_k\}_{k\in C\cup_{i\in I}\{y_i\}}}} &2\eta\bigg(\sum_{k\in \cup_{i\in I}\{y_i\}}\sum_{i\in I}\left(I[k=y_i]\alpha_n(u_i + e^{-u_i}) + I[k\neq y_i, k\in C]\alpha_m e^{x_i^\top (w_k - w_{y_i})-u_i} \right) + \frac{\mu}{2}\beta_k \|w_k\|_2^2\\
&+ \sum_{k\in C-\cup_{i\in I}\{y_i\}}\sum_{i\in I}\alpha_m e^{x_i^\top (w_k - w_{y_i})-u_i} + \frac{\mu}{2}\beta_k \|w_k\|_2^2\bigg) \\
   &+ \sum_{i\in I}(u_i - \tilde{u}_i)^2 +  \sum_{k\in C\cup_{i\in I}\{y_i\}}\|w_k-\tilde{w}_k\|_2^2.
\end{align*}
Like in Appendix~\ref{app:Implicit_SGD_single_multiple}, the first step to simplifying this equation is to reparameterize $u_i = v_i -x_i^\top w_{y_i}$ for some $v_i\in\mathbb{R}$ and to bring the $w_k$ minimizations inside:
\begin{align}
\min_{\{v_i\}_{i\in I} } &2\eta \alpha_n v_i + \sum_{k\in \cup_{i\in I}\{y_i\}}\min_{w_k}\bigg\{\sum_{i\in I}\bigg(I[k=y_i]\left(2\eta \alpha_n(-x_i^\top w_k + e^{x_i^\top w_k-v_i}) + (v_i -x_i^\top w_k - \tilde{u}_i)^2\right) \nonumber\\
&~~~~~~~~~~~~~~~~~~~~~~~~~~~~~~~~~~~~~~~~~~~~+ I[k\neq y_i, k\in C]2\eta \alpha_m e^{x_i^\top w_k -v_i} \bigg) + \eta\mu\beta_k \|w_k\|_2^2 + \|w_k-\tilde{w}_k\|_2^2\bigg\} \nonumber \\
&+ \sum_{k\in C-\cup_{i\in I}\{y_i\}} \min_{w_k}\bigg\{\sum_{i\in I}2\eta \alpha_m e^{x_i^\top w_k -v_i} + \eta\mu \beta_k \|w_k\|_2^2 + \|w_k-\tilde{w}_k\|_2^2 \bigg\}.\label{eq:mult_mult_v_original}
\end{align}
As done in Appendix~\ref{app:Implicit_SGD_single_multiple}, the inner minimizations can be solved analytically by introducing constrained auxiliary variables $b_{ki} = x_i^\top w_k$ and optimizing the dual. We'll do this separately for $k\in \cup_{i\in I}\{y_i\}$ and $k\in C-\cup_{i\in I}\{y_i\}$.\\

\noindent\textbf{For datapoint labels $k\in \cup_{i\in I}\{y_i\}$},
\begin{align*}
&\min_{w_k}\sum_{i\in I}\bigg(I[k=y_i]\left(2\eta \alpha_n(-x_i^\top w_k + e^{x_i^\top w_k-v_i}) + (v_i -x_i^\top w_k - \tilde{u}_i)^2\right) \\
&~~~~~~~~~~+ I[k\neq y_i, k\in C]2\eta \alpha_m e^{x_i^\top w_k -v_i} \bigg) + \eta\mu\beta_k \|w_k\|_2^2 + \|w_k-\tilde{w}_k\|_2^2\\
&=\min_{b_{ki}}\sum_{i\in I}\bigg(I[k=y_i]\left(2\eta \alpha_n(-b_{ki} + e^{b_{ki}-v_i}) + (v_i -b_{ki} - \tilde{u}_i)^2\right) + I[k\neq y_i, k\in C]2\eta \alpha_m e^{b_{ki} -v_i} \bigg) \\
&~~~~~~~~~~~~+ \min_{w_k}\left\{\eta\mu\beta_k \|w_k\|_2^2 + \|w_k-\tilde{w}_k\|_2^2:~b_{ki} = x_i^\top w_k\right\}.
\end{align*}
Focusing on the minimization over $w_k$:
\begin{align*}
&\min_{w_k}\left\{\eta\mu\beta_k \|w_k\|_2^2 + \|w_k-\tilde{w}_k\|_2^2:~b_{ki} = x_i^\top w_k\right\}\\
&= \max_{\lambda_{ki}} \min_{w_k} \eta\mu\beta_k \|w_k\|_2^2 + \|w_k-\tilde{w}_k\|_2^2 +2\sum_{i\in I}\lambda_{ki}(b_{ki} - x_i^\top w_k).
\end{align*}
The solution for $w_k$ in terms of $\lambda_{ki}$ is
\begin{align*}
w_k = \frac{\tilde{w}_k + \sum_{i\in I}\lambda_{ki}x_i}{1+\eta\mu\beta_k}
\end{align*}
Dropping constant terms, the dual becomes
\begin{align*}
&\max_{\lambda_{ki}} -\frac{\|\sum_{i\in I}\lambda_{ki}x_i \|_2^2}{1+\eta\mu\beta_k } + 2\sum_{i\in I}\lambda_{ki}(b_{ki} - \frac{x_i^\top \tilde{w}_k}{1+\eta\mu\beta_k})\\
&= \max_{\lambda_{ki}} - \lambda_k^\top Q_k \lambda_k +2\lambda_k^\top \left(b_k - \frac{X_I^\top \tilde{w}_k}{1+\eta\mu\beta_k}\right)\\
&= \left(b_k - \frac{X_I^\top \tilde{w}_k}{1+\eta\mu\beta_k}\right)^\top Q_k^{-1} \left(b_k - \frac{X_I^\top \tilde{w}_k}{1+\eta\mu\beta_k}\right)\\
&= \left\|b_k - \frac{X_I^\top \tilde{w}_k}{1+\eta\mu\beta_k}\right\|_{Q_k^{-1}}^2,
\end{align*}
where $Q_{k,ij} = \frac{x_i^\top x_j}{1+\eta\mu\beta_k}$ and $X_I = (x_i)_{i\in I}\in\mathbb{R}^{D\times n}$ and the optimal $\lambda = Q_k^{-1} \left(b_k - \frac{X_I^\top \tilde{w}_k}{1+\eta\mu\beta_k}\right)$. Now we can solve for $b_k$,
\begin{align*}
&\min_{b_{ki}}\sum_{i\in I} I[k=y_i]\left(2\eta \alpha_n(-b_{ki} + e^{b_{ki}-v_i}) + (v_i -b_{ki} - \tilde{u}_i)^2\right) + I[k\neq y_i, k\in C]2\eta \alpha_m e^{b_{ki} -v_i} + \left\|b_k - \frac{X_I^\top \tilde{w}_k}{1+\eta\mu\beta_k}\right\|_{Q_k^{-1}}^2.
\end{align*}
Setting to zero the derivative with respect to $b_k\in\mathbb{R}^n$ and dividing by 2:
\begin{align}
0&= I[k=y_I]\circ \left(\eta \alpha_n(-\mathbf{1} + e^{b_k-v_I}) + b_k + \tilde{u}_I - v_I \right) + I[k\neq y_I, k\in C] \circ \eta \alpha_m e^{b_k -v_I} + Q_k^{-1} \left(b_k - \frac{X_I^\top \tilde{w}_k}{1+\eta\mu\beta_k}\right)\nonumber\\
&= diag(a) e^{b_k} + A_k b_k - h_k \label{eq:diag_original}
\end{align}
where $\circ$ denotes the element-wise product, $diag(a)$ is a diagonal matrix, $\mathbf{1}$ denotes the vectors of all ones, $v_I = (v_i)_{i\in I}\in\mathbb{R}^{n}$, likewise for $\tilde{u}_I$ and $y_I$, and 
\begin{align*}
a_k &= I[k=y_I]\circ \eta \alpha_n e^{-v_I}  + I[k\neq y_I, k\in C] \circ \eta \alpha_m e^{ -v_I}\\
A_k &= diag(I[k=y_I]) + Q_k^{-1}\\
h_k &= I[k=y_I]\circ \left(\eta \alpha_n\mathbf{1} - \tilde{u}_I + v_I \right) + Q_k^{-1}  \frac{X_I^\top \tilde{w}_k}{1+\eta\mu\beta_k}.
\end{align*}
Multiplying (\ref{eq:diag_original}) on the left by $A_k^{-1}$, letting $z_k = A_k^{-1}h_k-b_k$ and multiplying on the right by $diag(e^{z_k})$ yields
\begin{align*}
z_k \circ e^{z_k} &= A_k^{-1} ( a \circ e^{A_k^{-1}h_k} ).
\end{align*}
The solution for $z_k$ decomposes into separate Lambert-W functions:
\begin{align*}
z_{k} &= P(A_k^{-1} ( a \circ e^{A_k^{-1}h_k} ))
\end{align*}
where $P$ is the principle branch of the Lambert-W function applied component-wise. The solution for $b_k$ is thus 
\begin{equation}\label{eq:b_k}
b_k(v_I) = A_k^{-1}h_k-P(A_k^{-1} ( a \circ e^{A_k^{-1}h_k} ))
\end{equation}
where $b_k$ is a function of the variable $v_I$, which is the only unknown variable that we are yet to minimize over.\\

\noindent\textbf{For pure class labels $k\in C-\cup_{i\in I}\{y_i\}$} the procedure is nearly identical for the the datapoint labels. The optimal value of $w_k$ is 
\begin{align*}
w_k = \frac{\tilde{w}_k + \sum_{i\in I}\lambda_{ki}x_i}{1+\eta\mu\beta_k}
\end{align*}
where $\lambda = Q_k^{-1} \left(b_k - \frac{X_I^\top \tilde{w}_k}{1+\eta\mu\beta_k}\right)$ and
\begin{align*}
b_k(v_I) &= \frac{X_I^\top \tilde{w}_k}{1+\eta\mu\beta_k}-P\left(\eta \alpha_n Q_k e^{\frac{X_I^\top \tilde{w}_k}{1+\eta\mu\beta_k}-v_I}\right).
\end{align*}

\noindent\textbf{Final optimization problem}\\
Substituting the optimal values of $b_k$ in (\ref{eq:mult_mult_v_original}) yields the final optimization problem
\begin{align*}
&\min_{\{v_i\}_{i\in I} } 2\eta \alpha_n v_i \\
&+ \sum_{k\in \cup_{i\in I}\{y_i\}}\sum_{i\in I}I[k=y_i]\left(2\eta \alpha_n(-b_{ki}(v_I) + e^{b_{ki}(v_I)-v_i}) + (v_i -b_{ki}(v_I) - \tilde{u}_i)^2\right) + I[k\neq y_i, k\in C]2\eta \alpha_m e^{b_{ki}(v_I) -v_i}  \\
&~~~~~~~~~~~~+  \left\|b_k(v_I) - \frac{X_I^\top \tilde{w}_k}{1+\eta\mu\beta_k}\right\|_{Q_k^{-1} }^2  \\
&+ \sum_{k\in C-\cup_{i\in I}\{y_i\}} \sum_{i\in I}2\eta \alpha_m e^{\frac{X_I^\top \tilde{w}_k}{1+\eta\mu\beta_k}-P\left(\eta \alpha_n Q_k e^{\frac{X_I^\top \tilde{w}_k}{1+\eta\mu\beta_k}-v_I}\right) -v_i} + \left\|P\left(\eta \alpha_n Q_k e^{\frac{X_I^\top \tilde{w}_k}{1+\eta\mu\beta_k}-v_I}\right)\right\|_{Q_k^{-1} }^2.
\end{align*}
where $b_k(v_I)$ is from (\ref{eq:b_k}).
This is a strongly convex optimization problem in $v_I\in\mathbb{R}^n$. Using standard first order gradient methods, it can be solved to $\epsilon>0$ accuracy in $O(\log(\epsilon^{-1}))$ iterations. The cost per iteration is $O(n^2(n+m))$ for the matrix multiplications and $O((n+m)n^3)$ for the matrix inversions. Note that the matrix inversions do not depend on $v_I$ and so they only have to be performed once. Furthermore, if the same minibatches are used each epoch, then the inverted matrices can be calculated just once and stored. The amortized matrix inversion cost is therefore expected to be dominated by the $O(n^2(n+m)\log(\epsilon^{-1}))$ cost for solving for $v_I$ and the $O(nmD)$ cost of taking the $x_i^\top \tilde{w}_k$ inner products each iteration.

Note that we have assumed that $Q_k$ is invertible. As long as the vectors $\{x_i\}_{i\in I}$ are independent, this will be the case. If not, then a similar method as above can be developed where a basis of $\{x_i\}_{i\in I}$ is used.

\section{U-max pseudocode}\label{app:umax_alg}
~\vspace{-0.75cm}
\begin{algorithm}[h]
   \caption{U-max for a single datapoint and multiple classes sampled per iteration.}
   \label{alg:umax}
\begin{algorithmic}
      \STATE {\bfseries Input:} Data $\mathcal{D} = \{(y_i, x_i)\}_{i=1}^N$, number of classes to sample each iteration $m$, number of iterations~$T$, learning rate $\eta_t$, threshold $\delta>0$, constants $\alpha$ and $\beta$, initial $u,W$.
   \STATE {\bfseries Ouput:} $W$
   \vspace{0.5cm}

   \FOR{$t=1$ {\bfseries to} $T$}
   \STATE  \texttt{Sample datapoint and classes}
   \STATE $i \sim unif(\{1,...,N\})$
   \STATE $k_j \sim unif(\{1,...,K\}-\{y_i\})$ for $j=1,...,m$ (with replacement)
   
   \vspace{0.5cm}
   \STATE   \texttt{Increase $u_i$}
   \IF{$u_i < \log(1+\sum_{j=1}^m e^{x_i^\top (w_{k_j} - w_{y_i})}) - \delta$}
   \STATE $u_i \gets \log(1+\sum_{j=1}^m e^{x_i^\top (w_{k_j} - w_{y_i})})$
   \ENDIF
   
   \vspace{0.5cm}
   \STATE  \texttt{SGD step}
    \STATE $w_{k_j} \gets w_{k_j} - \eta_t N(K-1)/m\cdot   e^{x_i^\top (w_{k_j} - w_{y_i})-u_i}x_i - \eta_t\mu \beta_{k_j} w_{k_j}$ for $j=1,...,m$
    \STATE $w_{y_i} \gets w_{y_i} + \eta_t N(K-1)/m\cdot  \sum_{j=1}^me^{x_i^\top (w_{k_j} - w_{y_i})-u_i}x_i - \eta_t\mu \beta_{y_i} w_{y_i}$
    \STATE $u_i \gets u_i - \eta_t N(1-e^{-u_i} - (K-1)/m\cdot  \sum_{j=1}^me^{x_i^\top (w_{k_j} - w_{y_i})-u_i})$
   \ENDFOR
\end{algorithmic}
\end{algorithm}

\section{Proof of convergence of U-max method}\label{app:proof_Umax}
In this section we will prove the claim made in Proposition~\ref{thm:Umax}, that U-max converges to the softmax optimum. Before proving the proposition, we will need a lemma.

\begin{lemma}\label{lemma:f_decrease} For any $\delta >0$, if $u_i\leq \log(1+e^{x_i^\top (w_k - w_{y_i})}) - \delta$ then setting $u_i= \log(1+e^{x_i^\top (w_k - w_{y_i})})$ decreases $f(u,W)$ by at least $\delta^2/2$.
\end{lemma}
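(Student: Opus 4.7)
The plan is to reduce the claim to a one-dimensional convex-analysis bound on the scalar function $u_i \mapsto u_i + e^{-u_i} S_i$, where $S_i := 1 + \sum_{k'\neq y_i} e^{x_i^\top (w_{k'}-w_{y_i})}$. Observe that among all terms in the expression for $f(u,W)$ in~(\ref{eq:f}), only this scalar combination depends on $u_i$; call it $g_i(u_i)$. Thus, when $W$ and all $u_j$ for $j\neq i$ are held fixed, the change in $f$ upon resetting $u_i$ to $\tilde{u}_i := \log(1+e^{a})$ with $a := x_i^\top (w_k-w_{y_i})$ is exactly $g_i(u_i)-g_i(\tilde{u}_i)$. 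The task then becomes lower-bounding this scalar decrement by $\delta^2/2$.

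Next I would split off the contribution of class $k$ from the rest. Write $S_i = (1+e^a) + T_i$ with $T_i := \sum_{k'\neq y_i,\,k'\neq k} e^{x_i^\top(w_{k'}-w_{y_i})} \geq 0$, and define $\phi(u) := u + (1+e^a) e^{-u}$. Then
\begin{align*}
g_i(u_i) - g_i(\tilde{u}_i) \;=\; \bigl[\phi(u_i)-\phi(\tilde{u}_i)\bigr] \;+\; T_i\bigl[e^{-u_i}-e^{-\tilde{u}_i}\bigr].
\end{align*}
The hypothesis $u_i \leq \tilde{u}_i - \delta$ in particular gives $u_i \leq \tilde{u}_i$, so $e^{-u_i} \geq e^{-\tilde{u}_i}$ and the second bracket is nonnegative. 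It therefore suffices to lower-bound $\phi(u_i)-\phi(\tilde{u}_i)$.

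For this I substitute $u_i = \tilde{u}_i - s$ with $s \geq \delta$. Since $e^{-\tilde{u}_i} = 1/(1+e^a)$, a direct computation collapses the expression to
\begin{align*}
\phi(u_i)-\phi(\tilde{u}_i) \;=\; (\tilde{u}_i-s) + (1+e^a)\,e^{-\tilde{u}_i}e^{s} - \tilde{u}_i - (1+e^a)\,e^{-\tilde{u}_i} \;=\; e^s - s - 1.
\end{align*}
The elementary inequality $e^s - s - 1 \geq s^2/2$ for $s\geq 0$ (from the Taylor series of $e^s$) yields $\phi(u_i)-\phi(\tilde{u}_i) \geq s^2/2 \geq \delta^2/2$. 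Combining with the nonnegative $T_i$ term gives $g_i(u_i)-g_i(\tilde{u}_i)\geq \delta^2/2$, which is the claimed decrement in $f$.

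There is no real obstacle here; the only subtlety is recognizing that the $u_i$-dependent part of $f$ couples all classes through $S_i$, so one must verify that including the extra mass $T_i$ only helps (i.e., the second bracket above is nonnegative). Once this is isolated, the bound is driven entirely by the clean identity $\phi(u_i)-\phi(\tilde{u}_i) = e^s - s - 1$ and the Taylor inequality.
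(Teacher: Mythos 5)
Your proof is correct, and it takes a genuinely different route from the paper. The paper's argument proceeds via a second-order Taylor expansion of $f$ in $\theta$: it bounds the first-order term by observing that $\log(1+e^a) \leq u_i^\ast(W)$ so the partial derivative in $u_i$ at the new point is nonpositive, and it bounds the second-order term below by $1$ using the Hessian computation from Lemma~\ref{lemma:strong_cvx} (essentially, the coefficient of $e^{-u_i}$ in the $u_i$-part of $f$, once $u_i$ is at or below $\log(1+e^a)$, forces $\partial^2_{u_i} f \geq 1$). Your argument instead isolates the one-dimensional function $g_i(u_i) = u_i + S_i e^{-u_i}$, splits off the mass $T_i \geq 0$ belonging to the classes other than the sampled $k$, and notices that the remaining part $\phi(u) = u + (1+e^a)e^{-u}$ satisfies the exact identity $\phi(\tilde{u}_i - s) - \phi(\tilde{u}_i) = e^s - s - 1$ because $\tilde{u}_i = \log(1+e^a)$ is precisely the minimizer of $\phi$. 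The elementary Taylor inequality $e^s - s - 1 \geq s^2/2$ then closes the bound. Your route buys a cleaner, self-contained derivation that does not rely on the Hessian lower bound or on invoking Taylor's theorem for a multivariate function, and it makes the role of $\tilde{u}_i$ as the exact minimizer of the "sampled-class part" $\phi$ transparent. It also sidesteps a sign-bookkeeping subtlety in the paper's expansion. The paper's route, on the other hand, recycles the strong-convexity machinery already developed for other purposes, so it is shorter given that context. Both are valid; yours is arguably the more illuminating standalone proof.
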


\begin{proof}
As in Lemma~\ref{lemma:strong_cvx}, let $\theta = (u^\top,w_1^\top,...,w_k^\top)\in\mathbb{R}^{N+KD}$. Then setting $u_i= \log(1+e^{x_i^\top (w_k - w_{y_i})})$ is equivalent to setting $\theta = \theta+\Delta e_i$ where $e_i$ is the $i^{th}$ canonical basis vector and $\Delta = \log(1+e^{x_i^\top (w_k - w_{y_i})}) - u_i \geq \delta$. By a second order Taylor series expansion
\begin{align}\label{eq:Taylor}
f(\theta) - f(\theta+\Delta e_i) &\geq  \nabla f(\theta+\Delta e_i)^\top e_i\Delta +\frac{\Delta^2}{2}e_i^\top\nabla^2 f(\theta+\lambda\Delta e_i) e_i
\end{align}
for some $\lambda\in[0,1]$. Since the optimal value of $u_i$ for a given value of $W$ is $u_i^\ast(W) = \log(1+\sum_{k\neq y_i}e^{x_i^\top (w_k - w_{y_i})}) \geq \log(1+e^{x_i^\top (w_k - w_{y_i})})$, we must have $\nabla f(\theta+\Delta e_i)^\top e_i \leq 0$. From Lemma~\ref{lemma:strong_cvx} we also know that 
\begin{align*}
e_i^\top\nabla^2 f(\theta+\lambda\Delta e_i) e_i &= \exp(-(u_i+\lambda\Delta))+\sum_{k\neq y_i}e^{x_i^\top (w_k - w_{y_i})-(u_i+\lambda\Delta)}\\
&= \exp(-\lambda\Delta)e^{-u_i}(1+\sum_{k\neq y_i}e^{x_i^\top (w_k - w_{y_i})})\\
&= \exp(-\lambda\Delta)\exp(-(\log(1+e^{x_i^\top (w_k - w_{y_i})}) - \Delta))(1+\sum_{k\neq y_i}e^{x_i^\top (w_k - w_{y_i})})\\
&\geq \exp(\Delta-\lambda\Delta)\\
&\geq \exp(\Delta-\Delta)\\
&= 1.
\end{align*}
Putting in bounds for the gradient and Hessian terms in~(\ref{eq:Taylor}),
\begin{align*}
f(\theta) - f(\theta+\Delta e_i) \geq  \frac{\Delta^2}{2} \geq  \frac{\delta^2}{2}.
\end{align*}
\end{proof}

Now we are in a position to prove Proposition~\ref{thm:Umax}.

\begin{proof}[Proof of Proposition~\ref{thm:Umax}]
Let $\theta^{(t)} = (u^{(t)}, W^{(t)})\in\Theta$ denote the value of the $t^{th}$ iterate. Here $\Theta=\{\theta: \, \|W\|_2^2\leq B_W^2, u_i \leq B_u\}$ is a convex set containing the optimal value of $f(\theta)$.

Let $\pi_i^{(\delta)}(\theta)$ denote the operation of setting $u_i= \log(1+e^{x_i^\top (w_k - w_{y_i})})$ if $u_i\leq \log(1+e^{x_i^\top (w_k - w_{y_i})}) - \delta$. If indices $i,k$ are sampled for the stochastic gradient and $u_i\leq \log(1+e^{x_i^\top (w_k - w_{y_i})}) - \delta$, then the value of $f$ at the $t+1^{st}$ iterate is bounded as
\begin{align*}
f(\theta^{(t+1)}) &= f(\pi_i(\theta^{(t)}) - \eta_t \nabla f_{ik}(\pi_i(\theta^{(t)})))\\
& \leq f(\pi_i(\theta^{(t)})) +\max_{\theta\in\Theta} \|\eta_t \nabla f_{ik}(\pi_i(\theta))\|_2\max_{\theta\in\Theta} \|\nabla f (\theta)\|_2\\
& \leq f(\pi_i(\theta^{(t)})) + \eta_t B_f^2\\
& \leq f(\theta^{(t)}) -\delta^2/2 + \eta_t B_f^2\\
& \leq f(\theta^{(t)}- \eta_t \nabla f_{ik}(\theta^{(t)})) -\delta^2/2 + 2\eta_t B_f^2\\
& \leq f(\theta^{(t)}- \eta_t \nabla f_{ik}(\theta^{(t)})),
\end{align*}
since $\eta_t\leq \delta^2/(4 B_f^2)$ by assumption.
Alternatively if $u_i\geq \log(1+e^{x_i^\top (w_k - w_{y_i})}) - \delta$ then
\begin{align*}
f(\theta^{(t+1)}) &= f(\pi_i(\theta^{(t)}) - \eta_t \nabla f_{ik}(\pi_i(\theta^{(t)})))\\
& = f(\theta^{(t)}- \eta_t \nabla f_{ik}(\theta^{(t)})).
\end{align*}
Either way $f(\theta^{(t+1)}) \leq f(\theta^{(t)}- \eta_t \nabla f_{ik}(\theta^{(t)}))$. Taking expectations with respect to $i,k$, 
\begin{align*}
\mathbb{E}_{ik}[f(\theta^{(t+1)})] & \leq \mathbb{E}_{ik}[f(\theta^{(t)}- \eta_t \nabla f_{ik}(\theta^{(t)}))].
\end{align*}
Finally let $P$ denote the projection of $\theta$ onto $\Theta$. Since $\Theta$ is a convex set containing the optimum we have $f(P(\theta)) \leq f(\theta)$ for any $\theta$, and so
\begin{align*}
\mathbb{E}_{ik}[f(P(\theta^{(t+1)}))] & \leq \mathbb{E}_{ik}[f(\theta^{(t)}- \eta_t \nabla f_{ik}(\theta^{(t)}))],
\end{align*}
which shows that the rate of convergence in expectation of U-max is at least as fast as that of standard SGD.

The proof trivially generalizes to sampling multiple datapoints and classes per iteration by replacing $ \log(1+e^{x_i^\top (w_k - w_{y_i})})$ with $\log(1+\sum_{j=1}^me^{x_i^\top (w_{k_j} - w_{y_i})})$.

\end{proof}

\newpage
\section{Results over runtime}\label{app:runtime_results}

\begin{table}[h]
\caption{Time in seconds taken to run 50 epochs. OVE/NCE/IS/Vanilla/U-max with $n=1,m=5$ all have the same runtime. Implicit SGD with $n=1,m=1$ is faster per iteration. The final column displays the ration of OVE/.../U-max to Implicit SGD for each dataset.}
\label{tbl:runtimes}
\vskip 0.15in
\begin{center}
\begin{small}
\begin{sc}
\begin{tabular}{lccc}
\toprule
Data set & Implicit SGD & OVE/NCE/IS/Vanilla/U-max & Ratio \\
\midrule
MNIST & 1283 & 2494 & 1.94 \\
Bibtex & 144 & 197 & 1.37 \\
Delicious & 287 & 325 & 1.13 \\
Eurlex & 427 & 903 & 2.12 \\
AmazonCat & 24392 & 42816 & 1.76 \\
Wiki10 & 783 & 1223 & 1.56 \\
WikiSmall & 6407 & 8470 & 1.32 \\
\midrule
Average & - & - & 1.60 \\
\bottomrule
\end{tabular}
\end{sc}
\end{small}
\end{center}
\vskip -0.1in
\end{table}

\begin{figure*}[h]
\centering
\begin{minipage}{.24\textwidth}
  \centering
  \includegraphics[width=.59\linewidth]{plots/Legend}
\end{minipage}%
\hfill
\begin{minipage}{.24\textwidth}
  \centering
  \includegraphics[width=.99\linewidth]{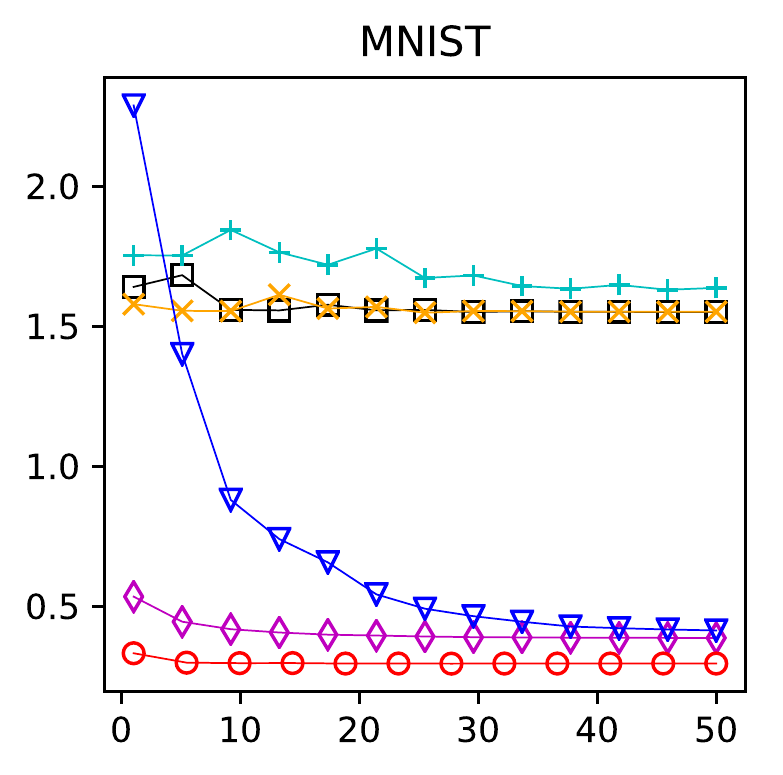}
\end{minipage}
\begin{minipage}{.24\textwidth}
  \centering
  \includegraphics[width=.99\linewidth]{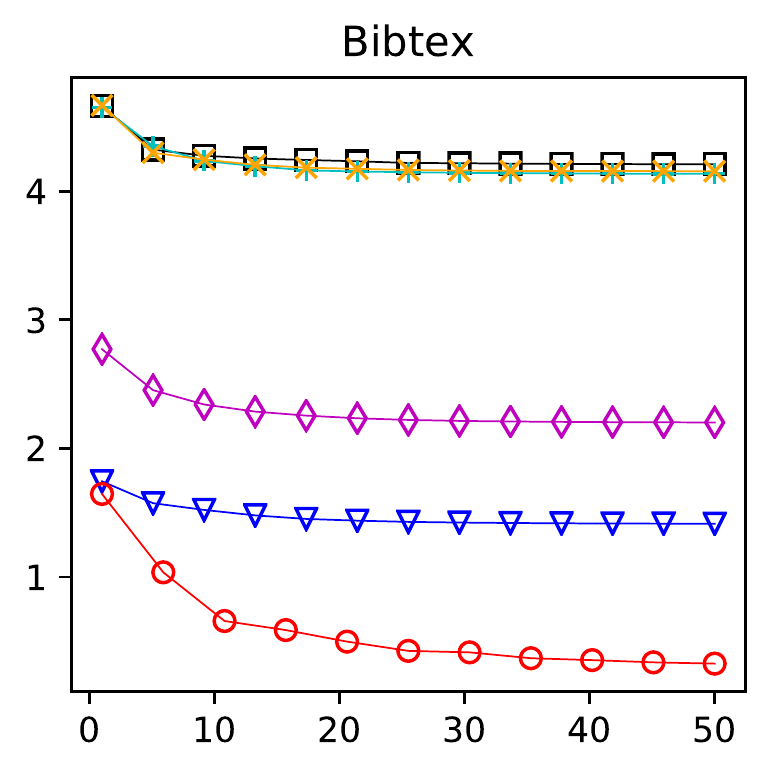}
\end{minipage}%
\hfill
\begin{minipage}{.24\textwidth}
  \centering
  \includegraphics[width=.99\linewidth]{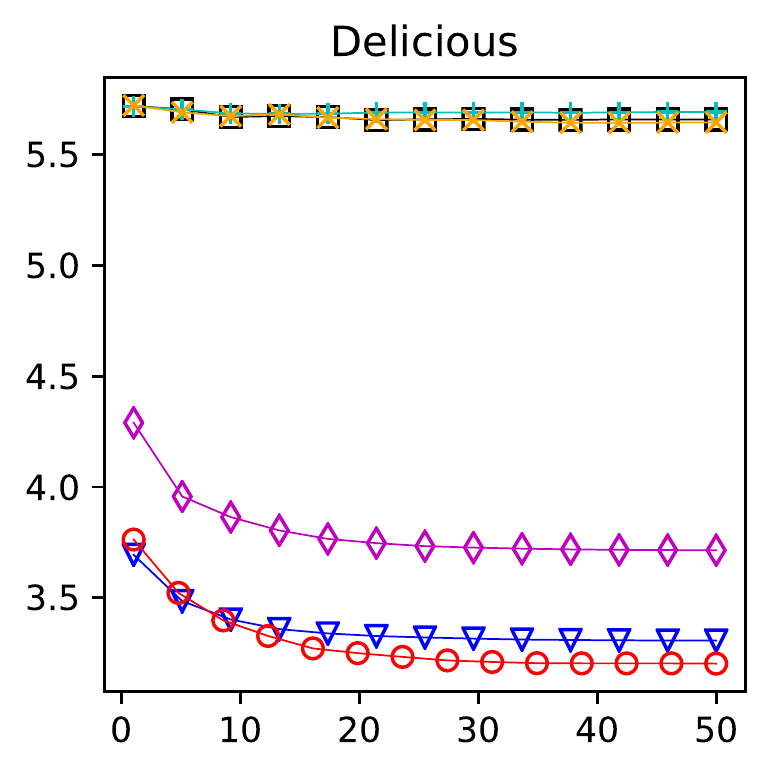}
\end{minipage}
\begin{minipage}{.24\textwidth}
  \centering
  \includegraphics[width=.99\linewidth]{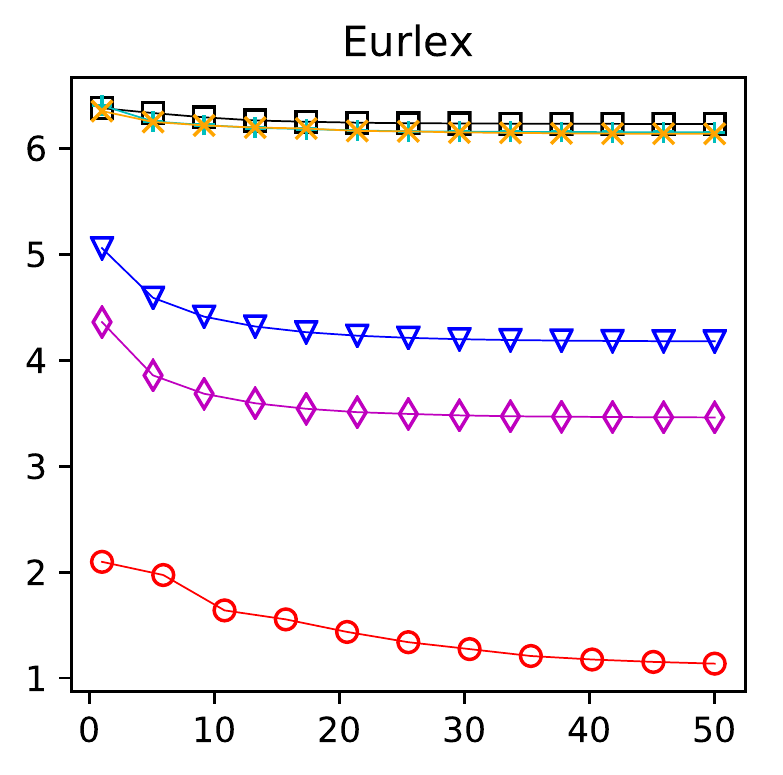}
\end{minipage}%
\hfill
\begin{minipage}{.24\textwidth}
  \centering
  \includegraphics[width=.99\linewidth]{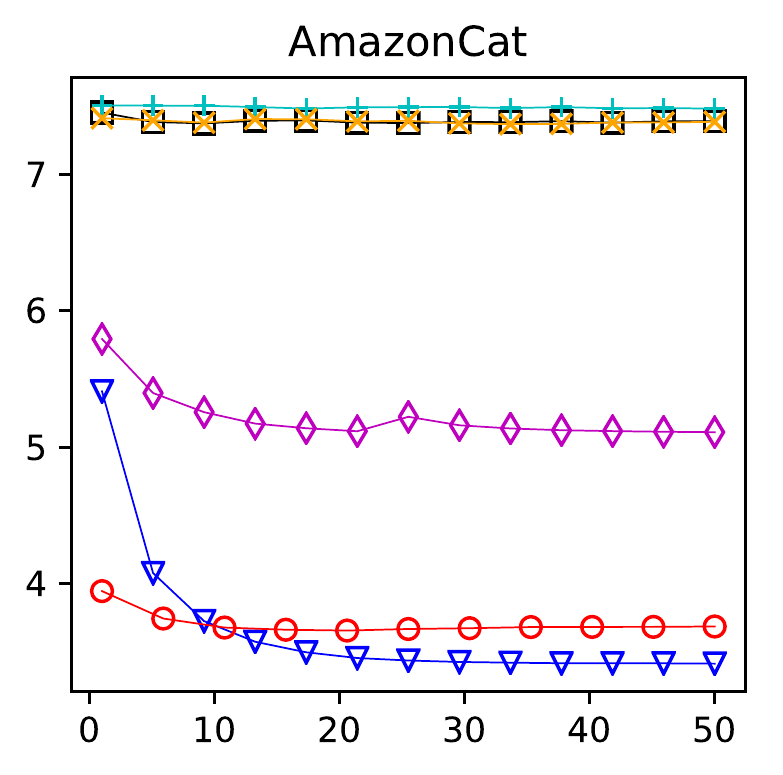}
\end{minipage}
\begin{minipage}{.24\textwidth}
  \centering
  \includegraphics[width=.99\linewidth]{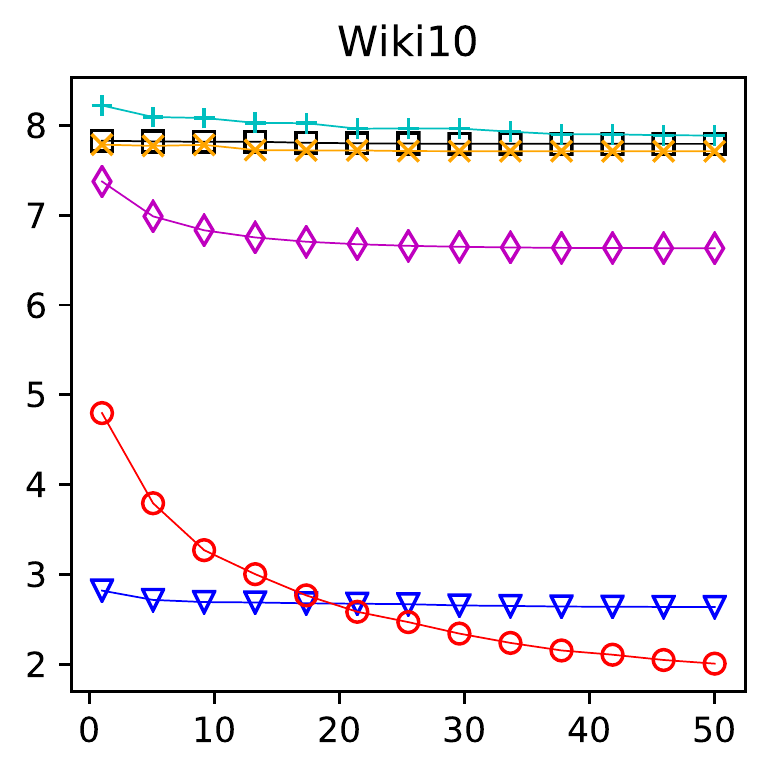}
\end{minipage}%
\hfill
\begin{minipage}{.24\textwidth}
  \centering
  \includegraphics[width=.99\linewidth]{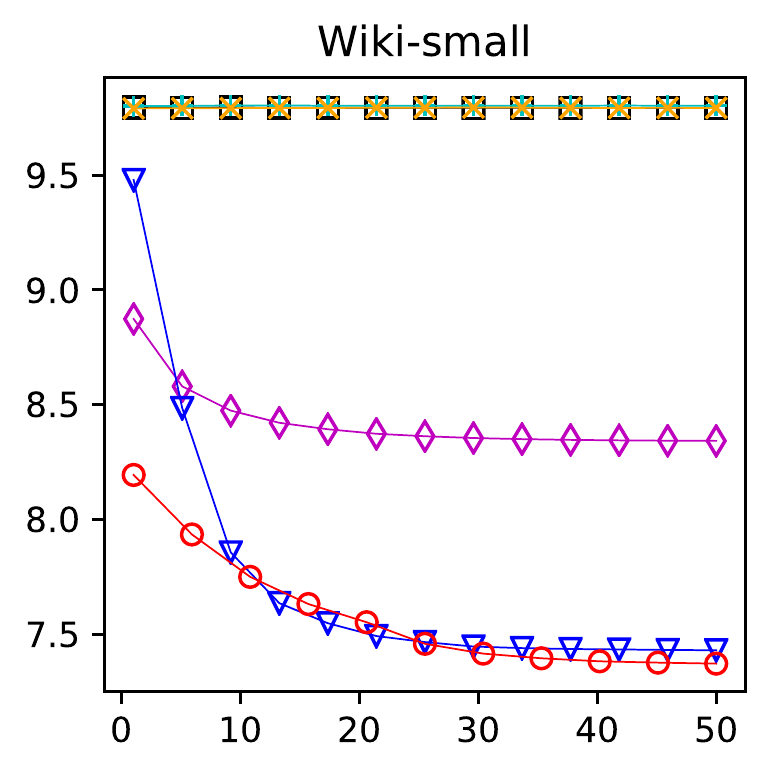}
\end{minipage}
\caption{The x-axis is runtime measured by the number of epochs for OVE, NCE, IS, vanilla SGD and U-max (they all have the same runtime). Since Implicit SGD is faster than these methods, more epochs are plotted for it. The number of Implicit SGD epochs is equal to 50 times by the ratio displayed in Table~\ref{tbl:runtimes} for each dataset. The y-axis is the log-loss from~(\ref{eq:original_log_likelihood}).}
\label{fig:comparison_prediction_log_loss_runtime}
\end{figure*}

\end{document}